%% file: main.tex
\documentclass{article}

\PassOptionsToPackage{numbers, compress}{natbib}

\usepackage[preprint]{neurips_2026}
\usepackage[utf8]{inputenc} 
\usepackage[T1]{fontenc}    
\usepackage{amsthm}
\usepackage{hyperref}       
\usepackage{url}            
\usepackage{booktabs}       
\usepackage{amsfonts}       
\usepackage{nicefrac}       
\usepackage[table]{xcolor}
\usepackage{graphicx}
\usepackage{mathtools}
\usepackage{amssymb}
\usepackage{tikz}
\usepackage{bm}
\usepackage{fontawesome5}
\usepackage{pifont}
\usepackage{algorithm}
\usepackage{algpseudocode}
\usepackage{algorithmicx}
\usepackage{tcolorbox}
\usepackage{mdframed}
\usepackage{enumitem}
\usepackage{caption}
\usepackage{subcaption}
\usepackage{listings}
\usepackage{xcolor}
\usepackage{wrapfig}
\usetikzlibrary{arrows.meta,positioning,calc, fit, backgrounds, bending}
\usepackage{etoc}

\newcommand{\E}{\mathbb{E}}

\newcommand{\indep}{\perp\!\!\!\perp}
\newcommand{\nindep}{\not\!\indep}
\definecolor{forwardbg}{RGB}{225,240,255}
\definecolor{backwardbg}{RGB}{255,235,225}

\newcommand{\cmark}{\ding{51}} 
\newcommand{\xmark}{\ding{55}}

\lstdefinestyle{python}{
  language=Python,
  basicstyle=\ttfamily\small,
  keywordstyle=\color{blue},
  commentstyle=\color{gray}\itshape,
  stringstyle=\color{red!70!black},
  showstringspaces=false,
  breaklines=true,
  frame=none,
  numbers=none
}
\newtheorem{assumption}{Assumption}

\newtheorem{theorem}{Theorem}[section]
\newtheorem*{theorem*}{Theorem}
\newtheorem{lemma}{Lemma}[section]

\newmdenv[
  leftline=true,
  topline=false,
  bottomline=false,
  rightline=false,
  linewidth=2pt,
  linecolor=darkgray,
  skipabove=\baselineskip,
  skipbelow=\baselineskip,
  nobreak=true 
]{theorembox}
\newtcolorbox{lightskybluequote}{
  colback=blue!6,
  colframe=blue!25,
  boxrule=0.4pt,
  arc=2pt,
  left=6pt,right=6pt,top=6pt,bottom=6pt
}
\newtcolorbox{lightsandquote}{
  colback=orange!8!white,
  colframe=orange!20!black,
  boxrule=0.4pt,
  arc=2pt,
  left=6pt,right=6pt,top=6pt,bottom=6pt
}

\title{\textit{From Tokens to Policy:} Causal and Interpretable Heterogeneous Treatment Effects Identification}

\author{
    Riccardo Cadei$^1$,
    Frank Otchere$^2$,
    Nyasha Tirivayi$^2$, \\
    \textbf{Gustavo Angeles Tagliaferro}$^2$, 
    \textbf{Falco J. Bargagli-Stoffi} $^3$,    
    \textbf{Francesco Locatello}$^1$ 
    \\
    $^1$ISTA \hspace{1cm} $^2$UNICEF \hspace{1cm} $^3$UCLA
}

\begin{document}
\maketitle
\vspace{-0.5cm}
\begin{abstract}
\vspace{-0.1cm}
\looseness=-1Heterogeneous Treatment Effect (HTE) identification is crucial to explain the impact of an intervention and optimize our policies accordingly. 
Existing approaches trade expressivity for interpretability, but, if some active heterogeneity drivers are unmeasured, methods at both ends of this spectrum allow for spurious HTE characterization with no causal reading.
In this work, we focus on controlled experiments and argue that an oracle HTE causal characterization via the latent interactors is now within reach, thanks to (i) more extensive pre-treatment measurements, i.e., multi-modal and multi-view, and (ii) scalable representations with minimal human supervision.
We then re-frame HTE identification as a Markov-blanket discovery problem on a sufficient and aligned pre-treatment representation, and introduce \emph{Neural EXposure Interaction Search} (NEXIS), an iterative procedure with provable and empirically validated consistent selection.
We deploy NEXIS on two anti-poverty programs in Africa, augmenting each with satellite imagery capturing previously unmeasured environmental effect modifiers, leading to novel, interpretable and prescriptive guidelines to optimize the programs' next iterations.

\vspace{0.1cm}
\centering \faGlobe \textit{ Website}: \href{https://www.riccardocadei.com/NEXIS/}{\texttt{https://www.riccardocadei.com/NEXIS/}}
\end{abstract}

\vspace{-0.48cm}
\section{Introduction}
\vspace{-0.2cm}
\looseness=-1
Real-world interventions rarely work the same way for everyone. 
Understanding \textit{why} and \textit{how} a mechanism varies is essential to optimize our policies accordingly~\citep{rothman1980concepts, gail1985testing}. 
Traditional Heterogeneous Treatment Effect (HTE) estimation approaches trade expressivity for interpretability over the available pre-treatment measurements. 
Yet, even an accurate and interpretable estimation does not tell policymakers how to intervene accordingly, due to potential spurious dependencies. 
Indeed, HTE is \textit{causally} explained by the standalone treatment interactions \citep{vanderweele2009distinction}, potentially unmeasured, but it is also reflected in their proxies, common-cause companions, and upstream indirect modifiers \citep{vanderweele2007four}, which offer a misleading or no causal handle on the response. 
In the absence of an interaction's  measurement and identification, several approximate characterizations may arise, arbitrarily predictive in-distribution but drastically failing to generalize, e.g., under a new policy regime.

\begin{lightsandquote}
  \vspace{-0.05cm}
  \centering\textbf{Motivation: \textit{Disentangling the impact of anti-poverty programs beyond survey data.}}\\[2pt]
  \begin{minipage}[t]{0.52\textwidth}
    \small
    \looseness=-1Anti-poverty programs such as YOP in Uganda \citep{blattman2014generating} and LEAP 1000 in Ghana \citep{GhanaLEAP1000EvaluationTeam2018}, offer cash transfers and livelihood support for economic re-integration, and household welfare. They commonly monitor household demographics despite ignoring the environmental context, e.g., water access. They are then evaluated on average, but to optimize resource allocation in the next iterations, policymakers need to \textit{causally} explain all the impact heterogeneity, retrievable only by integrating external data sources, potentially multi-modal and unstructured, e.g., satellite imagery (see Figure~\ref{fig:both}).  Standalone demographic covariates are insufficient \citep{burke2021using, athey2017state}.
  \end{minipage}
  \hfill
  \begin{minipage}[t]{0.45\textwidth}
    \vspace{-6pt}
    \captionsetup{type=figure, font=footnotesize}
    \begin{subfigure}[t]{0.485\linewidth}
      \includegraphics[width=\linewidth]{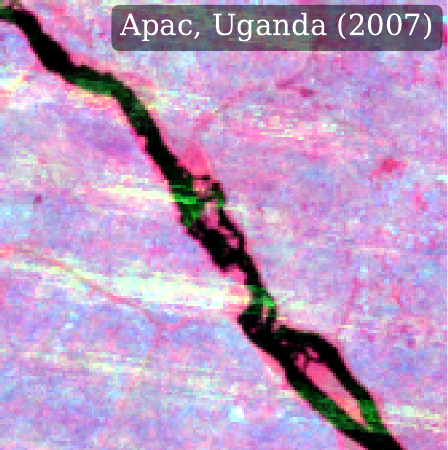}
      \vspace{-0.4cm}
      \caption{\footnotesize{Perennial river}}
      \label{fig:map}
    \end{subfigure}\hfill
    \begin{subfigure}[t]{0.485\linewidth}
      \includegraphics[width=\linewidth]{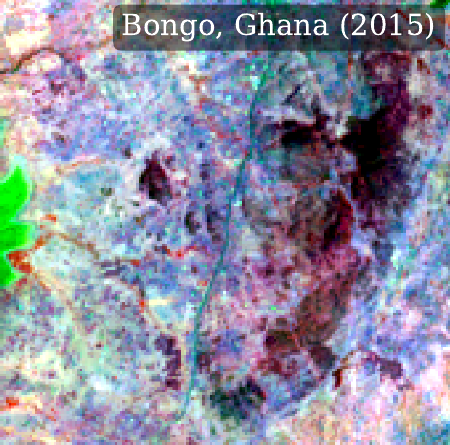}
      \vspace{-0.4cm}
      \caption{\footnotesize{Ephemeral waterways}}
      \label{fig:second}
    \end{subfigure}
    \vspace{-0.2cm}
    \caption{\footnotesize{Examples environmental interactors in anti-poverty programs, commonly ignored in survey data.}}
    \label{fig:both}
  \end{minipage}
\vspace{-0.15cm}
\end{lightsandquote}

Such causal characterization has been historically out of reach: existing HTE estimation approaches either model the treatment effect flexibly without identifying its primitive drivers \citep{wager2018estimation, athey2019generalized, hahn2020bayesian, kunzel2019metalearners, nie2021quasi, kennedy2023towards, hill2011bayesian, shalit2017estimating, jerzak2023image}, or identify interpretable subgroups over curated structured covariates \citep{foster2011subgroup, athey2016recursive, bargagli2020causal, hines2022variable, paillard2024measuring, bargagli2020causal, bargagli2022heterogeneous}, with no guarantee of separating the (possibly unobserved) interactions from their spurious correlates. We argue two recent advances finally brought causal HTE identification within reach in controlled experiments:
\begin{enumerate}[label=\roman*., leftmargin=2.2em, itemsep=0pt, topsep=-3pt]
    \item \textit{Increasing measurement capacity}, making it more plausible that any treatment interaction is captured, even if entangled in complex multi-modal and multi-source observations, e.g., satellite imagery, sensor streams or text \citep{burke2021using, athey2017state}.
    \item \textit{Modern representation learning}, enabling sufficiently good representations, i.e., aligned with the latent factors of variation of interest, and scalable to the higher data volume. Particularly we focus on sparse-autoencoder dictionaries \citep{cunningham2023sparse, bricken2023monosemanticity, gao2024scaling, templeton2024scaling, makelov2024towards, pach2025sparse} learned from pre-trained foundation-model representations.
\end{enumerate}

\vspace{3pt}
Even in this novel data regime, identifying the treatment interactions among all the existing non-causal effect modifiers is not trivial. Furthermore, since only partial disentanglement is achieved in practice \citep{chanin2024absorption}, almost any coordinate inherits marginal heterogeneity from the treatment interactors, making \textit{effect modification search} fundamentally misaligned from \textit{interactors search}.
Recovering only direct modifiers requires conditional reasoning to identify the minimal and sufficient HTE representation, if observed. We then recast the search to a Markov-blanket discovery problem \citep{tsamardinos2003algorithms, aliferis2010local} for the effect heterogeneity over the learned dictionary, introducing \emph{Neural EXposure Interaction Search} (NEXIS), a novel forward-backward procedure based on conditional effect-equivalence testing. We prove high probability recovery guarantees, and provide extensive empirical validation and ablations on semi-synthetic experiments, with practical variants for different data regimes.
 
We then deploy NEXIS on two real-world anti-poverty programs, the Youth Opportunities Program (YOP) in Uganda \citep{blattman2014generating} and Livelihood Empowerment Against Poverty Programme (LEAP 1000) in Ghana \citep{GhanaLEAP1000EvaluationTeam2018}, augmenting each with novel satellite imagery, capturing previously ignored environmental factors, providing concrete guidelines for the next iterations of these programs.
To the best of our knowledge, this work introduces the first procedure to identify a \textit{causal} and \textit{interpretable} HTE characterization from complex observations, translating raw measurements end-to-end into practical and robust guidelines for policy adaptation. 

In summary, our contributions are:
\begin{itemize}[itemsep=0pt, topsep=-3pt, leftmargin=2.2em, ]
    \item targeting \textbf{\textit{causal} and \textit{human-interpretable} HTE identification} and formulating it as a Markov-blanket discovery problem over a sufficient and aligned pre-treatment representation;
    \item introducing NEXIS, a \textbf{principled procedure} for the target HTE identification, with recovery guarantees, tests and ablations providing practical guidelines for different data regimes;
    \item deploying NEXIS on \textbf{two anti-poverty programs} in Africa which we complement with satellite imagery, surfacing novel prescriptive heterogeneity explanations for policy adaptation. 
\end{itemize}
 
\section{Causal and Interpretable Heterogeneous Treatment Effect identification}
\label{sec:formalism}
 
We formalize here the population target and the corresponding observed-space recovery problem.
\underline{Note}: \textit{For notational clarity, we present our development for a randomized experiment with binary treatment under perfect compliance, but the same ideas extend naturally to other causal heterogeneity functionals, such as local average treatment effect (LATE) heterogeneity \citep{bargagli2022heterogeneous} or average-treatment-effect-on-the-treated (ATT) heterogeneity \citep{callaway2021difference} in quasi-experimental settings.}
 
\subsection{Causal and Interpretable Heterogeneous Treatment Effect characterization}
\label{sec:hte_complex}

Consider a controlled experiment with $n$ units indexed by $i \in [n] := \{1,\ldots,n\}$, drawn \textit{i.i.d.} from a population distribution $\mathcal{P}$, where a binary treatment $T \in \{0,1\}$ is randomly assigned and potentially affecting an outcome variable $Y \in \mathbb{R}$. Let $Y(0), Y(1)$ be the potential outcomes under control and treatment respectively \citep{rubin1974estimating} (assuming consistency and not interference, i.e., SUTVA), we define the treatment effect by their comparison, i.e., $\tau := Y(1) - Y(0)$. The treatment effect can vary across units due to the treatment interaction with a set of pre-treatment unit characteristics $\bm{W}^{\mathrm{dir}} \in \mathbb{R}^r$, called \emph{direct} effect modifiers or \emph{interactors} \citep{vanderweele2009distinction, vanderweele2007four}. For policy guidance we aim to identify the corresponding Conditional Average Treatment Effect (CATE) functional: 
\begin{equation} 
\label{eq:tau_w} \tau(\bm{w}) := \mathbb{E}[\tau \mid \bm{W}^{\mathrm{dir}} = \bm{w}] \quad \forall\bm w \in \text{supp} (\bm W^{\mathrm{dir}}).
\end{equation}
Targeting the $\bm{W}^{\mathrm{dir}}$ characterization is desirable for two reasons:
\begin{enumerate}[label=\roman*., leftmargin=2.2em, itemsep=0pt, topsep=-3pt]
    \item It represents the \emph{minimal} and \emph{sufficient} summary of the effect heterogeneity: by definition $\bm{W}^{\mathrm{dir}}$ $d$-separates any pre-treatment variable from the treatment effect $\tau$ \citep{richardson2013single} providing a Markov blanket of the treatment effect heterogeneity \citep{pearl1988probabilistic, koller2009probabilistic}. 
    \item It is \emph{causal}\footnote{Note that the CATE is per-se a causal estimand, but not with respect to its characterization.} and \emph{interpretable}, identifying directly the corresponding interventional heterogeneity functional: 
\vspace{-0.3cm}
\begin{equation} 
\label{eq:tau_do_w} \tau^{do}(\bm{w}) := \mathbb{E}[\tau \mid do(\bm{W}^{\mathrm{dir}} = \bm{w})] \quad \forall\bm w \in \text{supp} (\bm W^{\mathrm{dir}}),
\end{equation}
\vspace{-0.48cm}

regardless of the other modifiers' distribution, while marginal interventions, e.g., on a specific $\bm{W}^{\mathrm{dir}}$ coordinate, are sub-population specific due to potential inter-direct modifier interactions.
\end{enumerate}
 But Equation \ref{eq:tau_w} is not the unique effect heterogeneity characterization, and $\bm{W}^{\mathrm{dir}}$ is generally latent.
\begin{wrapfigure}{r}{0.38\textwidth}
  \centering
  \vspace{-0.4cm}
  \begin{tikzpicture}[
      node distance=2.0cm,
      >=Latex,
      var/.style={circle, draw=black, thick, minimum size=9mm, inner sep=0pt, font=\small},
      obs/.style={circle, draw=black, thick, fill=black!15, minimum size=9mm, inner sep=0pt, font=\small},
      edge/.style={->, draw=black, thick},
      map/.style={->, draw=black, thick, dashed}
    ]
      \tikzset{every node/.style={anchor=center}}
      \node[obs] (T) {$T$};
      \node[obs, right=2.6cm of T] (Y) {$Y$};
      \coordinate (m) at ($(T)!0.5!(Y)$);
      \node[var, above=2.1cm of m] (Wdir) {$\bm{W}^{\mathrm{dir}}$};
      \node[var, left=1.0cm of Wdir] (Wind) {$\bm{W}^{\mathrm{ind}}$};
      \node[var, right=1.0cm of Wdir] (Wprx) {$\bm{W}^{\mathrm{prx}}$};
      \node[var, below=0.5cm of Wdir] (Wcc) {$\bm{W}^{\mathrm{cc}}$};
      \begin{scope}[on background layer]
        \node[draw=black, thick, rounded corners,
              fit=(Wind)(Wdir)(Wprx)(Wcc),
              inner sep=3pt] (Wbox) {};
      \end{scope}
      \node[obs, above=0.5cm of Wbox] (X) {$\bm{X}$};
      \node[obs, right=1.1cm of X] (Z) {$\bm{Z}$};
      \draw[edge] (T) -- (Y);
      \draw[edge] (Wdir) -- (Y);
      \draw[edge] (Wbox.north) -- (X);
      \draw[edge] (Wind) -- (Wdir);
      \draw[edge] (Wdir) -- (Wprx);
      \draw[edge] (Wind) -- (Wcc);
      \draw[map] (X) -- node[above, font=\small] {$\psi$} (Z);
    \end{tikzpicture}
    \caption{\small Problem setting and effect modification taxonomy illustration via a causal model \citep{vanderweele2007four}. In gray: observed variables; white: latent/unobserved.}
  \label{fig:causal_model}
  \vspace{-0.8em}
\end{wrapfigure}

There are indeed other types of effect modifiers, i.e., variables varying the treatment effect within their strata.
\citet{vanderweele2007four} characterized them, distinguishing the \textit{direct} modifiers \(\bm W^{\mathrm{dir}}\), from their ancestors \emph{indirect} modifiers \(\bm W^{\mathrm{ind}}\), the \emph{proxies} \(\bm W^{\mathrm{prx}}\) descending from direct modifiers, and modifiers \emph{by common cause} \(\bm W^{\mathrm{cc}}\) sharing a common ancestor with a direct modifier, as illustrated in the causal model\footnote{Note that: (i) a causal model does not provide the full effect modification characterization without structural equations, (ii) direct effect modifiers are not necessarily prognostic variables for the outcome, i.e., outcome's parents without treatment interaction, here hidden as exogenous noise, (iii) any effect modifier can additionally be outcome prognostic. 
} in Figure \ref{fig:causal_model}. Among these, only the direct modifiers carry a causal interpretation: indirect modifiers operate only via mediation, while proxies and common-cause modifiers have no causal pathway to \( \tau\) \citep{richardson2013single}, and should be interpreted cautiously. Indeed, consider an anti-poverty program where market presence is a latent direct modifier. Road density visible from satellite imagery may surface as a heterogeneity correlate, while being an effect modifier by common cause. A naive interpretation would suggest building roads to amplify the program effect, while the actual driver is market presence; a road built to a place with no market would not impact the effect. 

Furthermore \(\bm W^{\mathrm{dir}}\) is rarely fully measured directly, and it is instead entangled in complex pre-treatment observations potentially multi-source and multi-modal. Let \(\bm X \in \mathcal X\) denote the aggregation of all such pre-treatment measurements, e.g., combining structured demographic covariates from questionnaire responses, with unstructured signals such as satellite imagery. We assume:
\vspace{-0.15cm}
\begin{theorembox}
    \begin{assumption}[Measurement Sufficiency]
    \label{ass:measurement_sufficiency}
    The measured pre-treatment variables \(\bm X\) entangle all the heterogeneous treatment effect information, i.e., 
    \begin{equation}
    \label{eq:z_suff_w}
     \E[\tau \mid \bm X] = \E[\tau \mid \bm W^{\mathrm{dir}}]
    \qquad \text{a.s.}
    \end{equation}
    \end{assumption}
\end{theorembox}
\vspace{-0.15cm}
\looseness=-1Assumption~\ref{ass:measurement_sufficiency} is plausible in well-designed experiments where the relevant factors of variation are controlled or measured, even when the actual latent modifiers are unknown, provided access to extensive pre-treatment measurements. It is also necessary: if violated, complete explanation of effect heterogeneity becomes theoretically impossible, and any analysis is forced to approximate it via non-direct effect modifiers, invalidating any causal interpretation and extrapolation. 
 
Existing methods for HTE estimation from sufficient complex measurements are not causal: either lacking \textit{expressivity} to model the complex covariates domain $\mathcal{X}$, or incautiously learning \textit{non-invariant} mechanisms  \citep{yaounifying, lopezpaz2026invariance}. 
However, a causal characterization is required for policy: ``\textit{which additional interventions or treatment assignment revisiting can increase the impact of a program?}".
We overcome this trade-off by leveraging modern \textit{representation learning} for extensive interpretable hypotheses generation, which would not scale with human annotations alone without prohibitive cost and bias.
To the best of our knowledge, this work targets for the first time a causal HTE characterization from complex observations, i.e., identifying the direct modifiers $\bm W^{\mathrm{dir}}$, which is also interpretable by construction due to causal mechanisms sparsity \citep{scholkopf2021toward} and representation minimality.
 
\subsection{Learning interpretable representation for Direct Effect Modifiers identification}
\label{sec:repr_learning}
We propose to rely on pretrained foundation-model interpretations to represent the complex pretreatment measurements, and then identify (search) the direct effect modifiers. Indeed, overwhelming evidence supports the hypothesis that the meaningful factors of variation are already linearly captured within foundation-model embedding spaces, including the direct effect modifiers ~\citep{chen2020simple,dosovitskiy2020image,radford2021learning,caron2021emerging}, even without supervision. Furthermore, learning end-to-end a meaningful representation on the target experiment is generally statistically infeasible due to the limited data. We then represent the pre-treatment measurements via a model $\psi:\mathcal{X}\rightarrow \mathbb{R}^m$, defining the  covariates representation
\begin{equation}
\label{eq:z_from_x}
\bm Z := \psi(\bm X) \in \mathbb R^m,
\end{equation}
\looseness=-1by learning a sparse autoencoder (SAE) on top of a frozen foundation-model encoder for the modality of interest \citep{cunningham2023sparse,bricken2023monosemanticity}, optionally combined with classical dictionary-learning components \citep{mairal2009online} and appended hand-crafted covariates from the processable modalities\footnote{Different hypotheses generation strategies can be considered provided they satisfy the method requirements.}. The resulting dictionary coordinates are interpretable by construction \citep{oquab2023dinov2, zhai2023sigmoid, park2023linear}, and each sparse coordinate can be labeled post-hoc, e.g., inspecting its top-activating inputs via a vision-language model. Two distinct desiderata on the learned representation arise naturally: 

\vspace{-0.2cm}
\begin{theorembox}
\begin{assumption}[Representation Sufficiency]
\label{ass:representation_sufficiency}
All the pre-treatment measurements $\bm X$ heterogeneous treatment effect information is preserved in the representations $\bm Z$, i.e.,
\begin{equation}
    \label{eq:z_suff_x}
     \E[\tau \mid \bm Z] = \E[\tau \mid \bm X]
    \qquad \text{a.s.}
    \end{equation}
\end{assumption}
\end{theorembox}
\vspace{-0.3cm}
In a SAE the Representation Sufficiency is naturally enforced by the reconstruction loss.

\vspace{-0.2cm}
\begin{theorembox}
\begin{assumption}[Principal Alignment]
\label{ass:principal_alignment}
There exists a unique injective map \(\pi : [r] \hookrightarrow [m]\), \(k \mapsto j_k\), such that for every \(k \in [r]\):
\begin{equation}
    \label{eq:principal_alignment}
    W^{\mathrm{dir},k} \;\indep\; \bm Z^{[m] \setminus \{j_k\}} \;\Big|\; Z^{j_k}.
\end{equation}
We call \(\mathcal{S}^\star := \{j_1, \ldots, j_r\}\) the set of \emph{principal proxy direct effect modifiers coordinates}, and \(Z^{j_k}\) the \emph{principal proxy} for the direct effect modifier \(W^{\mathrm{dir},k}\).
\end{assumption}
\end{theorembox}
 
\vspace{-0.3cm}
Principal Alignment asks each direct modifier to be summarized by a distinct dominant coordinate, with no residual signal scattered across the others, yet allowing for shared components. It is a population-level idealization of the regime SAEs target for \citep{makelov2024towards, pach2025sparse}, and whether it holds for a specific dictionary can be probed empirically for supervised concepts \citep{yao2025third}. 

Overall,  Measurement and Representation Sufficiency are necessary conditions for full HTE identification, while Principal Alignment is what well-defines its minimal characterization, i.e., the principal proxies for the latent direct effect modifiers.  Together they well-define the causal and interpretable HTE characterization, and unlock its identification in controlled experiments. Then, from a controlled experiment, we develop a method for the principal proxies identification and update the experiment design accordingly, as illustrated in the following cyclic diagram: 

\begin{lightsandquote}
\textbf{From Tokens to Policy:}

\vspace{0.8cm}
\centering
\(\underbrace{T \xrightarrow{\;\;\;\;} Y \xleftarrow{\;\;\;\;} \tikz[remember picture, baseline]{\node[inner sep=1pt, anchor=base] (A1) {\(\bm{X}\)};}}_{\text{i. controlled experiment}}\;\underbrace{\xrightarrow{\;\psi\;}\; \bm{Z}}_{\substack{\text{ii. represent}\\ \text{(\textit{interpretable})}}}\;\underbrace{\xrightarrow{\;\substack{\text{NEXIS}}\;}\; \bm{Z}^{\mathcal{S}^\star} \;\approx\; \tikz[remember picture, baseline]{\node[inner sep=1pt, anchor=base] (W) {\(\bm{W}^{\mathrm{dir}}\)};}}_{\substack{\text{iii. identify} \\ \text{(\textit{causal})}}}\)%
\begin{tikzpicture}[remember picture, overlay]
    \draw[-{Computer Modern Rightarrow[scale=1.2]}, thick, shorten >=2pt, shorten <=2pt]
    (W.north) to[out=110, in=70, looseness=0.45]
    node[above, font=\scriptsize] {\(\text{iv. update policy}\)}
    (A1.north);
\end{tikzpicture}
\\[4pt]
\small\itshape
\end{lightsandquote}
 
\section{On the distinction between Effect Modification and Interaction}
\label{sec:power_paradox}

Identifying $\mathcal{S}^\star \subseteq[m]$ is not straightforward, even under
Assumptions~\ref{ass:measurement_sufficiency}--\ref{ass:principal_alignment}.
Pre-treatment measurement representations can encode, among others, redundant effect modifiers,
and a principled procedure needs to select only the $r$ direct modifiers \textit{principal proxies},
without $\bm{W}^{\mathrm{dir}}$ supervision.

\paragraph{Effect modification entanglement.}
A natural baseline is coordinate-wise testing, marginally screening each representation coordinate for effect modification.
In learned representations, however, each coordinate can principally proxy also non-direct effect modifiers, and furthermore, non-principally proxy many other effect modifiers due to the broad representation entanglement \citep{chanin2024absorption}, regardless of the Principal Alignment assumption. Let
\begin{equation}
\label{eq:mstar}
\mathcal{M}^\star
:=
\bigl\{j \in [m] :
\mathbb{P}\!\bigl(\E[\tau \mid Z^j] \neq \E[\tau]\bigr) > 0\bigr\}
\end{equation}
the effect-modifier coordinates within the learned representation.
Every $j_k \in \mathcal{S}^\star$ belongs to $\mathcal{M}^\star$\footnote{Under Simpson-style cancellation a principal coordinate could fail to be
marginally active; we invoke mean faithfulness \citep{spirtes2000causation} for simplicity of discussion without loss of generality, focusing on Type 1 error, i.e., False Discoveries.}, but so does any entangled coordinate with a real effect modifier, and in a rich dictionary of $m \gg r$ atoms (necessary for reconstruction),
non-principal proxy coordinates vastly outnumber the principal, making $\mathcal{M}^\star$ a redundant and non-causal superset of the target
\citep{vanderweele2009distinction}.
Ranking by marginal activation magnitude does not resolve the issue: a strongly-activating
non-principal proxy of a high-magnitude direct modifier can outrank the principal signal of a weaker direct modifier, and even if ordered, the boundary between principal and non-principal coordinates is unknown a priori.%

\paragraph{Experimental Power Paradox.}
The failure of marginal effect modification testing is not a small-sample artifact that more data can fix, it is structural.
As experimental power grows, through larger $n$, stronger signals, or more sensitive tests,
marginal screening converges to $\mathcal{M}^\star$ and \emph{accumulates} non-direct
modifiers, each backed by genuinely significant evidence.
The procedure is consistent for the wrong target: more power makes the gap between
$\mathcal{M}^\star$ and $\mathcal{S}^\star$ more visible, not less.
Recovering $\mathcal{S}^\star$ requires conditioning, ruling out each candidate
in the presence of the others to target a minimal and sufficient representation, which is the design principle behind our method, NEXIS.

\paragraph{Empirical evidence.}
We construct a semi-synthetic RCTs benchmark on CelebA \citep{liu2018large, mencattini2025exploratory} with $|\mathcal{S}^\star|=2$
known direct modifiers (\emph{wearing a hat}, \emph{wearing eyeglasses});
pre-treatment information is face images only.
A trained SAE ($m{=}13{,}824$  codes) on SigLIP representations
\citep{zhai2023sigmoid} provides a valid candidate dictionary.
Figure~\ref{fig:power_paradox_synth} reports Precision and Recall in the recovery of $\mathcal{S}^\star$ for marginal screening and NEXIS
as effect size and sample size grow.
Marginal screening faithfully tracks $\mathcal{M}^\star\approx[m]$ while diverging from
$\mathcal{S}^\star$ with precision collapse; NEXIS consistently recovers $\mathcal{S}^\star$
monotonically with the power. Experiment details and extended ablations are reported in Appendix~\ref{sec:semi-synt}.

\begin{figure}[h!]
  \centering
  \includegraphics[width=\textwidth]{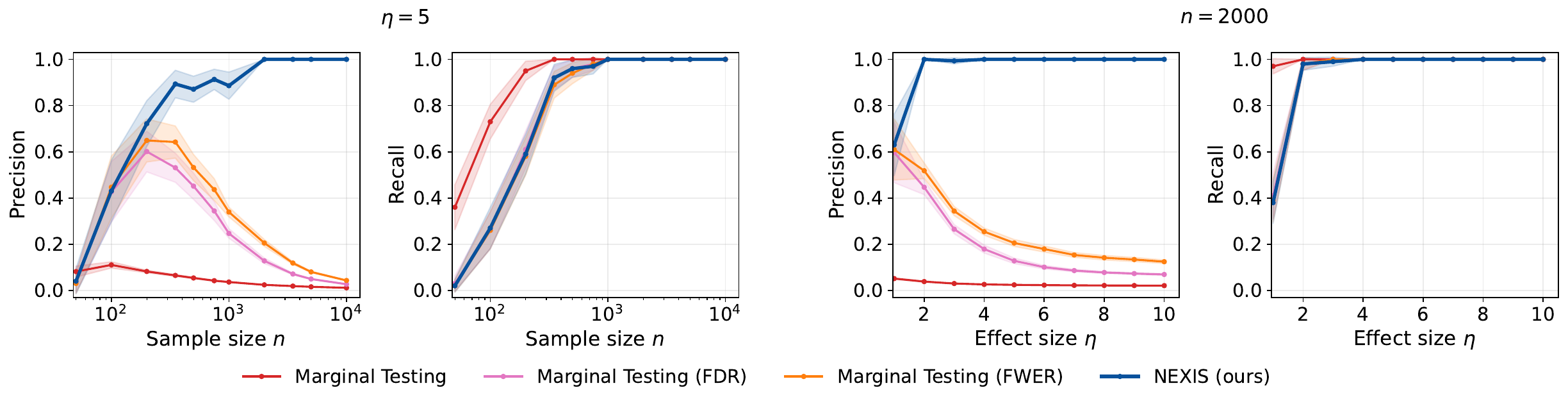}
  \caption{\small \textit{Experimental power paradox} on CelebA: Increasing the sample size ($n$) or the effect magnitude ($\eta$) of the experiment, marginal screening accumulates non-direct modifiers (false discoveries) and diverges from $\mathcal{S}^\star$ recovery, i.e., precision collapse. NEXIS robustly mitigates such behavior by iterative conditioning.}
  \label{fig:power_paradox_synth}
\end{figure}

\section{Neural EXposure Interaction Search}
\label{sec:nexis}
 
We propose \emph{Neural EXposure Interaction Search} (NEXIS), an iterative procedure for identifying the principal direct modifiers proxies $\mathcal{S}^\star$ from a controlled experiment, given pre-treatment representations. At each round NEXIS selects the coordinate that, conditional on what has already been selected, carries the strongest residual heterogeneity signal, and removes any previously-selected coordinate that has become redundant given the rest. The atomic operation is \emph{CATE-equivalence testing}, returning, for a candidate \(j \in [m]\) and selection \(S \subseteq [m] \setminus \{j\}\), a valid \(p\)-value $p_j(S)$ for
\begin{equation}
\label{eq:cate_ci_null}
H_0(j \mid S):\; \E[\tau \mid \bm Z^{S \cup \{j\}}] = \E[\tau \mid \bm Z^{S}] \quad \text{a.s.}
\end{equation}
Concrete instantiations and suggested defaults are detailed in Appendix~\ref{sec:method-details}. 
 
\begin{algorithm}[h]
\caption{Neural EXposure Interaction Search (NEXIS)}
\label{alg:nexis}
\begin{algorithmic}[1]
\State \textbf{Input:} $\{(\bm{z}_i, t_i, y_i)\}_{i=1}^{n}$, significance level $\alpha\in(0,1)$ (default $0.05$)
\vspace{0.2cm}
\State $S \gets \emptyset$,\; $S_{\mathrm{prev}} \gets \{-1\}$
\While{$S \neq S_{\mathrm{prev}}$}
    \State $S_{\mathrm{prev}} \gets S, \quad\bar{S} \gets [m] \setminus S$
    \State $j^{\star} \gets \arg\min_{j \in \bar{S}} p_j(S)$ \Comment{forward step}
    \If{$p_{j^{\star}}(S) \le \alpha / |\bar{S}|$} $S \gets S \cup \{j^{\star}\}$ 
    \EndIf
    \For{$j \in S$} \hfill \Comment{backward step}
        \If{$p_j(S \setminus \{j\}) > \alpha / |S|$}
            $S \gets S \setminus \{j\}$ 
        \EndIf
    \EndFor
\EndWhile
\State \Return $S$
\end{algorithmic}
\end{algorithm}
\vspace{-20pt}
\looseness=-1NEXIS adapts the classic forward-backward Markov-blanket (MB) discovery skeleton \citep{tsamardinos2003algorithms, aliferis2010local} to a new target and a new setting: it discovers the Markov blanket of the CATE functional, which is not directly observed, on top of a learned pre-trained representation. Forward growth ensures every direct modifier eventually enters the selection; backward pruning re-tests retained coordinates against the full current selection at each round, removing any that has become redundant. We use Bonferroni gate for FWER control \citep{bonferroni1936teoria}, but less conservative controls can be directly plugged in, e.g., FDR \citep{benjamin1995controlling}.

\begin{theorembox}
\begin{theorem}[Causal Identification]
\label{thm:nexis_consistency} \hspace{-0.1cm}
Given a randomized experiment $\{(\bm{z}_i, t_i, y_i)\}_{i=1}^{n}$, under Principal Alignment, mean-faithfulness \citep{spirtes2000causation}, and test validity, NEXIS' outcome \(\widehat{\mathcal{S}}_n\) satisfies:
\begin{equation}
\label{eq:selection_consistency}
\liminf_n \mathbb{P}(\widehat{\mathcal{S}}_n = \mathcal{S}^\star) \ge 1 - \alpha,
\end{equation}
where, additionally assuming Measurement and Representation Sufficiency:
\begin{equation}
\label{eq:causal_id}
    \tau^{do}(\bm W^{\mathrm{dir}}) = \tau(\bm W^{\mathrm{dir}}) \;=\; \E[\tau \mid \bm Z^{\mathcal S^{\star}}] \quad \text{a.s.}
\end{equation}
\end{theorem}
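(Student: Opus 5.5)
The proof has two parts: a population-level identification claim \eqref{eq:causal_id}, which is purely probabilistic, and a finite-sample selection claim \eqref{eq:selection_consistency}, which reduces to a union bound over test errors once the population behaviour of the oracle forward--backward search is fixed.

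For \eqref{eq:causal_id}, chain the assumptions. Measurement Sufficiency and Representation Sufficiency give $\E[\tau\mid\bm Z]=\E[\tau\mid\bm W^{\mathrm{dir}}]$ a.s.

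Next, show $\E[\tau\mid \bm Z]=\E[\tau\mid\bm Z^{\mathcal S^\star}]$. Since $\bm W^{\mathrm{dir}}$ is a Markov blanket for $\tau$, we have $\E[\tau\mid\bm Z, \bm W^{\mathrm{dir}}]=\E[\tau\mid \bm W^{\mathrm{dir}}]$. Principal Alignment says each $W^{\mathrm{dir},k}$ is independent of the remaining coordinates given $Z^{j_k}$. I would strengthen this to the joint statement $\bm W^{\mathrm{dir}}\indep \bm Z^{[m]\setminus\mathcal S^\star}\mid \bm Z^{\mathcal S^\star}$, via the weak-union/contraction argument. I flag this as a place where an extra joint-alignment reading of Assumption~\ref{ass:principal_alignment} may be needed. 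With it, the tower property gives
$$\E[\tau\mid\bm Z^{\mathcal S^\star}]=\E\bigl[\E[\tau\mid\bm W^{\mathrm{dir}}]\mid\bm Z^{\mathcal S^\star}\bigr]=\E\bigl[\E[\tau\mid\bm W^{\mathrm{dir}}]\mid\bm Z\bigr]=\E[\tau\mid\bm Z],$$
where the last step uses $\E[\tau\mid\bm W^{\mathrm{dir}}]=\E[\tau\mid\bm Z]$, which is $\bm Z$-measurable.

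Finally, $\tau^{do}(\bm w)=\tau(\bm w)$ follows because $\bm W^{\mathrm{dir}}$ are direct parents of the effect with no confounding path to $\tau$ other than through themselves (randomized $T$, Figure~\ref{fig:causal_model}). Hence the back-door criterion holds with the empty set relative to $\tau$, since by the $d$-separation property no other pre-treatment variable affects $\tau$ except through $\bm W^{\mathrm{dir}}$.

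For \eqref{eq:selection_consistency}, first analyse the oracle version of Algorithm~\ref{alg:nexis}, in which $p_j(S)$ is replaced by the truth of $H_0(j\mid S)$. The lemma to prove is: for any $S\not\supseteq\mathcal S^\star$ some $j\in\mathcal S^\star\setminus S$ violates $H_0(j\mid S)$, and for $S\supseteq \mathcal S^\star$ every $j\notin\mathcal S^\star$ satisfies $H_0(j\mid S)$.

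The second half is the Markov-blanket property established above, applied conditionally: $\E[\tau\mid\bm Z^{S\cup\{j\}}]=\E[\tau\mid \bm Z^{\mathcal S^\star}]=\E[\tau\mid\bm Z^S]$ by the tower argument. The first half is where mean-faithfulness enters. If all missing principal coordinates were conditionally null, then $\E[\tau\mid\bm Z^{S}]=\E[\tau\mid \bm Z^{S\cup\mathcal S^\star}]$, and faithfulness would force $\mathcal S^\star\setminus S$ to be empty. I invoke faithfulness in the composition form, to rule out the case where no single missing principal coordinate carries residual signal even though the set does.

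The same argument shows that in the oracle run backward pruning removes exactly the non-principal coordinates once $\mathcal S^\star\subseteq S$, and never removes a principal one, by uniqueness of the minimal blanket from Principal Alignment. Since each forward step adds an element and the procedure is monotone in the oracle, it terminates at $\mathcal S^\star$ in at most $m+r$ rounds.

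Then pass to data. Test validity means null $p$-values are super-uniform; I also take it to include consistency, i.e.\ $p_j(S)\to0$ in probability under alternatives. The number of rounds is finite (bounded by a function of $m$ only), so the trajectory consists of finitely many tests. On the event that all alternative tests reject, which has probability tending to $1$ since the Bonferroni thresholds are fixed, the data path coincides with the oracle path unless some true null rejects. The nulls tested along the path are the forward nulls at $\bar S$ for the argmin and the backward nulls. Bonferroni at $\alpha/|\bar S|$ and $\alpha/|S|$ bounds the probability of any false inclusion or retention by $\alpha$; the forward stage is the key one, since a false backward retention only matters at the final round. Taking $\liminf$ gives $\ge 1-\alpha$.

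\textbf{Main obstacle.} The hard step is this last accounting. The set of tests actually run is random and path-dependent, so a naive Bonferroni over a data-dependent family is not automatically valid. I would handle it by conditioning on the oracle path: with probability $\to1$ the path equals the oracle one up to the first false rejection, so the relevant nulls form a deterministic family. The error control then reduces to the final-round forward test over $[m]\setminus\mathcal S^\star$, which has level $\alpha$, plus vanishing terms.
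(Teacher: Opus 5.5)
Your argument for Equation~\ref{eq:causal_id} is essentially the paper's: the sufficiency chain, the joint conditional independence of $\bm W^{\mathrm{dir}}$ and $\bm Z^{[m]\setminus\mathcal S^\star}$ given $\bm Z^{\mathcal S^\star}$, and the back-door step. The joint statement you flag is obtained there by composition under the faithfulness clause; weak union and contraction alone cannot merge the per-$k$ independences.

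The genuine gap is in the selection accounting. Reducing the error to ``the final-round forward test over $[m]\setminus\mathcal S^\star$ plus vanishing terms'' presupposes that the data path reaches $\mathcal S^\star$ exactly, having admitted only principal coordinates. But while $S\not\supseteq\mathcal S^\star$, non-principal proxies of a not-yet-selected modifier are genuine alternatives to $H_0(j\mid S)$ and can win the argmin. So the oracle path is neither unique nor monotone, and there is no deterministic family of nulls to condition on.

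Such a coordinate $a$ can only leave through a backward test $p_a(S\setminus\{a\})$ with $S\setminus\{a\}\supseteq\mathcal S^\star$. That is a true null which retains $a$ with asymptotic probability about $\alpha/|S|$; this is not a vanishing term, and a retained $a$ ends up in the output. With $r=1$, for instance, a proxy $a$ admitted before $j_1$ survives the test against $\{j_1\}$ with probability about $\alpha/2$. On the complementary event, the forward step at $\{j_1\}$ is precisely the level-$\alpha$ Bonferroni test you rely on. These error sources sit on disjoint events and add, so ``Bonferroni bounds any false inclusion or retention by $\alpha$'' is not a valid union bound along the path.

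The paper does not route precision through the forward test at $\mathcal S^\star$. Instead it uses the termination property: every member of $\widehat{\mathcal S}_n$ has passed the backward gate $p_j(\widehat{\mathcal S}_n\setminus\{j\})\le\alpha/|\widehat{\mathcal S}_n|$. On the recall event, each non-principal member then faces a true null by Lemma~\ref{lem:mb_structure}(a), and a union bound over at most $|\widehat{\mathcal S}_n|$ members gives $\alpha$. This covers non-principal coordinates whenever they entered. Be aware that this step treats those $p$-values as uniform conditionally on the data-dependent $\widehat{\mathcal S}_n$, so the path-dependence you identified is not fully settled there either.

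A second, smaller gap is in your oracle lemma. Knowing that \emph{some} missing principal coordinate violates its null yields forward growth. It does not give your claim that backward pruning never removes $j_k$ while another $j_{k'}$ is still missing, because the backward test concerns $j_k$ specifically. You need the per-coordinate statement that every $j_k\notin S$ violates $H_0(j_k\mid S)$ for every $S$, including sets containing non-principal coordinates. This is the paper's Lemma~\ref{lem:mb_structure}(b) together with its Principal-Alignment extension in the recall step.

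Your line ``faithfulness would force $\mathcal S^\star\setminus S$ to be empty'' is exactly this non-redundancy claim, left unproved. Mean-faithfulness only turns equal conditional means into conditional independence. Excluding the possibility that non-principal coordinates screen $W^{\mathrm{dir},k}$ off from $\tau$ is what Principal Alignment has to supply.
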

\end{theorembox}

\begin{proof}[Proof sketch]
Principal Alignment makes $\mathcal{S}^\star$ \emph{minimal} and \emph{sufficient} for $\E[\tau \mid \bm Z]$ on the dictionary $[m]$ by $\bm{W}^{\mathrm{dir}}$ alignment, with mean-faithfulness excluding pathological cancellations. Then NEXIS procedure reduces to forward--backward Markov-blanket discovery on the CATE over $\bm Z$, for which a standard IAMB-style argument \citep{tsamardinos2003algorithms,aliferis2010local} delivers asymptotic recall and asymptotic conditional precision, i.e., principal proxies selection consistency. Measurement and Representation Sufficiency bring the causal interpretation to the characterization. Full proof in Appendix~\ref{sec:proofs}.
\end{proof}

\vspace{-4pt}
Principal Alignment\footnote{Without Principal Alignment the target is not even well-defined and different characterizations of the observed heterogeneity may arise, e.g., a single coordinate capturing two modifiers, also ignoring the unobserved modifiers.}, mean-faithfulness, and test validity buy NEXIS the recovery of the principal proxies in the learned dictionary, one coordinate per direct modifier (Equation \ref{eq:selection_consistency}). Measurement and Representation Sufficiency further guarantee such proxies to capture all the heterogeneity information, so identifying the joint interventional effect of the latent direct modifiers (Equation \ref{eq:causal_id}). As such, Theorem~\ref{thm:nexis_consistency} provides prescriptive guidelines for policy design, licensing counterfactual comparisons, e.g., across bundled program configurations.
See Appendix~\ref{sec:method-details} for implementation details and proposed variants; see Appendix~\ref{sec:semi-synt} for empirical validation and extensive ablations.

\vspace{-4pt}
\section{Related Work}
\label{sec:related}
\vspace{-2pt}
 
\paragraph{Heterogeneous treatment effect identification.}

The bulk of existing HTE estimation methods trade expressivity for interpretability and required human supervision for downstream policy relevance. \emph{Decision rule}-based methods \citep{foster2011subgroup, athey2016recursive, bargagli2020causal} return interpretable subgroups but operate on expensive and potentially limited hand-crafted covariates.
\emph{Forest}-based estimators \citep{wager2018estimation, athey2019generalized, hahn2020bayesian} estimate the effect heterogeneity flexibly on structured domains, but return pointwise predictions without explicit heterogeneity characterization. \emph{Meta-learners} \citep{kunzel2019metalearners, nie2021quasi} are agnostic to the base estimator and can in principle wrap pretrained foundation-model nuisances on unstructured covariates, but they have not been demonstrated to do so at experimental scale, and they share the same uninterpretable pointwise output. \emph{Post-hoc} CATE interpretation methods \citep{hines2022variable, paillard2024measuring, boileau2025nonparametric, bargagli2026causal} sit on top of any base estimator and rank candidate modifiers by importance recovering interpretability. However, all these methods have not been developed to operate in the context of entangled and  multimodal representations of the potential effect modifiers space. \emph{Deep Learning}-based estimators \citep{jerzak2023image} extend HTE estimation to unstructured measurements such as imagery, gaining expressivity at the cost of modifier-level interpretability. Table~\ref{tab:related_methods} positions NEXIS against these families. NEXIS achieves all the desiderata by leveraging modern experimental practice, i.e., extensive measurement collections supporting Measurement Sufficiency, and modern representation learning, i.e., sparse-autoencoder-based dictionaries supporting Representation Sufficiency and Principal Alignment.
 


\vspace{-14pt}
\begin{table}[h]
\caption{\small HTE-estimators properties by family. \emph{Empiricist}: data-driven feature extractions from raw pre-treatment observations, preventing Matthew effect \citep{merton1968matthew}. 
\emph{Interpretable}: human readable HTE characterization. \emph{Prescriptive}: targets treatment-interactive factors. Legend: \cmark{}= fully, \xmark{}= not fully.}

\centering
\small
\setlength{\tabcolsep}{6pt}
\renewcommand{\arraystretch}{1.2}
\begin{tabular}{cccc}
\toprule
\textbf{Method} & \textbf{Empiricist} & \textbf{Interpretable} & \textbf{Prescriptive} \\
\midrule
\rowcolor{gray!10}
\textit{Forest}-based               & \xmark & \xmark & \xmark \\
Meta-Learners              & (potentially) & \xmark & \xmark \\
\rowcolor{gray!10}Post-hoc interpretations    & \xmark & \cmark & \xmark \\
\textit{Decision rule}-based        & \xmark  & \cmark & \xmark \\
\rowcolor{gray!10}\textit{Deep Learning}-based        & \cmark & \xmark & \xmark \\
\textbf{NEXIS} (\textit{ours})             & \cmark & \cmark & \cmark \\
\bottomrule
\end{tabular}
\label{tab:related_methods}
\end{table}

\vspace{-10pt}
\paragraph{Representation learning for causal downstream tasks.}
Causal Representation Learning was first articulated as the recovery of causally meaningful latent variables from raw observations as a general-purpose target \citep{scholkopf2021toward}, an ambitious goal that faces fundamental identifiability barriers. The field has since moved toward well-specified causal targets, in the spirit already suggested by Causal Feature Learning \citep{chalupka2017causal} albeit in controlled, low-dimensional settings. Pretrained encoders are now integrated into specific causal pipelines: for treatment-effect estimation on unstructured outcomes \citep{cadei2024smoke, cadei2025causal}, for unsupervised effect discovery on unstructured outcomes \citep{mencattini2025exploratory}, for confounding adjustment via text embeddings \citep{veitch2020adapting}, and for HTE estimation on imagery \citep{jerzak2023image}. Our work extends this empiricist line, unlocking the identification of a causal and interpretable HTE characterization from unstructured \emph{pre-treatment} observations.

\vspace{-10pt}
\paragraph{Dictionary learning and interpretability.}
Foundation-model representations admit several routes to a candidate concept dictionary: classical sparse coding \citep{mairal2009online}, sparse autoencoders \citep{cunningham2023sparse, bricken2023monosemanticity, gao2024scaling}, and recent variants addressing entanglement failure modes such as feature absorption \citep{chanin2024absorption}, including transcoders \citep{dunefsky2024transcoders} and Matryoshka SAEs \citep{bussmann2025learning}. Once a concept is identified, an interpretation pipeline translates it into a human-readable summary via top-activating examples, supervised probes, or LM-assisted descriptions of activating inputs \citep{bills2023language}. NEXIS is agnostic to the specific dictionary and interpretation pipeline; particularly we train top-\textit{k} SAEs on top of foundation-model encoders for pre-treatment representations, and call VLM-assisted summaries contrasting top-activating against random-activation inputs for interpretation, but alternative pipelines could be considered.

\vspace{-8pt}
\section{Extending anti-poverty programs interpretation}
\vspace{-6pt}
\looseness=-1We close the loop in the field, deploying NEXIS on two real-world cash-transfer program evaluations that we extend with satellite imagery to capture previously unmeasured environmental factors. In each case, NEXIS surfaces an interpretable and prescriptive set of direct effect modifiers, including landscape features entirely absent from the original survey instruments, translating into concrete guidelines for program adaptation and impact maximization. The evaluation is largely qualitative and supported by similar investigations since no quantitative ground truth exists. Nevertheless, the hypotheses found by NEXIS sound reasonable, warranting further validation on the policy side. We present here the results to highlight the potential for real-world positive impact of our approach.  For extensive quantitative validation on semi-synthetic experiments, we point the reader to Appendix~\ref{sec:semi-synt}.

\begin{lightsandquote}
\vspace{-2pt}
  \noindent\textbf{Results: \textit{From demographic to environmental anti-poverty programs impact heterogeneity.}}
    Among two distinct anti-poverty programs, the impact is invariant to the target subpopulation individual-level demographics, and mainly characterized by environmental factors, coarsely proxied or even fully unmeasured in standalone survey data.
\vspace{-5pt}
\end{lightsandquote}

\subsection{Application 1: Youth Opportunities Program (Uganda)}
\label{sec:uganda_main}

\vspace{-0.1cm}
The Youth Opportunities Program (YOP) \citep{blattman2014generating} is 
an anti-poverty program launched in 2008 in post-conflict Northern Uganda, awarding cash grants to self-organised youth groups, randomized across $439$ groups in $331$ communities. The original analysis reports a positive average effect on productive economic capacity, captured through skilled employment and business asset accumulation. We investigate here the follow-up question of \emph{why} and \emph{how} the impact varies among individuals. Full pipeline, quantitative results, ablations, and limitations are presented in Appendix~\ref{sec:uganda}.

\vspace{-0.1cm}
\textbf{Extending the trial with environmental measurements. \ } YOP environmental heterogeneity investigation was first approached by \citet{jerzak2023image}, who paired the trial with satellite imagery, clustering image embeddings in two groups and comparing the Group Average Treatment Effect (GATE) between them. They found preliminary satellite-based heterogeneity signal, but the analysis was limited by the model abstraction capabilities (binary clustering), outdated satellite images (\textasciitilde{}7 years before the program start and 3 bands only), and a single marginal investigation most likely entangling distinct direct modifiers effects. We re-extracted Landsat-7 multispectral imagery from Google Earth Engine over a 2005--2007 composite immediately preceding the program launch, embedded each tile through the Prithvi-EO geospatial foundation model \citep{szwarcman2025prithvi}, and learned a sparse-autoencoder dictionary on a held-out Uganda national grid disjoint from the trial sites. The resulting candidate pool\footnote{Paired with hand-crafted spectral indices extracted from the same imagery (NDVI, NDWI, MNDWI, NDBI, EVI, BSI).} unions the active dictionary atoms with the structured demographic covariates already in the trial; NEXIS then selects from it conditionally, and each retained atom is labelled post-hoc by contrasting its top and bottom activating tiles with a vision-language model (VLM).

\vspace{-11pt}
 \begin{figure}[h]
  \centering
  \begin{subfigure}[t]{0.495\textwidth}
    \centering
    \caption{\footnotesize Latent direct modifiers of \emph{skilled employment}.}\includegraphics[width=\textwidth]{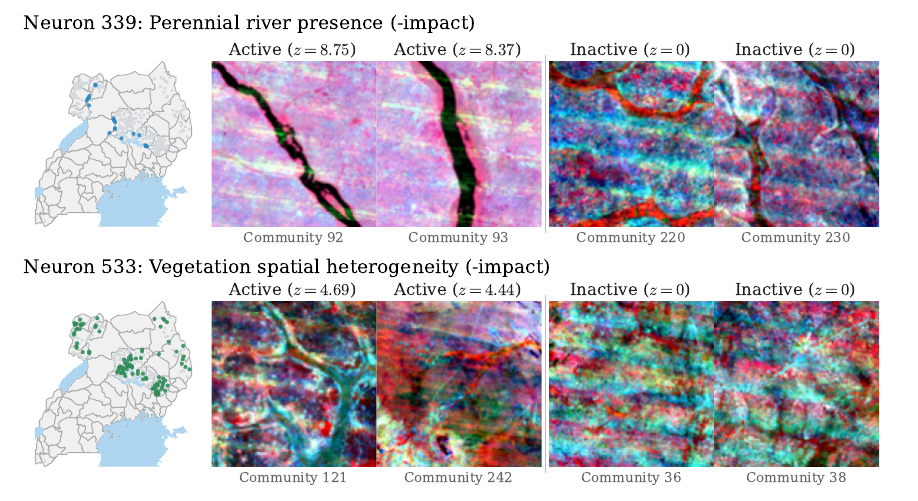}
    \label{fig:uganda_neurons:skilled}
  \end{subfigure}\hfill
  \begin{subfigure}[t]{0.495\textwidth}
    \centering
    \caption{\footnotesize Latent direct modifiers of \emph{business assets}.}
    \includegraphics[width=\textwidth]{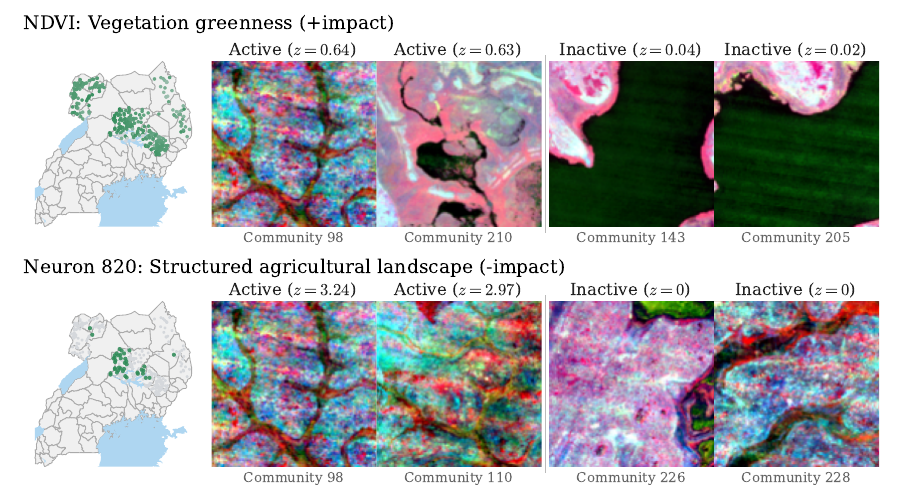}
    \label{fig:uganda_neurons:assets}
  \end{subfigure}
  \vspace{-6mm}
  \caption{\small Satellite-derived discoveries on YOP program impact heterogeneity by NEXIS, with VLM \citep{bai2025qwen2} interpretations by contrasting. Each row refers to a distinct discovery, pairing the Uganda map of such feature activation and the two top- and bottom-activating Landsat tiles in false-color composite (NIR, Green, SWIR).}
  \label{fig:uganda_neurons}
\end{figure}

\vspace{-6pt}
\textbf{Results. \ } 
NEXIS identifies two environmental direct modifiers for each target (see Figure \ref{fig:uganda_neurons}) and three ethnolinguistic-group interactions for the first target only, suggesting invariant impact across the individual-level program demographics (e.g., age, sex, parental education), all very homogeneous by experimental design.
\begin{enumerate}[label=(\alph*),itemsep=0pt, topsep=-3pt, leftmargin=2.2em, ]
    \item For \textit{skilled employment}, the ethnolinguistic-group modifiers — each aggregating districts sharing a dominant language and broadly reflecting a distinct regional geography — point to post-conflict \textcolor{teal!70!white}{market connectivity} as the key moderator: the two dampening groups (Karamojong, Lugbara) correspond to conflict-affected or geographically peripheral areas with weak supply-chain integration at trial time, while the amplifying group (Pallisa) lies outside the conflict core with shorter chains to central markets. The two environmental discoveries follow a complementary outside-option logic \citep{roy1951some}: \textcolor{magenta!70!white}{perennial river presence} and \textcolor{magenta!70!white}{vegetation spatial heterogeneity} pick out landscapes where fishing or bush subsistence remains viable, reducing the pull of a new skilled trade.
    \item For \textit{business assets}, regions with higher NDVI, i.e., greater \textcolor{teal!70!white}{vegetation greenness}, exhibit higher program impact: more productive baseline land compounds whatever inputs the grant funds. \textcolor{magenta!70!white}{Structured agricultural landscape}, by contrast, dampens returns, plausibly reflecting crowding-out by incumbent commercial-scale agriculture ~\citep{crepon2013labor, egger2022general}.
\end{enumerate}
 None of the environmental modifiers was within reach of the prior approach; they translate into concrete economic theories and adaptations for the next iteration of the program. A naive marginal screen on the same environment representation returns several dozen correlated "discoveries" per outcome, mostly correlated proxies of these few direct modifiers: a real-world instance of the experimental power paradox discussed in Section~\ref{sec:power_paradox}.

\subsection{Application 2: Livelihood Empowerment Against Poverty 1000 programme (Ghana)}
\label{sec:ghana_main}
 
\looseness=-1The Livelihood Empowerment Against Poverty (LEAP) 1000 programme \citep{GhanaLEAP1000EvaluationTeam2018} is an anti-poverty program launched in Ghana in 2015 to target early-childhood poverty and stunted nutrition, providing bimonthly cash transfers to extremely poor households with newborns.
So far it has been evaluated on average via a regression discontinuity design 
with difference-in-differences estimation (2015--2017), covering $2{,}331$ households across $162$ communities in North and Upper East regions.
We extend here its analysis to impact heterogeneity, focusing on expenditure effect and complementing $24$ survey covariates with novel environmental information from
satellite imagery. For full results, pipeline details and limitations see Appendix~\ref{sec:ghana}.
 
\looseness=-1\textbf{Extending the trial with environmental measurements. \ }
We extract Landsat-8 imagery for 2015 from Google Earth Engine and embed each community tile through Prithvi-EO. We then train a SAE on a held-out Ghana national grid, and pool the active landscape atoms with the spectral indices and the $24$ survey covariates; NEXIS then selects from this pool conditionally, and retained atoms are labelled post-hoc by contrasting their top and bottom activating tiles with a large VLM \citep{bai2025qwen2}.

\begin{figure}[h]
  \vspace{-2mm}
  \centering
  \includegraphics[width=\textwidth]{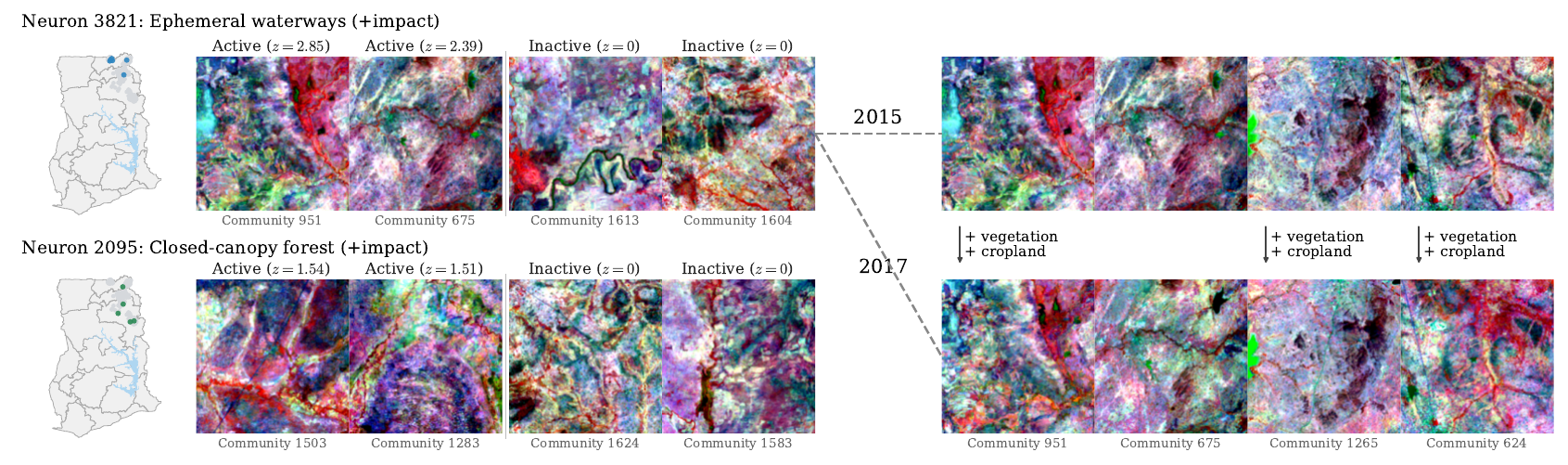}
  \vspace{-5mm}
  \caption{\small \textit{Left}: Satellite-derived discoveries on LEAP1000 programme impact heterogeneity by NEXIS, with VLM \citep{bai2025qwen2} interpretations by contrasting top- and bottom-activating Landsat tiles in false-color composite. \textit{Right}: Ephemeral waterway-active communities temporal evolution in land-use showing cropland extension.
  }
  \label{fig:ghana_neurons}
  \vspace{-2mm}
\end{figure}
 
\looseness=-1\textbf{Results. \ }
NEXIS identifies two satellite-derived direct modifiers interpreted as \textcolor{teal!70!white}{ephemeral} \textcolor{teal!70!white}{waterways} and \textcolor{teal!70!white}{closed-canopy forest} (Figure~\ref{fig:ghana_neurons}), and no demographic modifiers. In the small number of communities endowed with seasonal water corridors, the estimated program effect is several times larger than the overall average; a similarly amplified pattern appears for the rare forest-patch communities in this predominantly arid Savannah region. Our interpretation is \textit{complementarity}-based \citep{de2008returns, prifti2020heterogeneous}: water-adjacent smallholders can direct transfers into irrigation-season agriculture, while forest-edge households can layer cash onto existing non-timber livelihood strategies unavailable elsewhere. We further support the former hypothesis by tracking cropland expansion and denser riparian vegetation 
from 2015 to 2017 from satellite imagery VLM interpretations. 
The absence of demographic discoveries is itself another informative finding suggesting invariant impact across household behavior within the target sub-population, already very homogeneous by program design.

\vspace{-2mm}
\section{Conclusion}
\label{sec:conclusion}
\vspace{-2mm}

\looseness=-1 Identifying causal and interpretable HTE characterizations is a crucial milestone to design trustworthy policies and optimize interventions accordingly, from economics to medicine. In this work, we show that this target is finally within reach thanks to more extensive measurements and better model-based representations, introducing the first algorithm for the task. It represents a paradigm shift from static HTE estimation on survey data to extensive exploration among different and unstructured modalities. Our first deployment on two anti-poverty programs already shows novel evidence for environmental impact heterogeneity explanations previously ignored, leading to practical guidelines and theories for the next iterations of the programs.
Overall, the main limitations lie in our reliance on assumptions, particularly concerning measurement sufficiency, representation alignment, and statistical-test validity. Measurement sufficiency cannot be tested directly without auxiliary experiments, representation alignment is only testable with supervision, and CATE-equivalence testing needs to be well-specified and data-efficient at the same time. If any of these assumptions is unsatisfied, our method loses any causal interpretation guarantee, which is why any result should be carefully interpreted and experimentally validated without blind trust on these assumptions.

\subsubsection*{Acknowledgements}
Riccardo Cadei is supported by a Google Research Scholar Award and a Google-initiated gift to Francesco Locatello. Frank Otchere, Nyasha Tirivayi and Gustavo Angeles Tagliaferro are generously funded by the Swedish International Development Cooperation Agency (Sida) as UNICEF Innocenti researchers.
We gratefully acknowledge the Government of Ghana’s Ministry of Gender, Children and Social Protection (MoGCSP), the LEAP Management Secretariat (LMS), the UNICEF Ghana country office, the Institute of Statistical, Social and Economic Research (ISSER), and the Navrongo Health Research Centre (NHRC) for their vital support in implementing and coordinating the evaluation on the LEAP 1000 program, which was generously funded by the United States Agency for International Development (USAID).

\bibliographystyle{unsrtnat}
\bibliography{refs}

\clearpage
\appendix

\part*{Appendix}
\input{appendix}   

\end{document}

%% file: appendix.tex
\addcontentsline{toc}{part}{Appendix}

\etocsettocstyle{\section*{Table of contents}}{}
{
\setlength{\parskip}{0pt}
\localtableofcontents
}

\newpage
\section{Proof of Theorem~\ref{thm:nexis_consistency}}
\label{sec:proofs}

We first formally introduce the two background assumptions used in the proof but not explicitly stated in the main text, then restate Theorem~\ref{thm:nexis_consistency} for self-containedness.

\begin{theorembox}
\begin{assumption}[Mean faithfulness]
\label{ass:faithfulness}
The joint distribution of $(\bm W, \bm Z)$ is faithful to the modifier taxonomy of Section~\ref{sec:hte_complex} (illustrated in Figure~\ref{fig:causal_model}): every conditional independence holding in the distribution corresponds to a \textit{d}-separation property of the taxonomy. Furthermore, conditional dependences involving $\tau$ are visible at the conditional-mean level, i.e.,
\begin{equation}
\label{eq:mean_faithfulness}
    \tau \nindep V \mid U \implies \mathbb{P}\!\left(\E[\tau \mid U, V] \neq \E[\tau \mid U]\right) > 0,
\end{equation}
for any random variables $U, V$.
\end{assumption}
\end{theorembox}

The first part is the standard faithfulness postulate, restricted to the joint of the four modifier classes and the dictionary; it is what aligns the conditional-independence calculus on $(\bm W, \bm Z)$ with the structural reading provided by the modifier taxonomy of Section~\ref{sec:hte_complex}. The full modifier vector $\bm W$ enters the clause, not just the direct modifiers $\bm W^{\mathrm{dir}}$, because paths between modifier classes and $\bm Z$-coordinates can route through any modifier as an intermediate or collider node, and faithfulness operates on the joint. The second part is a weaker, mean-level postulate on $\tau$: it rules out Simpson-style cancellations where a genuine conditional dependence of $\tau$ on some variable washes out exactly at the conditional-mean level. It is strictly weaker than full faithfulness on the joint $(\tau, \bm W, \bm Z)$ \citep{spirtes2000causation}, since it constrains only the first moment of $\tau$, which is the only moment our analysis manipulates. 

\begin{theorembox}
\begin{assumption}[Test validity and consistency]
\label{ass:test}
For any fixed $S \subseteq [m]$ and $j \in [m] \setminus S$:
\begin{enumerate}[label=(\alph*), leftmargin=2em, itemsep=0pt]
    \item under $H_0(j \mid S)$ (Equation~\ref{eq:cate_ci_null}), the $p$-value $p_j(S)$ is asymptotically uniform on $[0,1]$;
    \item under any fixed alternative to $H_0(j \mid S)$, $\mathbb{P}(p_j(S) < t) \to 1$ as $n \to \infty$ for every $t > 0$.
\end{enumerate}
\end{assumption}
\end{theorembox}

The assumption is agnostic to the specific test instantiation. By default we adopt a linear treatment-interaction test, which yields a uniform $p$-value under the null and consistent rejection under linear-in-$\bm Z$ alternatives; on our benchmarks it reaches Assumption~\ref{ass:test}(b) at substantially smaller sample sizes than fully nonparametric variants, since SAE codes are designed to fire on distinct concepts and the residual interaction in $Y$ is well-approximated linearly in the active codes. For settings with strong nonlinearity in the residual interaction, the linear test can be replaced with a doubly-robust GCM-style residual test \citep{shah2020hardness}, which satisfies Assumption~\ref{ass:test} under weaker structural conditions. See Appendix~\ref{sec:method-details} for both constructions, and Appendix~\ref{sec:semi-synt} for their ablations.

\begin{theorembox}
\begin{theorem*}[Causal Identification]
Given a randomized experiment $\{(\bm{z}_i, t_i, y_i)\}_{i=1}^{n}$, under Principal Alignment, mean-faithfulness, and test validity, NEXIS' outcome $\widehat{\mathcal{S}}_n$ satisfies:
\begin{equation*}
\liminf_n \mathbb{P}(\widehat{\mathcal{S}}_n = \mathcal{S}^\star) \ge 1 - \alpha;
\end{equation*}
where, additionally assuming Measurement and Representation Sufficiency:
\begin{equation*}
    \tau^{do}(\bm W^{\mathrm{dir}}) = \tau(\bm W^{\mathrm{dir}}) \;=\; \E[\tau \mid \bm Z^{\mathcal S^{\star}}] \quad \text{a.s.}
\end{equation*}
\end{theorem*}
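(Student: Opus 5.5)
I will prove the two displays separately. The population statement comes first; the finite-sample selection argument comes second.

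\textbf{Step 1 (population characterization of $\mathcal S^\star$).} The claim is that $\mathcal S^\star$ is the unique minimal set $S\subseteq[m]$ with $\E[\tau\mid\bm Z^{S}]=\E[\tau\mid\bm Z]$ a.s., i.e.\ the mean-level Markov blanket of $\tau$ within the dictionary.

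For sufficiency, Principal Alignment gives $W^{\mathrm{dir},k}\indep \bm Z^{[m]\setminus\{j_k\}}\mid Z^{j_k}$ for each $k$. Since $\bm W^{\mathrm{dir}}$ $d$-separates every pre-treatment variable from $\tau$, the graph reads $\bm Z\to\bm W^{\mathrm{dir}}\to\tau$ at the mean level. Applying the tower property,
\[
\E[\tau\mid\bm Z]=\E\bigl[\E[\tau\mid\bm W^{\mathrm{dir}}]\mid\bm Z\bigr].
\]
I then need the law of $\bm W^{\mathrm{dir}}$ given $\bm Z$ to depend only on $\bm Z^{\mathcal S^\star}$. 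This is the delicate point: coordinatewise alignment does not directly give joint alignment. I will obtain the joint statement from faithfulness. The $d$-separation structure implied by Principal Alignment, with each $Z^{j_k}$ screening off $W^{\mathrm{dir},k}$ from all other coordinates (including $Z^{j_{k'}}$), yields $\bm W^{\mathrm{dir}}\indep\bm Z^{[m]\setminus\mathcal S^\star}\mid\bm Z^{\mathcal S^\star}$.

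For minimality, take $j_k\in\mathcal S^\star$. Mean faithfulness (Assumption~\ref{ass:faithfulness}) converts the dependence $\tau\nindep Z^{j_k}\mid\bm Z^{\mathcal S^\star\setminus\{j_k\}}$ into a mean-level difference. That dependence holds because $W^{\mathrm{dir},k}$ is an active parent of $\tau$ whose only channel into $\bm Z$ passes through $Z^{j_k}$. Conversely, for any $j\notin\mathcal S^\star$ and any $S\supseteq\mathcal S^\star$, $H_0(j\mid S)$ holds by sufficiency. For $j\in\mathcal S^\star$ and any $S\not\ni j$, $H_0(j\mid S)$ fails by the same faithfulness argument.

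\textbf{Step 2 (selection consistency).} I will follow the IAMB argument of \citet{tsamardinos2003algorithms}. There are finitely many pairs $(j,S)$, so Assumption~\ref{ass:test}(b) implies that with probability tending to one every false null among them is rejected at every Bonferroni threshold, since these thresholds are bounded below by $\alpha/m$.

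On this event the forward step always adds some $j$ with a false null while $\mathcal S^\star\not\subseteq S$, because any missing $j_k$ has a false null. The loop therefore cannot stop before $\mathcal S^\star\subseteq S$. The loop terminates because each round either changes $S$ or stops, and there are finitely many subsets. I still need to rule out cycling, which I would do by a monotonicity argument: coordinates of $\mathcal S^\star$ are never pruned, since their null given $S\setminus\{j\}$ is false.

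Once $\mathcal S^\star\subseteq S$, any spurious coordinate has a true null and is pruned, unless its $p$-value falls below the threshold. Likewise, the forward step would add a spurious $j$ only if $p_j(S)\le\alpha/|\bar S|$. By asymptotic uniformity (Assumption~\ref{ass:test}(a)) and a union bound over the candidates tested at the final stage, the probability of any such false inclusion is asymptotically at most $\alpha$. Hence
\[
\liminf_n\mathbb P(\widehat{\mathcal S}_n=\mathcal S^\star)\ge1-\alpha.
\]

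\textbf{Step 3 (causal identification).} Chaining Assumptions~\ref{ass:measurement_sufficiency} and~\ref{ass:representation_sufficiency} with Step 1 gives
\[
\E[\tau\mid\bm Z^{\mathcal S^\star}]=\E[\tau\mid\bm Z]=\E[\tau\mid\bm X]=\E[\tau\mid\bm W^{\mathrm{dir}}]=\tau(\bm W^{\mathrm{dir}}).
\]
For $\tau^{do}=\tau$, note that $\bm W^{\mathrm{dir}}$ consists of the parents of $\tau$ in the taxonomy. Its only other ancestors are $\bm W^{\mathrm{ind}}$, which reach $\tau$ only through $\bm W^{\mathrm{dir}}$, so no backdoor path from $\bm W^{\mathrm{dir}}$ to $\tau$ remains open. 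Randomization of $T$ removes confounding with the potential outcomes, and the backdoor criterion with the empty adjustment set yields $\E[\tau\mid do(\bm W^{\mathrm{dir}}=\bm w)]=\E[\tau\mid\bm W^{\mathrm{dir}}=\bm w]$.

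\textbf{Main obstacle.} The hardest step is the joint sufficiency in Step 1, passing from the coordinatewise Principal Alignment to $\bm W^{\mathrm{dir}}\indep\bm Z^{[m]\setminus\mathcal S^\star}\mid\bm Z^{\mathcal S^\star}$. I will first try to derive it from faithfulness and the graphical taxonomy. If that fails, I will state it as the intended reading of Assumption~\ref{ass:principal_alignment}.

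A secondary difficulty is that the error control in Step 2 is only at the final stage. Bounding it by $\alpha$ needs either the adaptivity of the selected $S$ to be asymptotically negligible, which holds on the high-probability event that $S$ follows a deterministic path, or a union over all tests performed.
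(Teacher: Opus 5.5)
Your three steps are the paper's three steps: Lemma~\ref{lem:mb_structure}, an IAMB-style recall/precision argument, then the sufficiency chain plus back-door. The point you call the main obstacle is closed exactly as in the paper, so no fallback is needed. Weak union turns each alignment statement into $W^{\mathrm{dir},k}\indep\bm Z^{[m]\setminus\mathcal S^\star}\mid\bm Z^{\mathcal S^\star}$. Composition, which $d$-separation satisfies and faithful distributions therefore inherit, then assembles the $r$ statements into the joint one.

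The genuine gaps are in Step~2, and the paper's own argument does not close them either. First, termination. ``Finitely many subsets'' does not exclude revisiting a state, and never pruning principal coordinates does not exclude cycling among non-principal ones (the paper's ``strictly grows'' ignores pruning in the same way). Also, your event (every false null rejected) does not force the $\arg\min$ to be a false-null coordinate, since a true-null $p$-value may be smaller. A repair is to add to the event that, for each of the finitely many $S$, every false-null $p_j(S)$ lies below every true-null $p_{j'}(S)$. This has probability tending to one, since the former tend to $0$ and the latter are asymptotically uniform. Then, before recall:
\begin{itemize}
\item each forward step admits a false-null coordinate, and only true nulls are ever pruned;
\item the admitted coordinate survives its own backward pass, because by the tower property it stays a false null after such prunings;
\item $V(S):=\mathrm{Var}(\E[\tau\mid\bm Z^{S}])$ strictly increases at each admission and is unchanged at each pruning.
\end{itemize}
Hence no state repeats before $\mathcal S^\star\subseteq S$.

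Second, you are right that the final-stage union bound is the weak point, but neither of your remedies yields $\alpha$. The coordinates and conditioning sets tested at termination are chosen by the data, so Assumption~\ref{ass:test}(a), which concerns a fixed pair $(j,S)$, does not apply to them. Conditioning on $\widehat{\mathcal S}_n$, as the paper does, does not help: $j'\in\widehat{\mathcal S}_n$ already forces $p_{j'}(\widehat{\mathcal S}_n\setminus\{j'\})\le\alpha/s$. A deterministic path removes the selection problem but not the counting problem, because there are two separate error sources. One is non-principal coordinates admitted before recall (as false nulls) that must be pruned once they become true nulls. The other is a post-recall forward admission of a true-null coordinate. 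Each costs up to about $\alpha$, and a union over all tests performed costs more.

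Concretely, take:
\begin{itemize}
\item independent binary $W^1,W^2$ with principals $Z^1=W^1+e_1$ and $Z^2=W^2+e_2$;
\item a coordinate $Z^3=Z^1+Z^2+e_3$ (Principal Alignment holds);
\item $m-3$ independent noise coordinates.
\end{itemize}
For small $\mathrm{Var}(e_3)$ and no prognostic effects, $Z^3$ is the strongest marginal modifier under the linear test and enters first. When the second principal enters, $Z^3$ has become a true null and survives pruning with probability $\approx\alpha/3$. If it is pruned, the next forward step admits a noise coordinate with probability $\approx 1-e^{-\alpha}$ (for large $m$ and asymptotically independent null $p$-values). $Z^3$ cannot re-enter, since its $p$-value exceeds $\alpha/3>\alpha/(m-2)$. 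The limiting failure probability is about $\alpha/3+(1-\alpha/3)(1-e^{-\alpha})\approx 0.065$ at $\alpha=0.05$.

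So $1-\alpha$ is not reachable along this route. What the decomposition supports is a bound $1-c\,\alpha$ with some $c>1$ when the pre-recall path is asymptotically deterministic, or consistency with a slowly vanishing $\alpha_n$, as the paper remarks after its proof.
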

\end{theorembox}

The proof proceeds in three steps:
\begin{enumerate}[label=\roman*., leftmargin=2em, itemsep=2pt, topsep=2pt]
    \item \textbf{Markov-blanket formulation:} Under Principal Alignment and mean-faithfulness, $\mathcal{S}^\star$ is a Markov blanket of the CATE on the learned dictionary $[m]$: any superset $S \supseteq \mathcal{S}^\star$ is sufficient, and every coordinate $j_k \in \mathcal{S}^\star$ is non-redundant (Lemma~\ref{lem:mb_structure}). \emph{Measurement and Representation Sufficiency are not invoked here:} the lemma is a structural fact about the dictionary alone.
    \item \textbf{Selection Consistency:} Combined with test validity, the lemma reduces NEXIS to forward--backward Markov-blanket discovery on the CATE over $\bm Z$, for which a standard IAMB-style argument \citep{tsamardinos2003algorithms,aliferis2010local} delivers asymptotic recall and asymptotic conditional precision, hence selection consistency.
    \item \textbf{Causal HTE Identification:} Measurement and Representation Sufficiency, paired with the definition of $\bm W^{\mathrm{dir}}$ as the direct effect modifiers, upgrade the structural recovery into a causal-identification statement.
\end{enumerate}

\subsection{Markov-blanket formulation}
\label{sec:proofs:mb}

\begin{theorembox}
\begin{lemma}
\label{lem:mb_structure}
Under Principal Alignment and mean-faithfulness, the principal proxies $\mathcal{S}^{\star}$ form a Markov-blanket over the dictionary coordinates $[m]$, i.e.,
\begin{enumerate}[label=(\alph*), leftmargin=2em, itemsep=0pt]
    \item \emph{(Sufficiency)} For every $S \supseteq \mathcal{S}^\star$ (with $S \subseteq[m]$),
    \begin{equation}
    \label{eq:mb_sufficiency}
    \E[\tau \mid \bm Z^{S}] = \E[\tau \mid \bm Z^{\mathcal{S}^\star}] = \E[\tau \mid \bm Z] \quad \text{a.s.}
    \end{equation}
    \item \emph{(Non-redundancy)} For every $S \subsetneq \mathcal{S}^\star$ (with $S \subseteq[m]$) and every $j_k \in \mathcal{S}^\star \setminus S$,
    \begin{equation}
    \label{eq:non_redundancy}
    \mathbb{P}\left(\E[\tau \mid \bm Z^{S \cup \{j_k\}}] \neq \E[\tau \mid \bm Z^{S}]\right)>0
    \end{equation}
\end{enumerate}
\end{lemma}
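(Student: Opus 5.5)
The plan is to route everything through the latent direct modifiers $\bm W^{\mathrm{dir}}$. Write $\bm W^{\mathrm{dir}} = (W^{\mathrm{dir},1},\dots,W^{\mathrm{dir},r})$ with principal proxies $Z^{j_k}$. The first step is to establish the basic structural fact that, in the taxonomy of Figure~\ref{fig:causal_model}, $\bm W^{\mathrm{dir}}$ $d$-separates $\tau$ from $\bm Z$ (equivalently from $\bm X$, of which $\bm Z=\psi(\bm X)$ is a deterministic function). This gives
\begin{equation*}
\E[\tau \mid \bm W^{\mathrm{dir}}, \bm Z^{A}] = \E[\tau \mid \bm W^{\mathrm{dir}}]
\end{equation*}
for any $A \subseteq [m]$. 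Hence, by the tower property, for any $S$,
\begin{equation*}
\E[\tau \mid \bm Z^S] = \E\!\left[\, \E[\tau \mid \bm W^{\mathrm{dir}}] \;\middle|\; \bm Z^S \right].
\end{equation*}
So every CATE over dictionary coordinates reduces to the conditional law of $\bm W^{\mathrm{dir}}$ given those coordinates. Only this structural fact is used; neither Measurement nor Representation Sufficiency is needed.

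\textbf{Sufficiency (a).} Take $S \supseteq \mathcal S^\star$. It suffices to show that the conditional law of $\bm W^{\mathrm{dir}}$ given $\bm Z^S$ equals its law given $\bm Z^{\mathcal S^\star}$, and likewise given $\bm Z$. Principal Alignment (Assumption~\ref{ass:principal_alignment}) gives this coordinatewise: $W^{\mathrm{dir},k} \indep \bm Z^{[m]\setminus\{j_k\}} \mid Z^{j_k}$. By weak union and decomposition, $W^{\mathrm{dir},k} \indep \bm Z^{S\setminus\{j_k\}} \mid Z^{j_k}$ and also $W^{\mathrm{dir},k}\indep \bm Z^{S\setminus \mathcal S^\star}\mid \bm Z^{\mathcal S^\star}$.

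The main obstacle is the step from these marginal statements to the joint statement $\bm W^{\mathrm{dir}} \indep \bm Z^{S\setminus\mathcal S^\star} \mid \bm Z^{\mathcal S^\star}$. Coordinatewise conditional independence does not in general imply joint independence, and the $W^{\mathrm{dir},k}$ may be mutually dependent. I would first try to close this gap with faithfulness (Assumption~\ref{ass:faithfulness}). Suppose the joint independence failed. Then, by faithfulness, there would be an open path from $\bm W^{\mathrm{dir}}$ to some $Z^{j}$, $j\notin\mathcal S^\star$, given $\bm Z^{\mathcal S^\star}$. Such a path necessarily starts at some $W^{\mathrm{dir},k}$, and the same path would then be $d$-connecting in a way that contradicts the per-coordinate independence. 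The delicate case is paths that pass through the other conditioned proxies $Z^{j_{k'}}$ acting as colliders, and I would handle it by a careful case analysis on the graph.

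Once the joint statement holds, the tower identity above yields $\E[\tau\mid \bm Z^S]=\E[\tau\mid\bm Z^{\mathcal S^\star}]$. Taking $S=[m]$ gives the equality with $\E[\tau\mid\bm Z]$.

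\textbf{Non-redundancy (b).} Take $S\subsetneq\mathcal S^\star$ and $j_k\in\mathcal S^\star\setminus S$. By the mean-faithfulness clause~\eqref{eq:mean_faithfulness} with $U=\bm Z^S$ and $V=Z^{j_k}$, it suffices to show $\tau \nindep Z^{j_k}\mid \bm Z^{S}$. I would argue this from the first, graphical part of faithfulness. $W^{\mathrm{dir},k}$ is a parent of $Y$ interacting with $T$, so $\tau$ depends on $W^{\mathrm{dir},k}$. $Z^{j_k}$ is a descendant of $W^{\mathrm{dir},k}$ through $\bm X$, and it is not $d$-separated from it by $\bm Z^S$. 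Indeed, if $\bm Z^S$ blocked that path, then $W^{\mathrm{dir},k}\indep Z^{j_k}\mid \bm Z^S$ would hold. That contradicts Principal Alignment: $Z^{j_k}$ would be uninformative beyond the other coordinates, yet it is the unique coordinate carrying $W^{\mathrm{dir},k}$, so the uniqueness of the injective map $\pi$ would be violated. This leaves the path $\tau \leftarrow W^{\mathrm{dir},k}\to \bm X \to Z^{j_k}$ open given $\bm Z^S$, since $\bm Z^S$ contains no principal proxy of $W^{\mathrm{dir},k}$. Faithfulness then gives the required dependence, and mean-faithfulness converts it into~\eqref{eq:non_redundancy}.
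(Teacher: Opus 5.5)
Your proposal follows the paper's proof essentially step for step. In (a) you use Principal Alignment with weak union to get $W^{\mathrm{dir},k} \indep \bm Z^{S\setminus\mathcal S^\star} \mid \bm Z^{\mathcal S^\star}$ for each $k$, lift this to the joint statement for $\bm W^{\mathrm{dir}}$ via faithfulness, and route $\tau$ through $\bm W^{\mathrm{dir}}$; in (b) you argue that $Z^{j_k}$ stays $d$-connected to $W^{\mathrm{dir},k}$ given $\bm Z^{S}$ and convert the resulting dependence into a conditional-mean difference with the mean-level clause, exactly as the paper does. The step you flag as the main obstacle is where the paper simply invokes the composition property of faithful distributions, and your sketch already closes it without any collider case analysis: your per-$k$ statements share the conditioning set $\bm Z^{\mathcal S^\star}$, so any open path from $\bm W^{\mathrm{dir}}$ to some $Z^j$ would be an open path from a single $W^{\mathrm{dir},k}$, contradicting its $d$-separation.
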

\end{theorembox}

\begin{proof}
\textit{Part (a).} We show that, for any $R \subseteq [m] \setminus \mathcal{S}^\star$,
\begin{equation}
\label{eq:tau_ci_zrest}
\tau \;\indep\; \bm Z^{R} \;\Big|\; \bm Z^{\mathcal{S}^\star},
\end{equation}
from which $\E[\tau \mid \bm Z^{\mathcal{S}^\star \cup R}] = \E[\tau \mid \bm Z^{\mathcal{S}^\star}]$ a.s.\ follows by the mean-level clause of Assumption~\ref{ass:faithfulness}. Equation~\ref{eq:mb_sufficiency} is obtained by taking $R = [m] \setminus \mathcal{S}^\star$ for the equality to $\E[\tau \mid \bm Z]$, and arbitrary $R \subseteq [m] \setminus \mathcal{S}^\star$ for the equality across intermediate supersets.

Fix any $k \in [r]$. Principal Alignment gives
\begin{equation}
\label{eq:pa_recall}
W^{\mathrm{dir},k} \;\indep\; \bm Z^{[m] \setminus \{j_k\}} \;\Big|\; Z^{j_k},
\end{equation}
which, since $[m] \setminus \{j_k\} = (\mathcal{S}^\star \setminus \{j_k\}) \cup ([m] \setminus \mathcal{S}^\star)$, can be rewritten as $W^{\mathrm{dir},k} \indep (\bm Z^{\mathcal{S}^\star \setminus \{j_k\}}, \bm Z^{[m] \setminus \mathcal{S}^\star}) \mid Z^{j_k}$. The weak union axiom of conditional independence (a graphoid property valid for any probability distribution) yields
\begin{equation}
\label{eq:per_k_independence}
W^{\mathrm{dir},k} \;\indep\; \bm Z^{[m] \setminus \mathcal{S}^\star} \;\Big|\; \bm Z^{\mathcal{S}^\star}.
\end{equation}
Equation~\ref{eq:per_k_independence} holds for each $k \in [r]$ separately. By the composition property valid under the faithfulness clause of Assumption~\ref{ass:faithfulness}, these per-$k$ statements combine into the joint
\begin{equation}
\label{eq:joint_wdir_z}
\bm W^{\mathrm{dir}} \;\indep\; \bm Z^{[m] \setminus \mathcal{S}^\star} \;\Big|\; \bm Z^{\mathcal{S}^\star}.
\end{equation}
By the definition of $\bm W^{\mathrm{dir}}$ as the direct effect modifiers (Section~\ref{sec:hte_complex}), $\tau$ is a function only of $\bm W^{\mathrm{dir}}$ and the treatment, so any dependence of $\tau$ on $\bm Z$ must factor through $\bm W^{\mathrm{dir}}$. Equation~\ref{eq:joint_wdir_z} then implies Equation~\ref{eq:tau_ci_zrest}, since $\bm Z^{[m] \setminus \mathcal{S}^\star}$ provides no additional information about $\bm W^{\mathrm{dir}}$ beyond $\bm Z^{\mathcal{S}^\star}$.

\textit{Part (b).} Fix $S \subsetneq \mathcal{S}^\star$ and $j_k \in \mathcal{S}^\star \setminus S$. By the effect-modifier definition (footnote on Equation~\ref{eq:tau_w}), $W^{\mathrm{dir},k}$ is dependent on $\tau$. Principal Alignment singles out $Z^{j_k}$ as the principal proxy of $W^{\mathrm{dir},k}$: any other coordinate of $\bm Z$, including those in $S \subsetneq \mathcal{S}^\star$, fails to encode $W^{\mathrm{dir},k}$ in the same recall-and-precision sense. Consequently, conditioning on $\bm Z^{S}$ alone leaves $\tau$ and $Z^{j_k}$ dependent: $W^{\mathrm{dir},k}$ is not d-separated from $Z^{j_k}$ given $\bm Z^{S}$. Adding $Z^{j_k}$ to the conditioning set removes this dependence (it becomes a sufficient blocker). By the mean-level clause of Assumption~\ref{ass:faithfulness}, the strict change in distributional dependence translates into a strict change in the conditional mean of $\tau$ on a set of positive probability, which is Equation~\ref{eq:non_redundancy}.
\end{proof}

\textit{Reading Lemma~\ref{lem:mb_structure}.} Part (a) says $\mathcal{S}^\star$ is sufficient on the dictionary: it captures all the heterogeneity-relevant information $\bm Z$ carries about $\tau$, and adding non-principal coordinates does not destroy sufficiency. Part (b) says it is non-redundant on the principal axes: removing any $j_k$ from a sufficient set strictly reduces information. Together, these are exactly the structural facts NEXIS exploits via forward growth and backward pruning. Crucially, they hold under Principal Alignment and mean-faithfulness alone, regardless of whether the dictionary $\bm Z$ captures the true heterogeneity content of $\bm X$ and how much heterogeneity is measured in $\bm X$; they are facts about the dictionary's internal geometry around $\mathcal{S}^\star$.

\subsection{Selection consistency}
\label{sec:proofs:recovery}

\begin{proof}[Proof of Equation~\ref{eq:selection_consistency}]
We prove (i) asymptotic recall, $\mathbb{P}(\mathcal{S}^\star \subseteq \widehat{\mathcal{S}}_n) \to 1$, then (ii) precision conditional on recall, $\mathbb{P}(\widehat{\mathcal{S}}_n \not\subseteq \mathcal{S}^\star \mid \mathcal{S}^\star \subseteq \widehat{\mathcal{S}}_n) \le \alpha + o(1)$, then combine.

\medskip
\textit{(i) Recall.} Consider a round of NEXIS with current selection $S$ satisfying $\mathcal{S}^\star \not\subseteq S$, and let $\bar S = [m] \setminus S$. Pick any $j_k \in \mathcal{S}^\star \setminus S$ and let $S^\circ := S \cap \mathcal{S}^\star \subsetneq \mathcal{S}^\star$. Lemma~\ref{lem:mb_structure}(b) applied to $S^\circ$ and $j_k$ gives
\begin{equation}
\mathbb{P}\left(\E[\tau \mid \bm Z^{S^\circ \cup \{j_k\}}] \neq \E[\tau \mid \bm Z^{S^\circ}]\right)>0
\end{equation}
Adding the non-principal coordinates $S \setminus S^\circ$ to both conditioning sets preserves the strict inequality: by Principal Alignment, no non-principal coordinate substitutes for $Z^{j_k}$ as a blocker between $W^{\mathrm{dir},k}$ and $\tau$, so the dependence between $\tau$ and $Z^{j_k}$ given $\bm Z^S$ persists, and by mean-faithfulness it is visible at the conditional-mean level. Hence $H_0(j_k \mid S)$ is false at the population level, and Assumption~\ref{ass:test}(b) yields
\begin{equation}
\mathbb{P}\!\big(p_{j_k}(S) < \alpha / |\bar S|\big) \to 1.
\end{equation}
The forward step admits some $j^\star \in \bar S$ whenever $\min_{j \in \bar S} p_j(S) \le \alpha / |\bar S|$, and the minimum is at most $p_{j_k}(S)$. Hence with probability $\to 1$ the selection strictly grows in any round at which $\mathcal{S}^\star \not\subseteq S$.

It remains to check that $\bm Z^{\mathcal{S}^\star}$ is not removed by the backward step. If $j_k \in \mathcal{S}^\star$ is in $\widehat{\mathcal{S}}_n$ at some round, applying Lemma~\ref{lem:mb_structure}(b) to $S = (\widehat{\mathcal{S}}_n\setminus \{j_k\}) \cap \mathcal{S}^\star$ and the same Principal-Alignment argument extending to the full $\widehat{\mathcal{S}}_n\setminus \{j_k\}$ implies $H_0(j_k \mid \widehat{\mathcal{S}}_n\setminus \{j_k\})$ is false. By Assumption~\ref{ass:test}(b), the backward gate retains $j_k$ with probability $\to 1$.

Asymptotic termination then follows: define the favorable event $\mathcal{E}_n$ as the intersection over all rounds of the events "the forward step grows whenever $\mathcal{S}^\star \not\subseteq \widehat{\mathcal{S}}_n$" and "the backward step retains every $j_k \in \mathcal{S}^\star$". The number of distinct selections is bounded by $2^m$, so this is a finite intersection, and $\mathbb{P}(\mathcal{E}_n) \to 1$. On $\mathcal{E}_n$, the selection grows until $\mathcal{S}^\star \subseteq \widehat{\mathcal{S}}_n$, after which growth may continue (under partial false alternatives) but $\mathcal{S}^\star$ is retained, until a fixed point is reached. Hence $\mathbb{P}(\mathcal{S}^\star \subseteq \widehat{\mathcal{S}}_n) \to 1$.

\medskip
\textit{(ii) Precision conditional on recall.} Let $\mathcal{R}_n := \{\mathcal{S}^\star \subseteq \widehat{\mathcal{S}}_n\}$ and $\mathcal{F}_n := \widehat{\mathcal{S}}_n \setminus \mathcal{S}^\star$. Set $s := |\widehat{\mathcal{S}}_n|$. At termination, every $j \in \widehat{\mathcal{S}}_n$ has passed the final backward check $p_j(\widehat{\mathcal{S}}_n \setminus \{j\}) \le \alpha / s$. On $\mathcal{R}_n$, for any $j' \in \mathcal{F}_n$, $\widehat{\mathcal{S}}_n \setminus \{j'\} \supseteq \mathcal{S}^\star$, so by Lemma~\ref{lem:mb_structure}(a),
\begin{equation}
\E[\tau \mid \bm Z^{\widehat{\mathcal{S}}_n \setminus \{j'\}}] = \E[\tau \mid \bm Z] \quad \text{a.s.,}
\end{equation}
and $H_0(j' \mid \widehat{\mathcal{S}}_n \setminus \{j'\})$ holds at the population level. By Assumption~\ref{ass:test}(a), the $p$-value at this true null is asymptotically uniform, so
\begin{equation}
\mathbb{P}\!\big(p_{j'}(\widehat{\mathcal{S}}_n \setminus \{j'\}) \le \alpha / s \,\big|\, \widehat{\mathcal{S}}_n,\, \mathcal{R}_n\big) \le \alpha / s + o(1).
\end{equation}
A union bound over $j' \in \mathcal{F}_n$ (of size at most $s$) and marginalisation over $\widehat{\mathcal{S}}_n$ yield
\begin{equation}
\mathbb{P}\!\big(\mathcal{F}_n \neq \emptyset \,\big|\, \mathcal{R}_n\big) = \mathbb{P}\!\big(\widehat{\mathcal{S}}_n \not\subseteq \mathcal{S}^\star \,\big|\, \mathcal{R}_n\big) \le \alpha + o(1).
\end{equation}

\medskip
\textit{(iii) Combination.} Putting (i) and (ii) together,
\begin{equation}
\mathbb{P}\!\big(\widehat{\mathcal{S}}_n = \mathcal{S}^\star\big)
\;=\; \mathbb{P}\!\big(\widehat{\mathcal{S}}_n \subseteq \mathcal{S}^\star \,\big|\, \mathcal{R}_n\big) \cdot \mathbb{P}(\mathcal{R}_n)
\;\ge\; (1 - \alpha - o(1)) \cdot \mathbb{P}(\mathcal{R}_n),
\end{equation}
so $\liminf_n \mathbb{P}(\widehat{\mathcal{S}}_n = \mathcal{S}^\star) \ge 1 - \alpha$.
\end{proof}

\textit{Role of the backward step.} Without backward elimination, NEXIS would still satisfy recall, since every $j_k \in \mathcal{S}^\star$ is eventually admitted at some forward step under partial conditioning. But precision would be lost at fixed $\alpha$: a non-principal coordinate $j'$ admitted at some intermediate round, whose conditional null begins to hold only after subsequent additions, would never be re-tested against the full final selection. The backward gate at each round, combined with the termination condition $\widehat{\mathcal{S}}_n= \widehat{\mathcal{S}}_{\mathrm{prev}}$, guarantees that every coordinate present at termination has passed a Bonferroni-corrected test against the full retained set, which is what supplies the precision bound. A vanishing schedule $\alpha_n \to 0$, decaying slowly enough to preserve test power against the fixed alternatives of Lemma~\ref{lem:mb_structure}(b), upgrades Equation~\ref{eq:selection_consistency} to strict consistency $\mathbb{P}(\widehat{\mathcal{S}}_n = \mathcal{S}^\star) \to 1$. We keep $\alpha$ fixed in practice: a vanishing schedule trades a strictly-stronger asymptotic guarantee for tighter Bonferroni gates and reduced power at any given sample size, while $\alpha$ as a user-set false-discovery budget is a more interpretable knob for practitioners.

\subsection{Causal HTE identification}
\label{sec:proofs:causal}

\begin{proof}[Proof of Equation~\ref{eq:causal_id}]
Chaining Lemma~\ref{lem:mb_structure}(a) with Representation and Measurement Sufficiency:
\begin{equation}
\label{eq:sufficiency_chain}
\E[\tau \mid \bm Z^{\mathcal{S}^\star}] \;=\; \E[\tau \mid \bm Z] \;=\; \E[\tau \mid \bm X] \;=\; \E[\tau \mid \bm W^{\mathrm{dir}}] \quad \text{a.s.}
\end{equation}
By definition of $\bm W^{\mathrm{dir}}$ as the direct effect modifiers (Section~\ref{sec:hte_complex}), no modifier class interacts with the treatment $T$ except through $\bm W^{\mathrm{dir}}$ itself, thus affecting $\tau$. Intervening on $\bm W^{\mathrm{dir}}$ severs only its inbound dependencies (from $\bm W^{\mathrm{ind}}$, $\bm W^{\mathrm{cc}}$, and exogenous noise),
and since no other modifier reaches $\tau$ except through $\bm W^{\mathrm{dir}}$ itself, no back-door path from $\bm W^{\mathrm{dir}}$ to $\tau$ remains \citep{pearl2009causality, richardson2013single}. The $\bm W^{\mathrm{dir}} \to \tau$ relation is therefore unconfounded, so
\begin{equation}
\label{eq:do_to_cond}
\E[\tau \mid do(\bm W^{\mathrm{dir}})] = \E[\tau \mid \bm W^{\mathrm{dir}}] \quad \text{a.s.}
\end{equation}
Combining Equations~\ref{eq:sufficiency_chain}--\ref{eq:do_to_cond} yields Equation~\ref{eq:causal_id}.
\end{proof}

\textit{Causal interpretation.} The CATE is, in itself, a causal estimand, defined as the expected treatment effect among units sharing the same profile. What is not generally causal is its \emph{characterization}, e.g., non-direct modifiers (indirect, proxy, common-cause). Indeed, for a generic characterization, an intervention on a non-direct modifier yields a different effect than conditioning on it,  i.e., $\mathbb{E}[\tau|do(\cdot)] \neq \mathbb{E}[\tau|\cdot]$. Theorem~\ref{thm:nexis_consistency} provides a causal characterization on $\bm Z^{\mathcal{S}^\star}$ at the \emph{joint} level: counterfactual comparisons across bundled configurations of latent direct modifiers are identified, which is the natural object for policy decisions targeting compound conditions (e.g., a community defined jointly by landscape, infrastructure, and demographics).

\textit{From joint to marginal identification.}
Marginal interventional effects on a single direct modifier, $\E[\tau \mid do(W^{\mathrm{dir},k} = w^k)]$, are a strictly stronger target: they require averaging the CATE over the \emph{marginal} distribution of the remaining direct modifiers $\bm W^{\mathrm{dir},-k}$, since intervening on $W^{\mathrm{dir},k}$ severs its inbound dependences without altering the marginal of the others. The natural observational candidate $\E[\tau \mid Z^{j_k}]$ instead averages over the \emph{conditional} distribution of $\bm W^{\mathrm{dir},-k}$ given $Z^{j_k}$. The two coincide under either (i) mutually independent direct modifiers, so that conditional and marginal distributions agree, or (ii) no inter-modifier interactions in $\tau$, in which case the integration measure is irrelevant. Absent these structural conditions, $\E[\tau \mid Z^{j_k} = z]$ admits a valid reading as a \emph{subpopulation-specific} marginal effect for units with $Z^{j_k} = z$, which is useful for targeting within the observational distribution but does not licence population-invariant marginal-intervention claims.

\newpage
\section{Method Details}
\label{sec:method-details}

The main text presents NEXIS in its vanilla form (Algorithm~\ref{alg:nexis}): a forward--backward procedure built around a generic CATE equivalence test, gated by a Bonferroni-style multiple-testing correction. Each of these components admits several natural instantiations, and the abstraction is a feature, not a placeholder: any test family satisfying Assumption~\ref{ass:test} plugs in, and the multiple-testing gate can be exchanged for any procedure controlling the relevant error rate. We use this section to lay out the variants we consider, all of which are tested in the ablations of Appendix~\ref{sec:semi-synt:method-ablations}, and to provide a minimal Python implementation. The four axes we cover are (i)~the CATE equivalence test (\S\ref{sec:method-details:test}), (ii)~the multiple-testing correction (\S\ref{sec:method-details:error-control}), (iii)~an optional spectral-gap gate that mitigates residual entanglement in real dictionaries (\S\ref{sec:method-details:rho}), and (iv)~the backward-elimination step (\S\ref{sec:method-details:backward}). The implementation snippet of \S\ref{sec:method-details:code} uses the simplest combination, the linear test with Bonferroni correction, which is also the default we recommend for routine use based on the controlled experiments.

\subsection{CATE-equivalence test}
\label{sec:method-details:test}

NEXIS is fully characterized by its choice of CATE equivalence test: at each forward and backward step, the procedure asks whether a candidate coordinate \(j \in [m]\) carries residual heterogeneity beyond what the current selection \(S \subseteq [m]\setminus\{j\}\) already explains, i.e.\ whether
\begin{equation}
\label{eq:cate_ci_null_recall}
H_0(j \mid S):\quad \E[\tau \mid \bm Z^{S \cup \{j\}}] = \E[\tau \mid \bm Z^{S}] \quad \text{a.s.}
\end{equation}
Theorem~\ref{thm:nexis_consistency} only requires that the resulting \(p\)-value \(p_j(S)\) be asymptotically uniform under the null and consistent against fixed alternatives (Assumption~\ref{ass:test}); any such test plugs in. We consider three concrete instantiations, spanning the parametric--nonparametric spectrum.

\paragraph{Linear treatment-interaction test (default).}
The simplest instantiation fits a linear treatment-interaction working model
\begin{equation}
\label{eq:linear_test}
Y \;=\; \alpha + T \beta_T + \bm Z^S \bm\gamma_S + T \cdot \bm Z^S \bm\delta_S + Z^j \beta_j + T \cdot Z^j \delta_j + \varepsilon,
\end{equation}
and tests \(H_0(j \mid S)\) by a partial \(F\)-test on \((\beta_j, \delta_j)\). It requires no nuisance estimation, no cross-fitting, and runs in milliseconds per test. Under correctly-specified linearity, Assumption~\ref{ass:test} is satisfied. The price is consistency only against alternatives within the linear-in-\(\bm Z\) class: heterogeneity that is genuinely nonlinear in \(Z^j\) given \(\bm Z^S\) (a threshold, a U-shape) can be missed at any sample size. In practice this is rarely the binding concern: when \(\bm Z\) is the output of a sparse autoencoder, individual coordinates are already designed to fire on distinct concepts, so the residual interaction in \(Y\) is well-approximated linearly in the active codes. The controlled CelebA experiments confirm this empirically: at the same \((n, \eta)\), the linear test reaches recall \(0.95\) at roughly \emph{half} the sample size (or effect magnitude) of the more flexible alternatives below (see Figure~\ref{fig:celeba:test} and the discussion in \S\ref{sec:semi-synt:method-ablations}). We adopt it as the default throughout.

\paragraph{Doubly-robust GCM with quadratic nuisance.}
When linearity is unwarranted, we replace Equation~\ref{eq:linear_test} with a doubly-robust pseudo-outcome plus a residual-product test in the spirit of \citet{shah2020hardness}. The R-learner residualization \citep{robinson1988root, nie2021quasi}
\begin{equation}
\label{eq:r_learner}
\widehat\varphi \;=\; \frac{(Y - \widehat m(\bm Z))(T - e)}{e(1-e)},
\end{equation}
with \(e := \mathbb{P}(T = 1)\) known by design and \(\widehat m\) a cross-fitted estimator of \(\E[Y \mid \bm Z]\), yields a pseudo-outcome with \(\E[\widehat\varphi \mid \bm Z] = \E[\tau \mid \bm Z]\). The Generalised Covariance Measure (GCM) then regresses \(\widehat\varphi\) on \(\bm Z^S\) and \(Z^j\) on \(\bm Z^S\), forms the residual product
\begin{equation}
\label{eq:gcm_residual}
R(j \mid S) \;:=\; \big(\widehat\varphi - \widehat\E[\widehat\varphi \mid \bm Z^S]\big) \cdot \big(Z^j - \widehat\E[Z^j \mid \bm Z^S]\big),
\end{equation}
and tests \(\E[R(j\mid S)] = 0\) via the normalized \(z\)-statistic
\begin{equation}
\label{eq:gcm_stat}
T_n(j \mid S) \;=\; \frac{\sqrt{n}\,\bar{R}_n(j \mid S)}{\widehat\sigma_n(j \mid S)}.
\end{equation}
The quadratic variant takes \(\widehat m\) and the residualization regressions to be quadratic in \(\bm Z^S\). This adds curvature without introducing a tuning-heavy nonparametric estimator, and is the natural step up from linear when one suspects mild nonlinearity but wants to keep the model auditable.

\paragraph{Doubly-robust GCM with LightGBM nuisance.}
The fully nonparametric variant uses gradient-boosted trees \citep{ke2017lightgbm} for both nuisance regressions, with \(K\)-fold cross-fitting to ensure independence between the nuisance estimate and the residual at each unit. Under nuisance rates whose product is \(o(n^{-1/2})\), substantially weaker than parametric correctness, \(T_n(j\mid S)\) is asymptotically standard normal under the null and the test is consistent against any fixed conditional-mean alternative. This is the most permissive option and the safest one when the analyst has no prior on the form of the heterogeneity. It is also the most expensive: each test requires fitting two boosting models and the per-iteration cost dominates.

\paragraph{Trade-offs and default.}
The three variants trace the standard parametric--nonparametric trade-off. Linear is fast, easy to implement, and statistically efficient when the true heterogeneity is approximately linear in \(\bm Z\); we find this to be the realistic regime for SAE-coded representations and adopt it as the default. The two GCM variants buy robustness to misspecification at a measurable cost in power, roughly a factor of two in \(n\) or \(\eta\) on the controlled benchmark (Figure~\ref{fig:celeba:test}). They are the right choice when (a)~the analyst has reason to expect nonlinear heterogeneity in the dictionary, or (b)~the dictionary itself is unaudited and the working linearity cannot be defended. Switching from linear to a GCM variant is a one-line change in the implementation of \S\ref{sec:method-details:code}; all other components of NEXIS are unchanged.

\subsection{Error control}
\label{sec:method-details:error-control}

The forward step accepts the most significant candidate, the backward step removes any retained coordinate that fails its own gate. Both steps require a threshold on \(p_j(S)\), and that threshold is what controls the false-discovery behaviour of NEXIS. We consider three options.

\paragraph{No correction.}
The simplest option compares each \(p_j(S)\) directly to \(\alpha\). This makes no adjustment for the fact that the forward step screens \(|\bar S|\) candidates per round, and at large effect sizes several entangled companions of the principals can cross the gate before the backward step prunes them. The semi-synthetic experiments (Figure~\ref{fig:celeba:adjust}) show this concretely: precision recovers only at \(\eta = 3\), \(n = 2000\), against \(\eta = 2\), \(n = 2000\) for the corrected variants, with visibly more residual false discoveries at small \(n\). Recall is essentially unchanged. The unadjusted gate is therefore not a recall-friendly choice in disguise, it simply fails on precision.

\paragraph{FDR via Benjamini--Hochberg.}
The Benjamini--Hochberg procedure \citep{benjamin1995controlling} controls the expected proportion of false discoveries among rejections at level \(\alpha\). At each forward step, candidate \(p\)-values are ordered \(p_{(1)} \le \cdots \le p_{(|\bar S|)}\) and the largest \(k\) such that \(p_{(k)} \le k\alpha/|\bar S|\) is identified; the candidates with \(p\)-value at most \(p_{(k)}\) are eligible to enter \(S\). The forward step of NEXIS admits the most significant candidate, so in practice this collapses to comparing \(\min_j p_j\) against \(\alpha/|\bar S|\) when only one rejection is sought. The backward step is treated symmetrically against \(|S|\). FDR is the natural choice when \(|\mathcal{S}^\star|\) is expected to be moderate to large: it pays a smaller multiplicity tax than FWER as the truth set grows.

\paragraph{FWER via Bonferroni.}
Bonferroni \citep{bonferroni1936teoria} controls the family-wise error rate by comparing each \(p\)-value against the conservative threshold \(\alpha/|\bar S|\) (or \(\alpha/|S|\) in the backward step). It is the most stringent of the three and the one used in the proof of Theorem~\ref{thm:nexis_consistency} to deliver the finite-sample precision bound: the union bound across at most \(|S|\) backward checks at termination is what guarantees \(\mathbb{P}(\widehat{\mathcal{S}}_n \subseteq \mathcal{S}^\star \mid \mathcal{R}_n) \ge 1 - \alpha\).

\paragraph{Trade-offs and default.}
Empirically (Figure~\ref{fig:celeba:adjust}) FDR and FWER are essentially indistinguishable in our setting, which is the regime expected when the truth set is small (\(|\mathcal{S}^\star| = 2\)) and the candidate pool is large: the Benjamini--Hochberg threshold collapses onto the Bonferroni threshold for the leading discoveries. We therefore adopt FWER as the default, both because it is the more conservative of the two and because it underwrites the precision statement of Theorem~\ref{thm:nexis_consistency} as written. In larger truth-set regimes FDR is likely the better operating point, and the swap is again a one-line change.

\subsection{Spectral-gap gate}
\label{sec:method-details:rho}

Principal Alignment (Assumption~\ref{ass:principal_alignment}) asks each direct modifier to be summarized by one dominant coordinate, with no concept-specific signal scattered across the others. This is a population-level idealisation. Real learned dictionaries exhibit \emph{partial} entanglement \citep{chanin2024absorption}: a proxy coordinate \(j' \neq j_k\) correlated with the principal coordinate \(j_k\) of \(W^{\mathrm{dir},k}\) can retain a small residual interaction signal even after \(j_k\) has entered \(S\). At large \(n\), this residual can squeak through the conditional gate without the backward step always catching it, because experimental power amplifies finite-resolution misalignment in exactly the same way it amplifies genuine signals.

\paragraph{The heuristic.}
The fix we adopt is a simple relative-magnitude check on top of the existing \(p\)-value gate. Let
\begin{equation}
\label{eq:rho_gate}
|T_n(j^\star \mid S)| \;\ge\; \rho \cdot \min_{j \in S} |T_n(j \mid S \setminus \{j\})|, \qquad \rho \in (0, 1],
\end{equation}
where \(T_n\) is the test statistic of the chosen conditional-independence test. A candidate \(j^\star\) admitted by the \(p\)-value gate is retained only if its conditional \(t\)-statistic is within a factor \(\rho^{-1}\) of the \emph{weakest} coordinate already in \(S\). The intuition is residual-by-construction: two true direct modifiers compete on comparable magnitudes; a residual proxy of an already-selected principal, by contrast, can only carry a fraction of the principal's signal, and that fraction shrinks with the alignment quality of the dictionary. The gate makes this asymmetry explicit.

\paragraph{Substantive reading.}
Because both \(\sqrt n\) and the variance estimate \(\widehat\sigma_n\) cancel in the ratio 
\begin{equation}
    \frac{|T_n(j^\star\mid S)|}{|T_n(j\mid S\setminus\{j\})|},
\end{equation}
\(\rho\) coincides with a ratio of estimated CATE contrasts: it is the minimum acceptable ratio between a new candidate's estimated direct effect and the weakest direct effect already selected. Its inverse \(\rho^{-1}\) is interpretable as the analyst's prior on how widely the magnitudes of true direct modifiers can spread. \(\rho = 0.5\) accepts a factor-of-two spread, \(\rho = 0.2\) a factor-of-five spread, \(\rho = 0.8\) a tight spread within \(25\%\). This is a real knob, not a tuning artefact: it encodes a substantive claim about the experiment.

\paragraph{Behaviour in the ablations.}
Figure~\ref{fig:celeba:rho} sweeps \(\rho \in \{0, 0.2, 0.5, 0.8\}\), where \(\rho = 0\) disables the gate. The behaviour matches the heuristic exactly. Low \(\rho\) (\(0\) and \(0.2\)) admits correlated companions of the principal coordinates whenever the conditional \(p\)-value gate is loose enough, hurting precision in the large-\(\eta\) regime without any visible recall benefit. High \(\rho\) (\(0.8\)) blocks coordinates whose conditional statistic is comparable but slightly smaller than that of an already-selected principal, occasionally including the second principal itself, and erodes recall. The default \(\rho = 0.5\) sits at the joint optimum, which is consistent with its substantive reading: in the absence of a prior tightening expectation, accepting a factor-of-two spread is a sensible neutral choice.

\paragraph{Practical guidance.}
We default to \(\rho = 0.5\) and recommend a sensitivity sweep over \(\rho \in \{0.2, 0.5, 0.8\}\) on real applications. Stable selection across this range is empirical evidence that the spectral gap is not the binding constraint and that \(\widehat{\mathcal{S}}_n\) is driven by the conditional-independence structure rather than by the gating heuristic. When sensitivity is observed, we report \(\widehat{\mathcal{S}}_n\) at the most conservative \(\rho\) for which the procedure remains stable. Setting \(\rho = 0\) recovers the vanilla NEXIS of Algorithm~\ref{alg:nexis} and is appropriate when the dictionary has been explicitly verified to be near-orthogonal on a held-out grid.

\subsection{Backward elimination}
\label{sec:method-details:backward}

The backward step re-tests every retained coordinate against the rest of the current selection at each round, removing any that has become redundant. Two questions are worth disentangling: whether it is \emph{necessary} for the theoretical guarantee, and whether it is \emph{useful} in practice.

\paragraph{Theoretical role.}
The backward step is the source of the finite-sample precision bound in Theorem~\ref{thm:nexis_consistency}. At termination, every coordinate in \(\widehat{\mathcal{S}}_n\) has passed a Bonferroni-corrected check against the rest of the selection, and a union bound across at most \(|\widehat{\mathcal{S}}_n|\) such checks delivers the precision statement. Without the backward step, a coordinate that passed an early forward gate under partial conditioning, before later additions made it redundant, would persist in \(\widehat{\mathcal{S}}_n\) and the precision claim at fixed \(\alpha\) would not be available.

\paragraph{Empirical role.}
In the controlled experiments of Figure~\ref{fig:celeba:backward}, the forward--backward and forward-only variants are visually indistinguishable across all DGP conditions. With \(|\mathcal{S}^\star| = 2\) and a near-orthogonal \(\bm Z\), the forward pass already returns a pair that is conditionally independent of every remaining coordinate, and the backward gate has nothing to prune. The empirical contribution of the backward step grows with \(|\mathcal{S}^\star|\) and with dictionary entanglement; the present semi-synthetic setting is on the easy end of both axes.

\paragraph{Practical recommendation.}
The backward step can be removed when the truth set is known to be small and the dictionary has been verified to be near-orthogonal: doing so saves a modest amount of compute and does not visibly change the output. We retain it as the default for two reasons. First, it underwrites the precision guarantee of Theorem~\ref{thm:nexis_consistency} as stated. Second, its incremental cost is negligible (one extra round of tests per iteration, against a candidate pool of size \(|S|\) rather than \(|\bar S|\)), while its incremental benefit grows precisely in the regimes where one cannot rule out either a larger truth set or non-trivial entanglement, that is, the regimes encountered in any new applied study.

\subsection{Minimal Python Implementation Snippet}
\label{sec:method-details:code}

We provide a self-contained Python implementation of NEXIS in its default configuration: the linear treatment-interaction test of Equation~\ref{eq:linear_test} with Bonferroni multiple-testing correction, no spectral-gap gate (\(\rho = 0\)), and backward elimination enabled. The implementation relies on \texttt{pandas} \citep{mckinney2011pandas} for tabular operations, \texttt{NumPy} \citep{harris2020array} and \texttt{SciPy} \citep{virtanen2020scipy} for numerical routines, and \texttt{statsmodels} \citep{seabold2010statsmodels} for the OLS partial \(F\)-test. Switching to a GCM variant of \S\ref{sec:method-details:test}, swapping Bonferroni for Benjamini--Hochberg, or enabling the spectral-gap gate of \S\ref{sec:method-details:rho} is a localised modification of the routines below.

\paragraph{Neural Exposure Interaction Search.}
Iterative forward--backward selection until a fixed point is reached. The procedure is parameterized by a generic CATE equivalence test \texttt{cci\_test}, which returns a \(p\)-value per candidate.
\begin{lstlisting}[language=Python]
def NEXIS(T, Y, Z, alpha=0.05, cci_test=cci_test_linear):
    S, S_prev = [], None
    while S != S_prev:
        S_prev = list(S)
        # forward step: admit the most significant candidate under Bonferroni
        bar_S = [c for c in Z.columns if c not in S]
        tests = cci_test(T, Y, Z, S, candidates=bar_S)
        j_star = tests["p_value"].idxmin()
        if tests.loc[j_star, "p_value"] <= alpha / len(bar_S):
            S.append(j_star)
        # backward step: prune any selected coordinate now redundant
        for j in list(S):
            tests = cci_test(T, Y, Z, [c for c in S if c != j], candidates=[j])
            if tests.loc[j, "p_value"] > alpha / len(S):
                S.remove(j)
    return S
\end{lstlisting}

\paragraph{Linear treatment-interaction test (default).}
Fits Equation~\ref{eq:linear_test} via OLS and tests the candidate's main effect and treatment interaction jointly via a partial \(F\)-test.
\begin{lstlisting}[language=Python]
import pandas as pd
import statsmodels.api as sm

def cci_test_linear(T, Y, Z, S, candidates):
    df = Z[S].copy() if S else pd.DataFrame(index=Z.index)
    df["_T"] = T.astype(int)
    for s in S:
        df[f"_T_x_{s}"] = df["_T"] * df[s]
    out = []
    for j in candidates:
        df_j = df.copy()
        df_j[j] = Z[j]
        df_j[f"_T_x_{j}"] = df_j["_T"] * df_j[j]
        model = sm.OLS(Y, sm.add_constant(df_j)).fit()
        f_test = model.f_test([f"{j} = 0", f"_T_x_{j} = 0"])
        out.append((j, float(f_test.pvalue)))
    return pd.DataFrame(out, columns=["candidate", "p_value"]).set_index("candidate")
\end{lstlisting}

\newpage
\section{Semi-synthetic experiments (CelebA)}
\label{sec:semi-synt}

This section reports the controlled empirical validation of NEXIS announced in Section~\ref{sec:repr_learning} and Theorem~\ref{thm:nexis_consistency}. The goal is to verify, on a setting where the ground-truth direct-modifier set $\mathcal{S}^\star$ is known by construction, that NEXIS recovers it under the regime predicted by the theory and that the practical-guidance defaults of Appendix~\ref{sec:method-details} hold up across DGP conditions and design choices. We use CelebA \citep{liu2018large} as a high-dimensional, visually realistic source of pre-treatment observations $\bm X$, embed each image with a foundation model, learn a sparse-autoencoder dictionary on the embeddings, and inject a known treatment-modification structure on top of two fixed binary attributes. Because the ground-truth modifiers are fixed but \emph{not} given to NEXIS, the evaluation tests both Principal Alignment of the SAE dictionary and the conditional-selection behaviour of the algorithm itself.

\subsection{Experimental setup}
\label{sec:semi-synt:setup}

\paragraph{Foundation-model encoder.}
We embed each of the $202{,}599$ CelebA images with \textbf{SigLIP} \citep{zhai2023sigmoid}, a contrastively pretrained ViT, taking the patch-level tokens (729 patches per image, $1{,}152$-dimensional). All embeddings are pre-computed once and cached, decoupling the SAE step from the (expensive) encoder forward pass.

\paragraph{Sparse autoencoder.}
We train a \textbf{TopK SAE} \citep{gao2024scaling} of hidden width $m = 13{,}824$ ($= 12 \times 1{,}152$) on the per-patch SigLIP tokens ($\sim 1.48 \times 10^8$ training vectors per epoch), under the principal-alignment-oriented protocol of \citet{cadei2025causal}. Two sparsity levels are evaluated, $k = 5$ and $k = 20$ active features per image, and for each level we expose two views of the representation:
\begin{itemize}[leftmargin=1.4em, itemsep=1pt, topsep=2pt]
    \item $\bm Z$, the sparse \emph{post}-TopK codes (near-orthogonal, average $L_0 = k$);
    \item $\bm Z_{\mathrm{pre}}$, the dense, continuous \emph{pre}-activations (correlated, no sparsity constraint).
\end{itemize}
The main reported setting is $k = 20$ on $\bm Z$; the alternatives ($k = 5$ on $\bm Z$, and $k = 20$ on $\bm Z_{\mathrm{pre}}$) are reported as ablations in Appendix~\ref{sec:semi-synt:model-ablations}.

\paragraph{Data-generating process.}
We fix two CelebA attributes, $W_1 = $\textsc{Wearing\_Hat} ($\sim 5\%$ prevalence) and $W_2 = $\textsc{Eyeglasses} ($\sim 7\%$ prevalence), as the latent direct effect modifiers. For each unit $i$ we draw \textit{i.i.d.} from  $W_{1} \sim \mathrm{Bernoulli}(\hat p_{W_1})$, $W_{2} \sim \mathrm{Bernoulli}(\hat p_{W_2})$, $T \sim \mathrm{Bernoulli}(0.5)$, and sample without replacement from the CelebA bucket an image $\bm x_i$, matching $(w_{1,i}, w_{2,i})$ and pre-compute its representation $\bm z_i = \psi(\bm x_i)$. The outcome is sampled from: 
\begin{equation}
\label{eq:celeba_dgp}
Y \;=\; \beta_{W_1} W_{1} + \beta_{W_2} W_{2} \;+\; T \cdot \big[\tau_0 + \eta \cdot (\gamma_{W_1} W_{1} + \gamma_{W_2} W_{2})\big] \;+\; \varepsilon,
\end{equation}
with $\tau_0 = 0.5$, $\gamma_{W_1} = +1$, $\gamma_{W_2} = -1$, $\beta_{W_1} = 0.3$, $\beta_{W_2} = -0.2$, $\sigma_\varepsilon = 1$. The scalar $\eta$ scales the heterogeneous component of the treatment effect and is the dial we use to traverse the power frontier.

\paragraph{Ground-truth labels.}
The ground-truth direct-modifier set $\mathcal{S}^\star$ is defined empirically as the SAE neurons most selective for each attribute over the full CelebA dataset. Concretely, for each neuron $j$ and attribute $A \in \{W_1, W_2\}$, we sweep thresholds over $Z_{\mathrm{pre}}^{j}$ and report the best-threshold $F_1(j, A)$; we use $\bm Z_{\mathrm{pre}}$ rather than $\bm Z$ because the continuous pre-activations give a smoother, threshold-free alignment score that better reflects each neuron's intrinsic selectivity. The ground-truth principal coordinate for $A$ is then $\arg\max_j F_1(j, A)$. This labelling is computed once on all $202{,}599$ images (no sampling, no holdout) and held fixed throughout. Under the main setting ($k = 20$), $\mathcal{S}^\star = \{5348, 5537\}$ (dim 5348 $\to W_1$, dim 5537 $\to W_2$); under $k = 5$ the principal coordinates shift to $\{7044, 5732\}$, reflecting the different sparsity structure.

\paragraph{Sweeps and reporting.}
We sweep two axes, holding the other fixed:
\begin{itemize}[leftmargin=1.4em, itemsep=1pt, topsep=2pt]
    \item \emph{Effect-size sweep:} $\eta \in \{1, 2, \ldots, 10\}$, with $n \in \{500, 2000\}$ fixed.
    \item \emph{Sample-size sweep:} $n \in \{50, 100, 200, 350, 500, 750, 1000, 2000, 3500, 5000, 10{,}000\}$, with $\eta \in \{2, 5\}$ fixed.
\end{itemize}
Each $(\eta, n)$ cell is repeated over $50$ random seeds; reported curves are means with $\pm 1.96\,\mathrm{SE}$ shaded bands. Performance is reported as precision, recall, and intersection-over-union (IoU) of $\widehat{\mathcal{S}}_n$ against $\mathcal{S}^\star$. Every figure in this section uses the same $4 \times 3$ layout: rows traverse the four DGP conditions (two $n$-sweeps at $\eta \in \{5, 2\}$, then two $\eta$-sweeps at $n \in \{2000, 500\}$), and the three columns show \textsc{Precision} $\mid$ \textsc{Recall} $\mid$ \textsc{IoU}. Showing all four DGP conditions in a single figure lets the reader read both the power frontier (effect size and sample size) and the design-choice sensitivity simultaneously.

\paragraph{Baselines.}
We compare NEXIS to three marginal coordinate-wise variants that test each $j \in [m]$ for marginal effect modification (treating it as if it were the only modifier in the dictionary): \emph{Marginal Testing} (no multiple-testing correction), \emph{Marginal Testing (FDR)} \citep{benjamin1995controlling}, and \emph{Marginal Testing (FWER)} (Bonferroni). They share NEXIS's per-test statistic and threshold but skip the conditional, sequential structure, the precise design that, by Section~\ref{sec:power_paradox}, is statistically inadequate for direct-modifier identification on a learned representation.

\paragraph{Default NEXIS configuration.}
Unless stated otherwise, NEXIS is run with $\alpha = 0.05$, the linear treatment-interaction test of Equation~\ref{eq:linear_test}, FWER (Bonferroni) correction at each forward and backward step, spectral gap $\rho = 0.5$, and backward elimination enabled. These are the defaults recommended in Appendix~\ref{sec:method-details}.

\subsection{Reference results: NEXIS vs.\ marginal baselines}
\label{sec:semi-synt:reference}

\begin{figure}[h]
  \centering
  \includegraphics[width=0.97\textwidth]{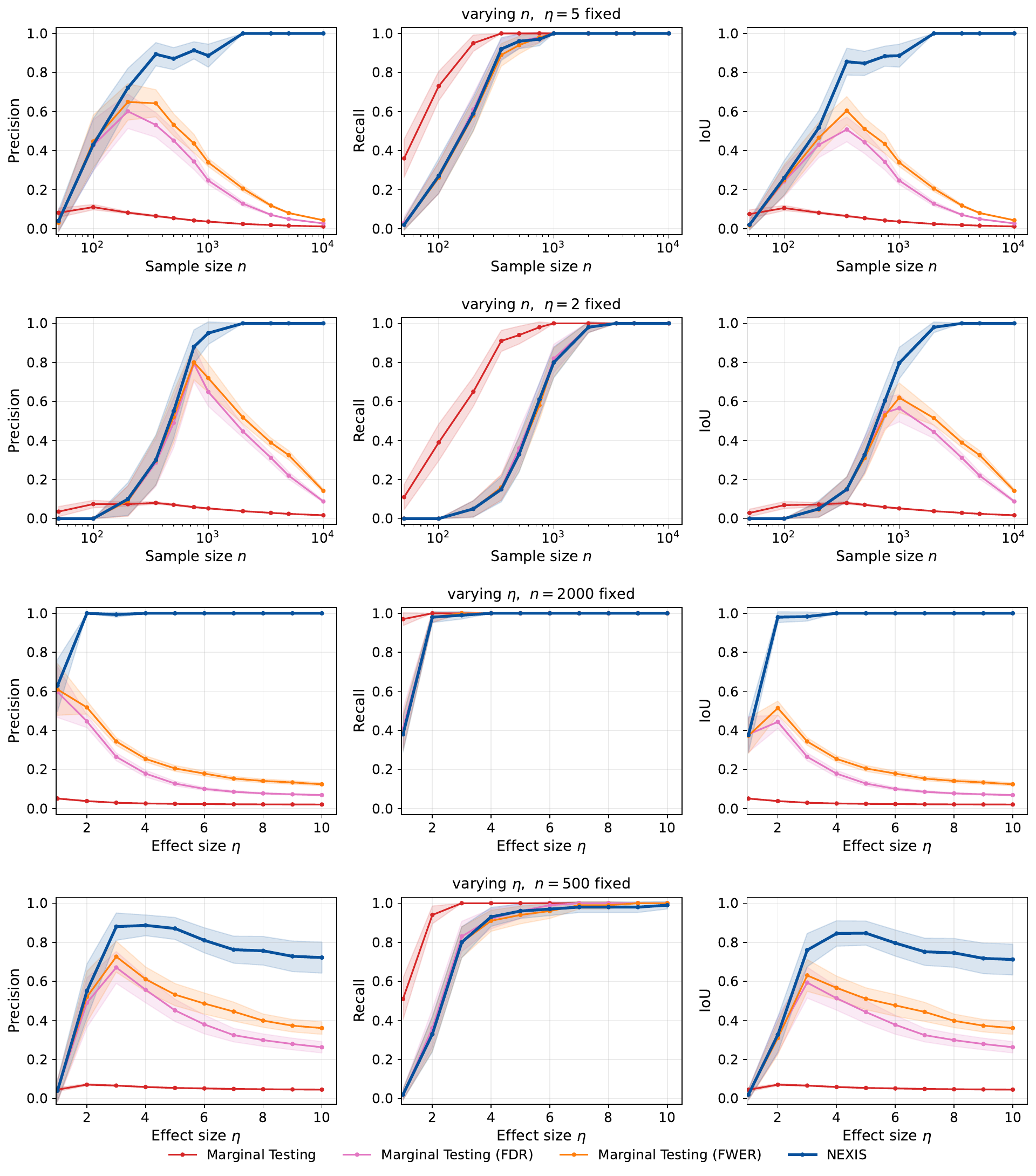}
  \caption{\small \textbf{Reference figure.} NEXIS vs.\ marginal baselines on the main setting ($k = 20$, $\bm Z$, FWER, $\rho = 0.5$, linear test, backward enabled). Rows traverse the four DGP conditions; columns show precision, recall, IoU. NEXIS achieves both high recall and high precision in the high-power regime ($\eta \ge 2$ at $n = 2000$, or $n \ge 500$ at $\eta = 5$), while the marginal baselines achieve recall comparably early but never recover precision: as $\eta$ or $n$ grows, more entangled companions of $\mathcal{S}^\star$ become marginally significant and inflate the false-positive rate.}
  \label{fig:celeba:dgp}
\end{figure}

Figure~\ref{fig:celeba:dgp} reports the reference comparison and serves as the visual anchor for all subsequent ablations: each later figure should be read as a deviation from this one along a single axis. Three observations are worth surfacing.

\textit{(i) NEXIS attains the recovery regime predicted by Theorem~\ref{thm:nexis_consistency}.} Reading along the rows, recall crosses $0.95$ at $\eta = 2$ when $n = 2000$ and at $\eta = 5$ when $n = 500$. Precision tracks recall with a slight lag at the smallest $n$ or $\eta$, then saturates at $1$. The IoU curve, which simultaneously penalises both error directions, is essentially the recall curve once one is past the under-powered regime.

\textit{(ii) The experimental power paradox of Section~\ref{sec:power_paradox} is visible cleanly.} Marginal baselines pass the recall bar comparably early (and Marginal Testing without correction passes it the earliest, by being a strictly looser gate). Their precision, however, never recovers: the unadjusted variant collapses below $0.1$ at moderate $\eta$ or $n$, and the FDR/FWER variants peak around $0.6$--$0.8$ at the smallest informative $n$ before \emph{deteriorating} as $\eta$ or $n$ grows. This is the predicted regime: more power surfaces more entangled companions of $\mathcal{S}^\star$ as marginally heterogeneous, none of which any marginal procedure has the structure to reject.

\textit{(iii) DGP sensitivity is graceful, not catastrophic.} Comparing the $\eta = 5$ and $\eta = 2$ rows, NEXIS shifts its onset by roughly a factor of $4$ in $n$, with no qualitative change in shape. Comparing the $n = 2000$ and $n = 500$ rows, NEXIS shifts its $\eta$ onset by roughly a factor of $2$ and saturates at a slightly lower precision plateau ($\sim 0.75$ vs.\ $\sim 1.0$) at $n = 500$ once $\eta$ is very large, a finite-sample residual companion entering at high SNR, consistent with the relaxed Principal Alignment discussion of Appendix~\ref{sec:method-details:rho}.

\subsection{Model ablations: SAE design choices}
\label{sec:semi-synt:model-ablations}

The next two ablations vary the choice of dictionary, holding the NEXIS configuration at its default. We evaluate two competing SAE design decisions: the TopK sparsity level $k$, and whether to use the sparse post-TopK codes $\bm Z$ or the dense pre-activations $\bm Z_{\mathrm{pre}}$. Both deviate from the main setting along a single axis and are read against Figure~\ref{fig:celeba:dgp}.

\paragraph{Sparsity level: $k = 5$ vs.\ $k = 20$.}
Figure~\ref{fig:celeba:k5} repeats the reference comparison on the $k = 5$ dictionary. Recall thresholds are essentially unchanged ($\eta = 2$ at $n = 2000$; $n = 750$ at $\eta = 5$); precision, however, requires noticeably more power to saturate ($\eta = 2$ at $n = 2000$ in the effect-size sweep; $n = 5000$ at $\eta = 5$ in the sample-size sweep, against $n = 2000$ for the main setting). With only five active features per image, attribute-specific signal is forced to spread across several correlated coordinates, so the conditional selection retains a slightly larger set in which a few non-principal companions persist before the backward gate prunes them. $k = 20$ is the better operating point.

\begin{figure}[h]
  \centering
  \includegraphics[width=0.97\textwidth]{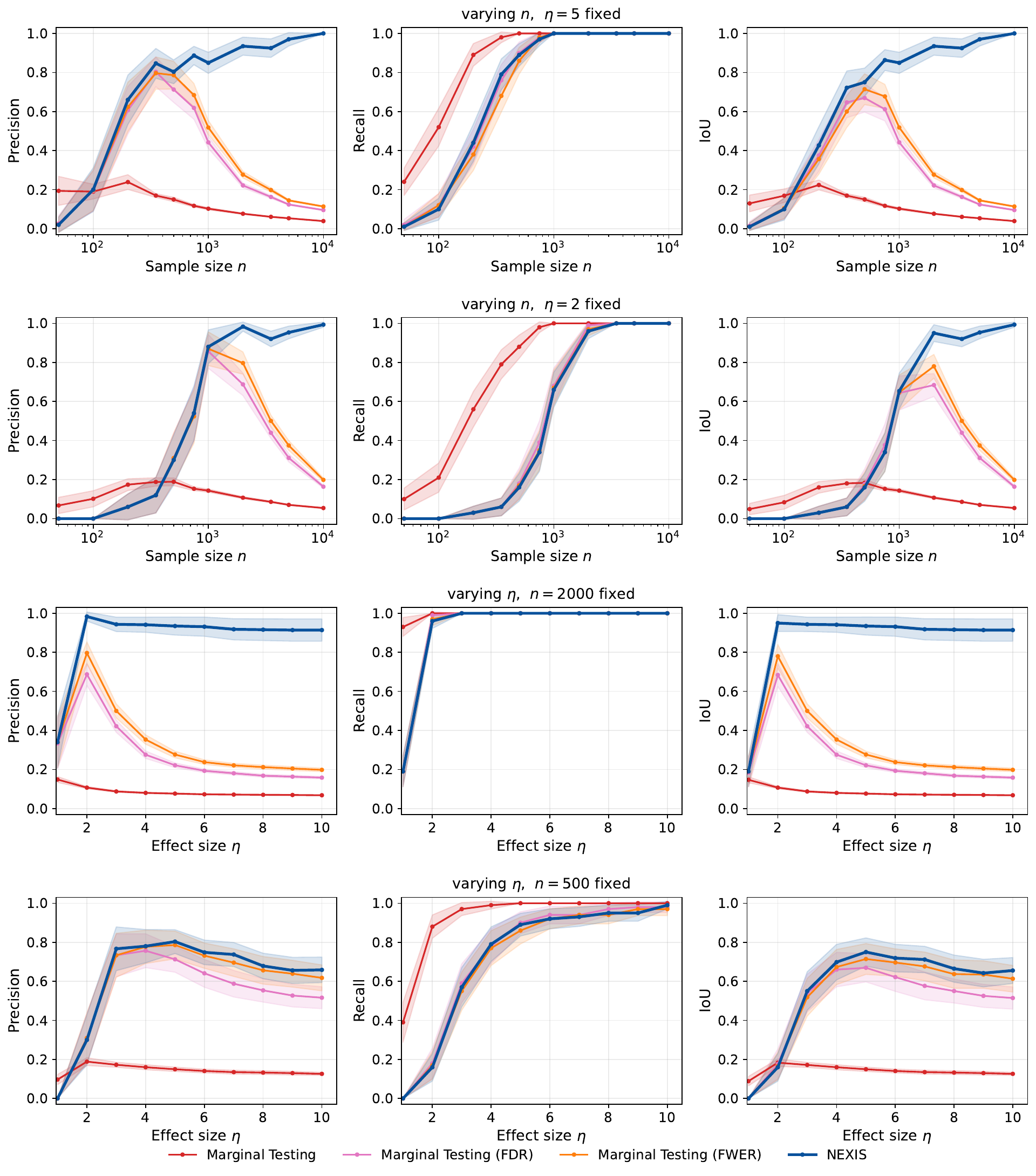}
  \caption{\small \textbf{Ablation: SAE sparsity level $k = 5$.} Same layout as Figure~\ref{fig:celeba:dgp}. Recall thresholds match the main setting; precision saturates more slowly because the lower-$k$ dictionary spreads attribute information across more coordinates. Reference: Figure~\ref{fig:celeba:dgp}.}
  \label{fig:celeba:k5}
\end{figure}

\paragraph{Feature view: $\bm Z$ vs.\ $\bm Z_{\mathrm{pre}}$.}
Figure~\ref{fig:celeba:precode} repeats the reference comparison using the dense pre-activations $\bm Z_{\mathrm{pre}}$ at $k = 20$. Recall transitions slightly later ($n = 750$ at $\eta = 5$, vs.\ $n = 500$ on $\bm Z$); precision saturates at the same plateau but requires modestly more data ($\eta = 3$ at $n = 2000$, vs.\ $\eta = 2$). Both directions are consistent with the geometry: the post-TopK codes $\bm Z$ are near-orthogonal, so conditioning on the already-selected set introduces little noise and the Bonferroni gate remains tight; $\bm Z_{\mathrm{pre}}$ retains continuous correlated pre-activations across the dictionary, which the conditional regression must absorb at a small cost in finite-sample efficiency. Sparse codes $\bm Z$ are the preferable input.

\begin{figure}[h]
  \centering
  \includegraphics[width=0.97\textwidth]{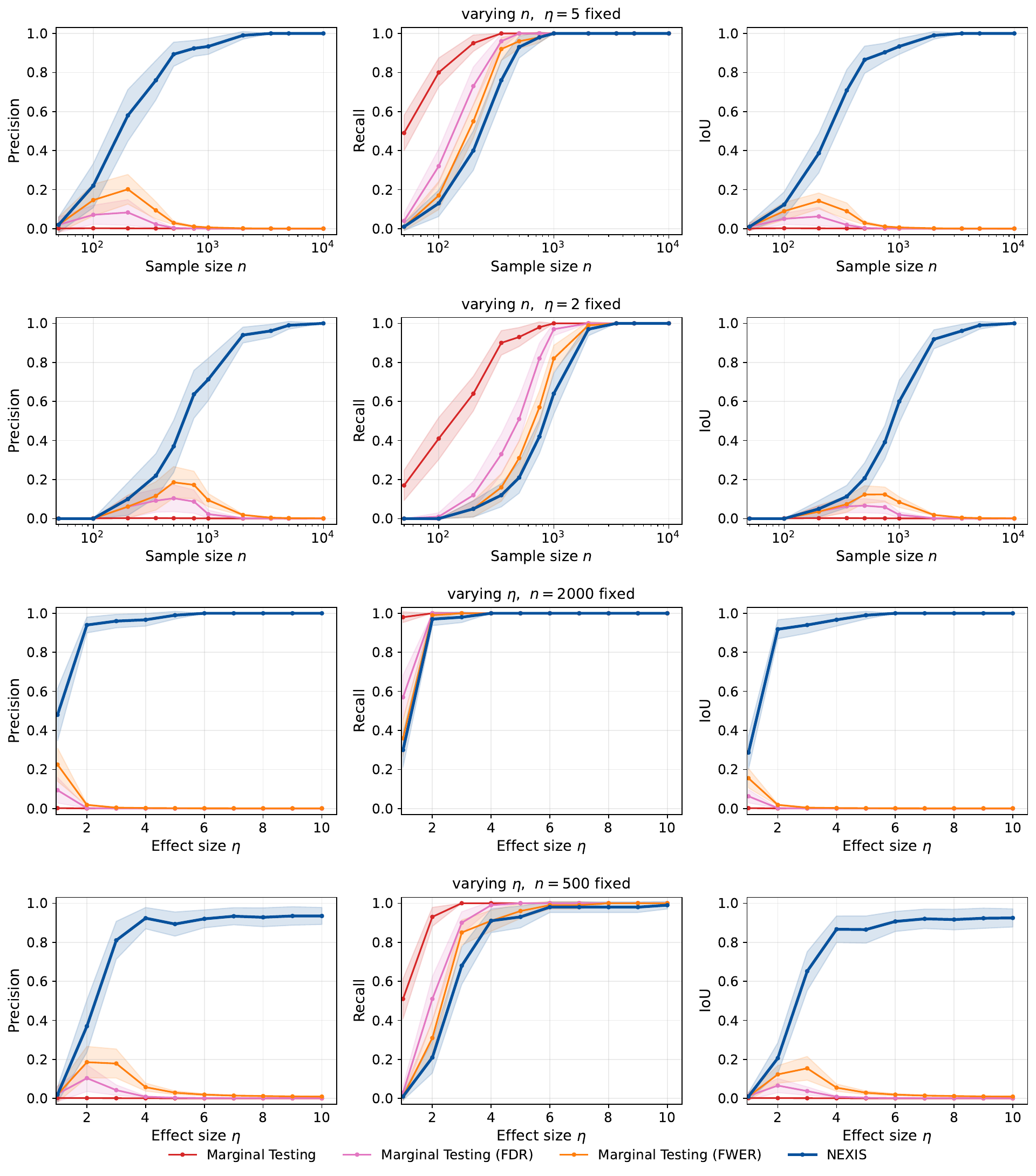}
  \caption{\small \textbf{Ablation: pre-TopK activations $\bm Z_{\mathrm{pre}}$.} Same layout as Figure~\ref{fig:celeba:dgp}, with the SAE feature view replaced by the dense pre-activations $\bm Z_{\mathrm{pre}}$ at $k = 20$. Both NEXIS and the FWER baseline (recomputed on the same view) shift slightly to the right relative to the sparse-code reference; the qualitative ordering is unchanged. Reference: Figure~\ref{fig:celeba:dgp}.}
  \label{fig:celeba:precode}
\end{figure}

\subsection{Method ablations: NEXIS design choices}
\label{sec:semi-synt:method-ablations}

We now hold the dictionary at the main setting ($k = 20$, $\bm Z$) and vary the four design choices internal to NEXIS: the CATE equivalence test, the multiple-testing correction, the spectral-gap parameter $\rho$, and the backward-elimination step. Each ablation deviates from the default along a single axis and is read against Figure~\ref{fig:celeba:dgp}.

\paragraph{CATE equivalence test.}
Figure~\ref{fig:celeba:test} compares the linear treatment-interaction test (default, Equation~\ref{eq:linear_test}) to two doubly-robust GCM variants (Appendix~\ref{sec:method-details:test}) using a quadratic and a LightGBM nuisance regression. The linear test crosses recall $0.95$ at $\eta = 2$ when $n = 500$; both GCM variants require $\eta = 4$ at $n = 2000$ to cross the same threshold, roughly a $2\times$ power penalty in either axis. The reason is straightforward: the heterogeneity in Equation~\ref{eq:celeba_dgp} is by construction \emph{linear} in $W_1, W_2$, and Principal Alignment translates this into a linear interaction in $Z^{j_k}$. The linear test is correctly specified and dominates; the GCM variants pay for unnecessary nonparametric flexibility without recouping it through a more general alternative. The trade-off becomes interesting only when the outcome--feature relation is genuinely nonlinear, which is why we keep the doubly-robust GCM as the default in the applied analyses (where the working linearity of the analyst's model cannot be assumed).

\begin{figure}[h]
  \centering
  \includegraphics[width=0.97\textwidth]{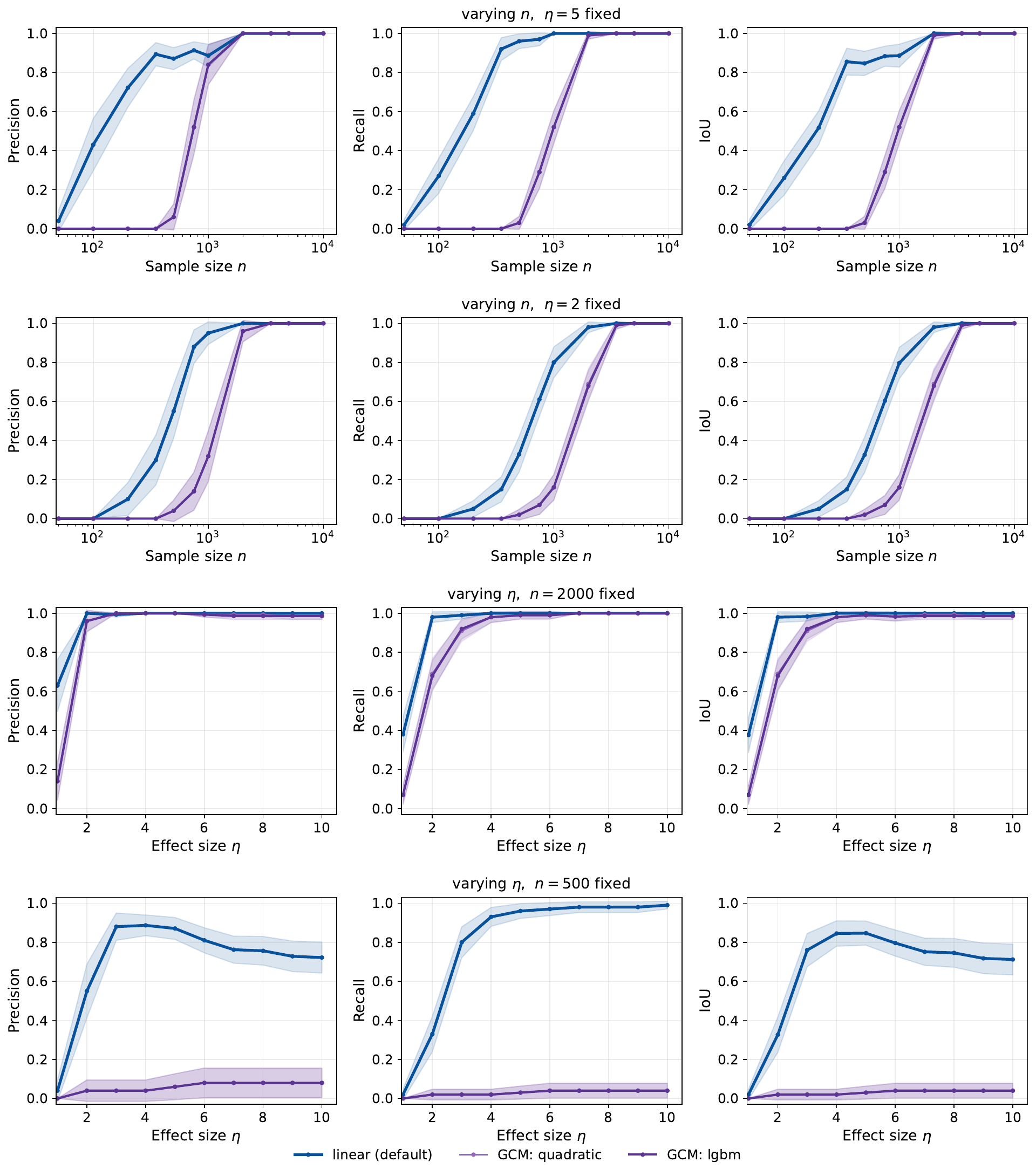}
  \caption{\small \textbf{Ablation: CATE equivalence test.} Linear (default), GCM with quadratic nuisance, and GCM with LightGBM nuisance, on the main setting. The linear test is correctly specified for the linear interaction in Equation~\ref{eq:celeba_dgp} and dominates in this regime. Reference: Figure~\ref{fig:celeba:dgp}.}
  \label{fig:celeba:test}
\end{figure}

\paragraph{Multiple-testing correction.}
Figure~\ref{fig:celeba:adjust} compares the FWER (Bonferroni, default) gate to the FDR (Benjamini--Hochberg) gate and to the unadjusted run. With no correction, precision recovers only at $\eta = 3$, $n = 2000$ (against $\eta = 2$, $n = 2000$ for FWER), and the small-$n$ precision regime shows visibly more residual false discoveries: at large $\eta$ several entangled companions cross the unadjusted gate and are not always pruned in the backward step. FDR and FWER are essentially indistinguishable here, which is the regime expected when the truth set is small ($|\mathcal{S}^\star| = 2$) and the marginal candidate pool is large: the Benjamini--Hochberg threshold collapses onto the Bonferroni threshold for the leading discoveries. We keep FWER as the more conservative default; in larger truth-set regimes FDR may be preferable.

\begin{figure}[h]
  \centering
  \includegraphics[width=0.97\textwidth]{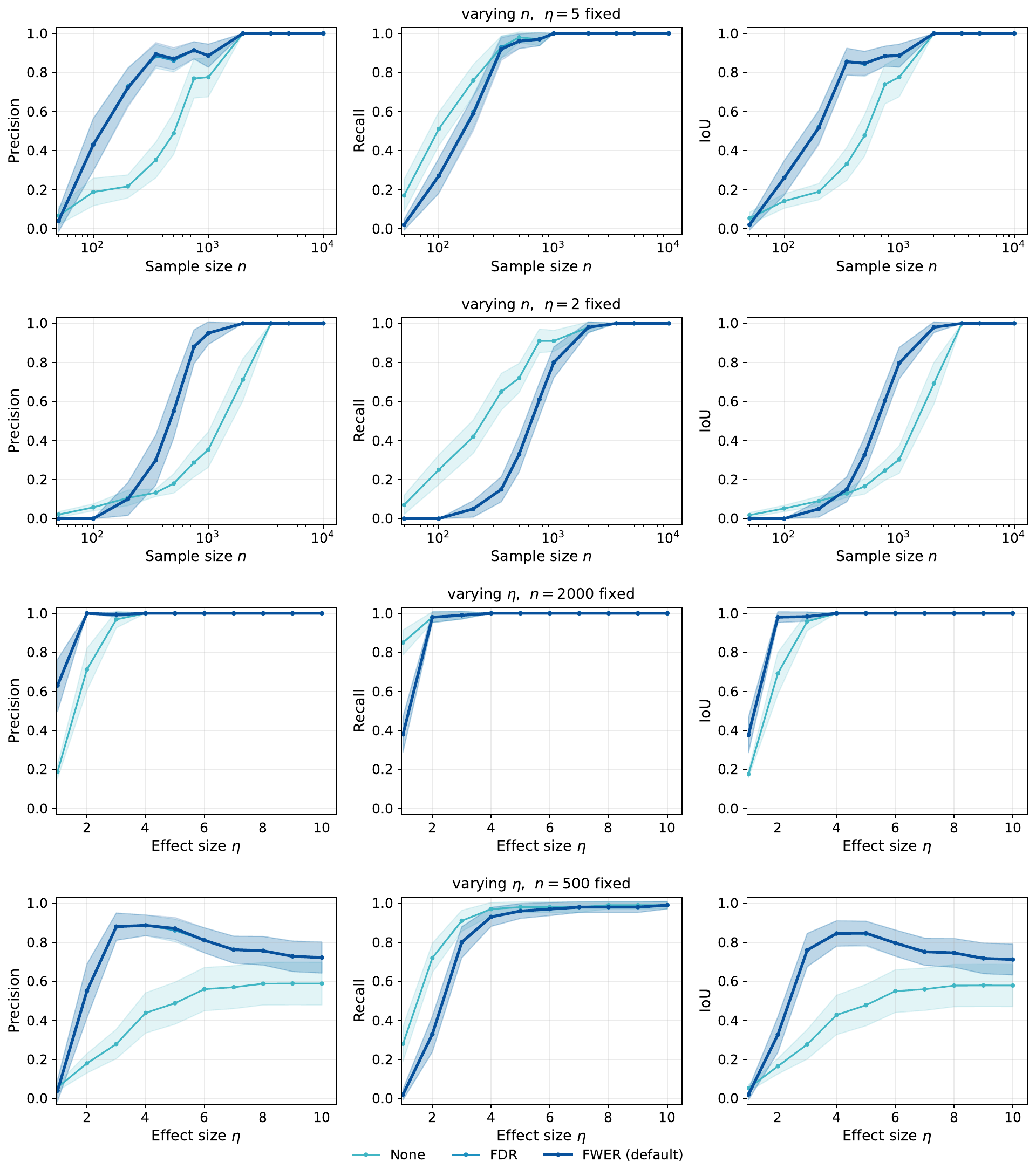}
  \caption{\small \textbf{Ablation: multiple-testing correction.} None, FDR (Benjamini--Hochberg), and FWER (Bonferroni, default), on the main setting. FWER and FDR are equivalent at this truth-set size; the unadjusted gate inflates false discoveries at large $\eta$. Reference: Figure~\ref{fig:celeba:dgp}.}
  \label{fig:celeba:adjust}
\end{figure}

\paragraph{Spectral gap $\rho$.}
Figure~\ref{fig:celeba:rho} sweeps $\rho \in \{0, 0.2, 0.5, 0.8\}$, where $\rho = 0$ disables the gap entirely (Equation~\ref{eq:rho_gate} of Appendix~\ref{sec:method-details:rho}). The lower-$\rho$ runs ($0$ and $0.2$) admit correlated companions of the principal coordinates as soon as the conditional gate is loose enough, hurting precision in the large-$\eta$ regime without any visible recall benefit. The higher-$\rho$ run ($0.8$) is conservative in the opposite direction: it blocks coordinates whose conditional $t$-statistic is comparable to that of an already-selected principal but slightly smaller, occasionally including the second principal coordinate itself, and only crosses recall $0.95$ at $\eta = 7$ in the $n = 500$ row. The default $\rho = 0.5$ is the best trade-off, consistent with the substantive reading of $\rho^{-1}$ as the maximum tolerated spread between true direct-effect magnitudes (Appendix~\ref{sec:method-details:rho}).

\begin{figure}[h]
  \centering
  \includegraphics[width=0.97\textwidth]{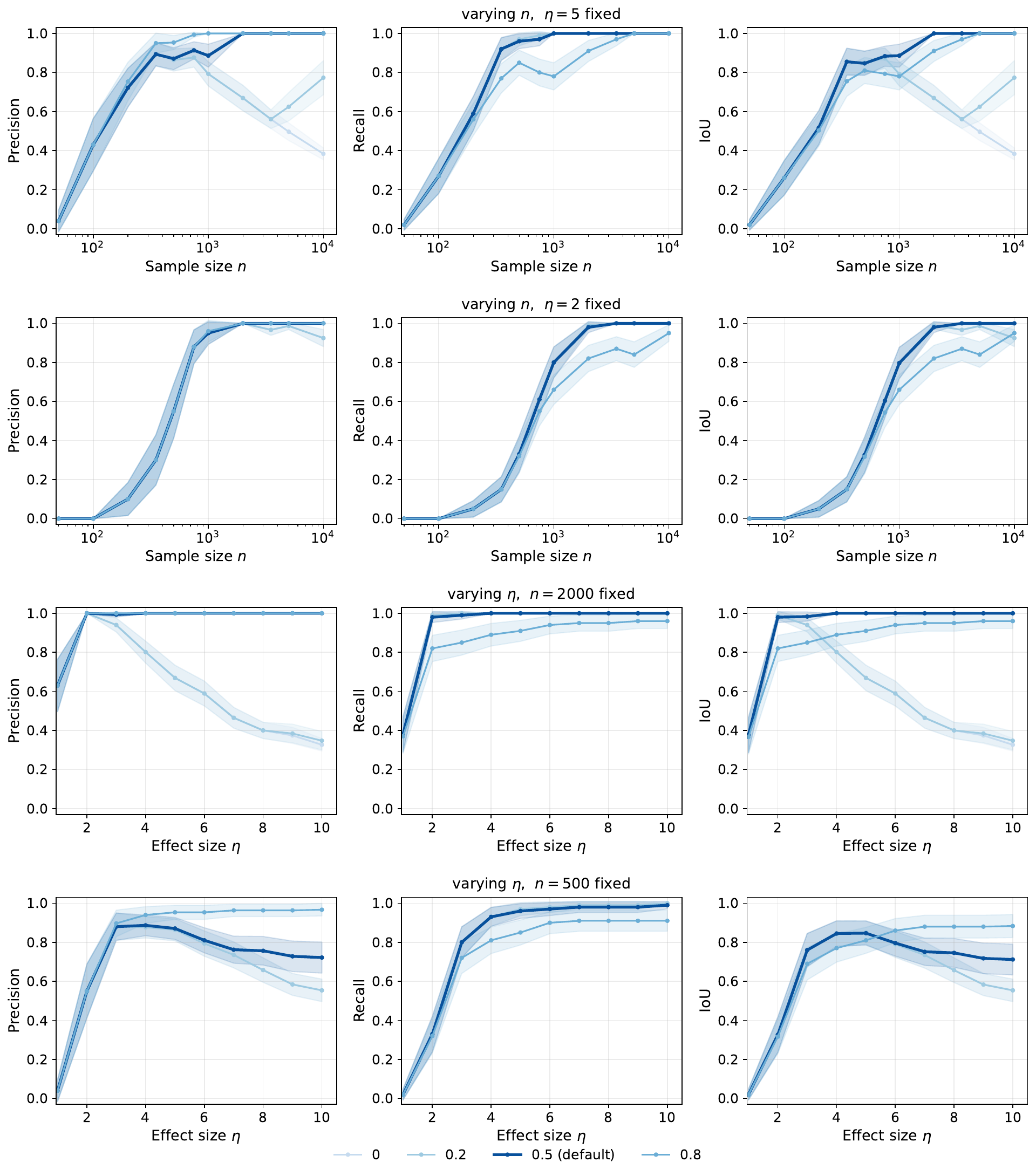}
  \caption{\small \textbf{Ablation: spectral gap $\rho$.} $\rho \in \{0, 0.2, 0.5, 0.8\}$, on the main setting. Low $\rho$ admits correlated companions and erodes precision; high $\rho$ blocks weaker principal coordinates and erodes recall. The default $\rho = 0.5$ jointly attains both. Reference: Figure~\ref{fig:celeba:dgp}.}
  \label{fig:celeba:rho}
\end{figure}

\paragraph{Backward elimination.}
Figure~\ref{fig:celeba:backward} compares the default forward--backward NEXIS to a forward-only variant. The two curves are visually indistinguishable across all four DGP conditions: with $|\mathcal{S}^\star| = 2$ and a near-orthogonal $\bm Z$, the forward pass already returns a (marginally) conditionally independent pair, and the backward gate has nothing to prune. We retain the backward step as the default for two reasons. First, it is the step that supplies the finite-sample precision bound in Theorem~\ref{thm:nexis_consistency} (see also Appendix~\ref{sec:proofs}, point (iii)): without it, a coordinate admitted under partial conditioning that becomes redundant only after later additions would never be re-tested. Second, its empirical contribution grows with $|\mathcal{S}^\star|$ and with dictionary entanglement; the present setting is on the easy end of both axes. The marginal computational cost is negligible.

\begin{figure}[h]
  \centering
  \includegraphics[width=0.97\textwidth]{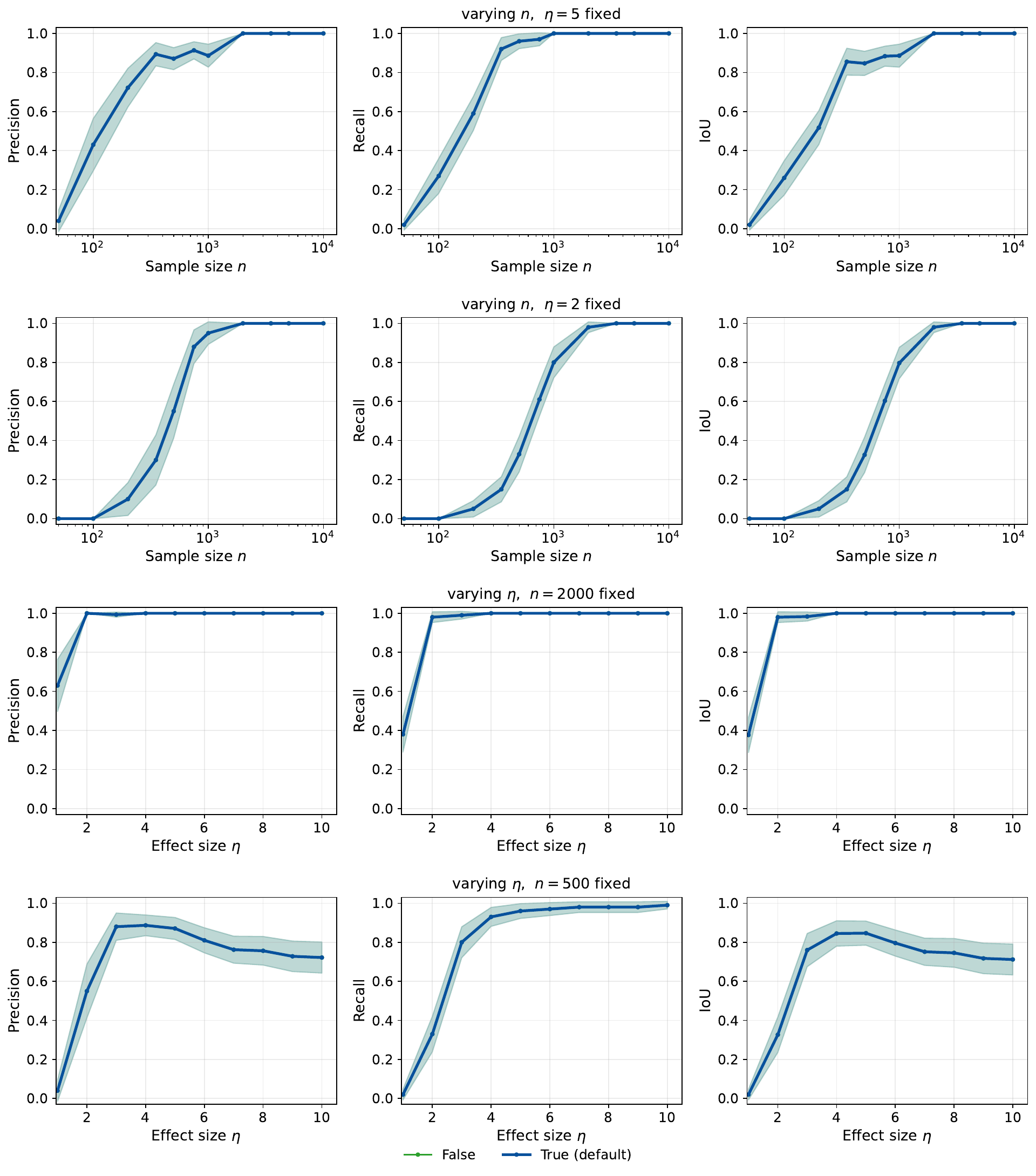}
  \caption{\small \textbf{Ablation: backward elimination.} Forward--backward (default) vs.\ forward-only, on the main setting. Empirically indistinguishable at $|\mathcal{S}^\star| = 2$ on a near-orthogonal $\bm Z$; retained as the default for the precision guarantee of Theorem~\ref{thm:nexis_consistency} and for robustness under larger or more entangled truth sets. Reference: Figure~\ref{fig:celeba:dgp}.}
  \label{fig:celeba:backward}
\end{figure}

\subsection{Practical guidance and summary}
\label{sec:semi-synt:guidance}

Aggregating across the ablations, the recommended default configuration is: $k = 20$ TopK sparse codes $\bm Z$, linear conditional test, FWER (Bonferroni) correction, $\rho = 0.5$, backward enabled. This is the configuration we run by default in the applied analyses (Appendix~\ref{sec:uganda}, Appendix~\ref{sec:ghana}), with two principled deviations:
\begin{itemize}[leftmargin=1.4em, itemsep=1pt, topsep=2pt]
    \item \emph{Switch to the doubly-robust GCM test} when the working linearity of $Y$ in $\bm Z$ cannot be assumed. The semi-synthetic results show this costs roughly a factor of $2$ in $n$ or $\eta$; the cost is a fair price for the absence of model misspecification on real data, and is the choice we make in both applied analyses.
    \item \emph{Lower $\rho$} only when the dictionary is verified to be near-orthogonal (e.g., on the SAE codes used here), where the spectral gap is not the binding constraint. \emph{Raise $\rho$} only when the analyst expects a small spread of true direct-effect magnitudes; in the absence of such prior, $\rho = 0.5$ is the safer choice.
\end{itemize}

A compact summary of the ablations is given in Table~\ref{tab:celeba_summary}.

\begin{table}[h]
\centering\small
\caption{\small Summary of CelebA ablations. Each row deviates from the main setting ($k{=}20$, $\bm Z$, FWER, $\rho{=}0.5$, linear test, backward on) along a single axis.}
\label{tab:celeba_summary}
\setlength{\tabcolsep}{4pt}
\begin{tabular}{@{}lll@{}}
\toprule
Ablation (figure)                                  & Varies                                & Finding \\
\midrule
SAE sparsity (Fig.~\ref{fig:celeba:k5})            & $k{=}5$ vs.\ $20$                     & Too sparse ($k=5$) leads to same recall, lower precision \\
Feature view (Fig.~\ref{fig:celeba:precode})       & $\bm Z_{\mathrm{pre}}$ vs.\ $\bm Z$    & $\bm Z_{\mathrm{pre}}$ needs more data \\
Conditional test (Fig.~\ref{fig:celeba:test})      & Linear vs.\ GCM                        & Linear dominates ($\sim\!2\times$ penalty for GCM) \\
MHT correction (Fig.~\ref{fig:celeba:adjust})      & None / FDR / FWER                     & FDR $\equiv$ FWER; unadjusted inflates FP \\
Spectral gap (Fig.~\ref{fig:celeba:rho})           & $\rho \in \{0, 0.2, 0.5, 0.8\}$       & $\rho{=}0.5$ jointly optimal \\
Backward step (Fig.~\ref{fig:celeba:backward})     & On vs.\ off                            & Neutral here; required for precision bound \\
\bottomrule
\end{tabular}
\end{table}

Across all ablations, no single design choice is catastrophic: NEXIS degrades gracefully along each axis, and consistently outperforms every marginal baseline on precision in every condition tested. This is the empirical counterpart of the asymmetric guarantee of Theorem~\ref{thm:nexis_consistency}: \textit{recall} is the easy direction; \textit{precision} is what the conditional, sequential, Bonferroni-gated structure of NEXIS buys.

\newpage
\subsection{Computational budget}
\label{sec:semi-synt:compute}

\begin{table}[h]
\centering\small
\caption{Compute budget for the CelebA semi-synthetic experiments.}
\label{tab:celeba_compute}
\begin{tabular}{@{}lll@{}}
\toprule
Component & Hardware & Runtime \\
\midrule
SigLIP embedding extraction (202{,}599 images, patch tokens cached) & H100 80\,GB & $\sim$2--3\,h \\
TopK SAE training ($k \in \{5, 20\}$, $m = 13{,}824$, per-patch tokens)        & H100 80\,GB & $\sim$8\,h per $k$ \\
Ground-truth $F_1$ computation over $202{,}599$ images $\times\ m$ neurons & CPU         & $\sim$30\,min \\
Full sweep (50 seeds $\times$ all $(\eta, n)$ pairs $\times$ all ablations) & CPU (multi-core) & $\sim$1--2\,d \\
\bottomrule
\end{tabular}
\end{table}

\newpage
\section{Application 1: Youth Opportunities Program (Uganda)}
\label{sec:uganda}
 
This appendix complements Section~\ref{sec:uganda_main} with the full pipeline, results, and limitations of the YOP case study.
 
\subsection{Trial design and outcomes}
 
\paragraph{Program.} The Youth Opportunities Program \citep{blattman2014generating} offered cash grants of approximately \$382 per group member, plus optional vocational training, to self-organized youth groups in Northern Uganda in the aftermath of the Lord's Resistance Army conflict. The program targeted young adults with limited economic opportunity.
 
\paragraph{Randomization and sample.} Treatment was randomized at the \emph{group level} (the unit of randomization; self-selected youth groups of $\sim$15--20 members), with groups clustered within geographic communities. Baseline data were collected pre-treatment; endline outcomes were measured 2--4 years later. The analytic sample consists of $2{,}082$ individuals across $439$ groups and $331$ distinct geographic communities, with a treatment rate of $39.6\%$ ($825$ treated). Sites span Karamoja, Teso, Lango, and West Nile sub-regions. Figure~\ref{fig:uganda_geo} shows the geographic distribution of communities by district and by language group.
 
\begin{figure}[h]
  \centering
  \begin{subfigure}[t]{0.495\textwidth}
    \centering
    \includegraphics[width=\textwidth]{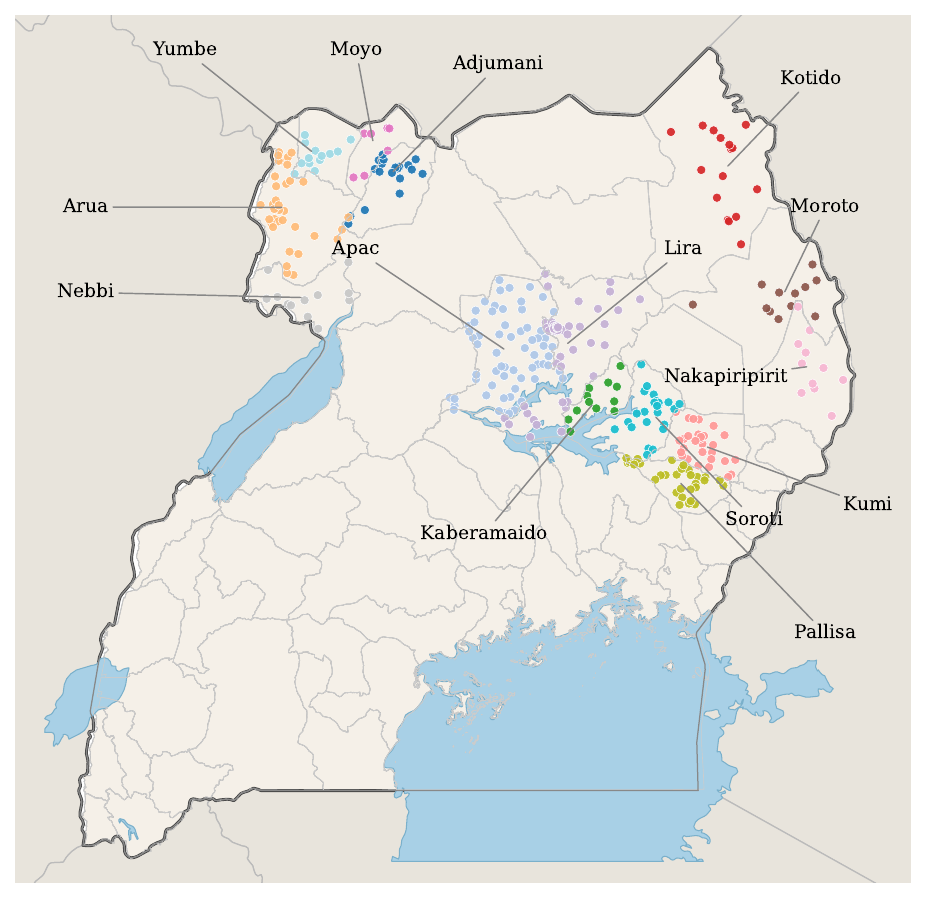}
    \caption{\small Communities by district.}
    \label{fig:uganda_geo:districts}
  \end{subfigure}\hfill
  \begin{subfigure}[t]{0.495\textwidth}
    \centering
    \includegraphics[width=\textwidth]{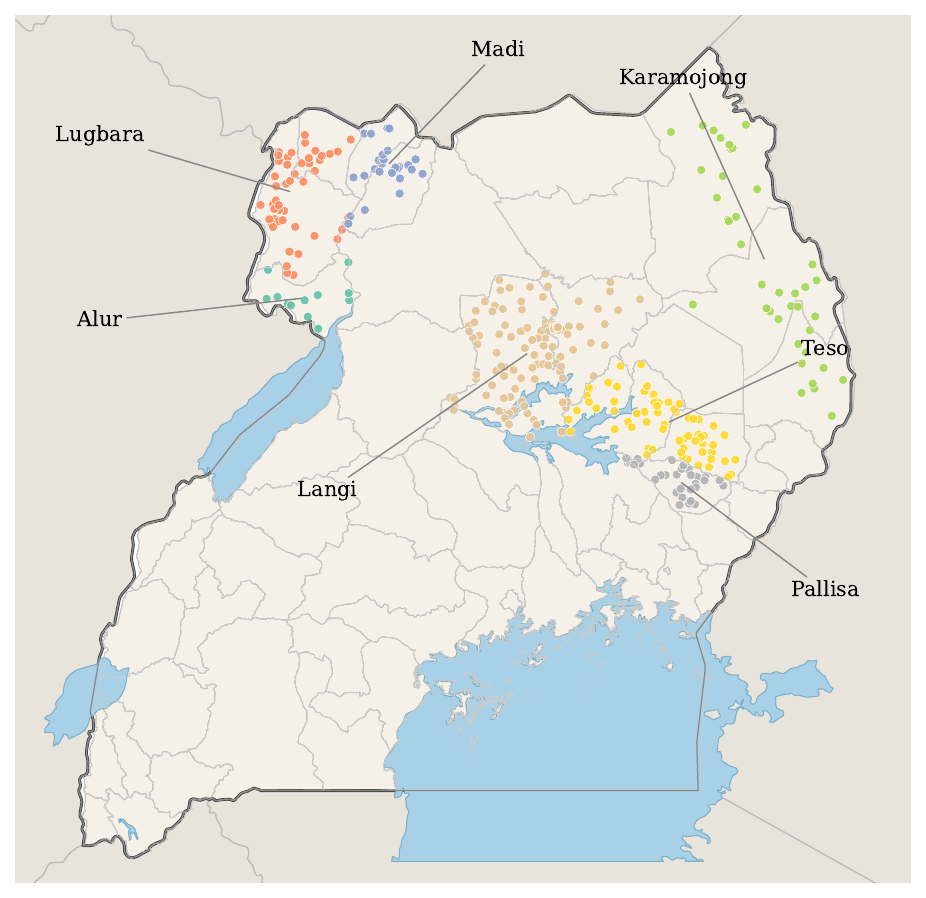}
    \caption{\small Communities by language group.}
    \label{fig:uganda_geo:languages}
  \end{subfigure}
  \caption{\small Geographic distribution of the $331$ YOP communities across Northern Uganda, by district and by primary language group.}
  \label{fig:uganda_geo}
\end{figure}
 
\paragraph{Outcomes.} We analyse two endline outcomes: \emph{skilled employment}, a binary indicator of any skilled trade engagement at endline; and \emph{log business assets}, the logged real business asset value at endline. Both capture the productive-capacity channel through which the program was expected to operate, the first via labour market participation and the second via capital accumulation.

\paragraph{Estimand and identification.}
Treatment was randomized at the group level, so the average treatment effect (ATE)
is identified without further assumptions beyond stable unit treatment values. The ATE, however, only summarizes for whether the program worked on average; NEXIS asks the complementary question of \emph{for whom} and \emph{under which environmental conditions} the effect varies. 
 
\paragraph{Data hierarchy and candidate pool structure.} Variables operate at three distinct levels of the data hierarchy. \emph{(i)} Individual level: outcomes, demographics (age, sex, parental education). \emph{(ii)} Group level: treatment $T$ (constant within group), group composition (share female members). \emph{(iii)} Community/site level: language group (7 categories: Alur, Langi, Lugbara, Madi, Teso, Karamojong, Pallisa\footnote{The language variable is inherited from \citet{blattman2014generating}
and codes communities into seven ethnolinguistic clusters based on the dominant
language group of each district. Group~7 (labelled here \emph{Pallisa}) departs
from this pattern: it comprises all communities in Pallisa district, which lies
outside the administrative Teso sub-region and contains a roughly even mix of
Iteso (Ateso-speaking, Eastern Nilotic) and Bagwere/Banyole (Lugwere/Lunyole-speaking,
Bantu) communities.}), the $112{\times}112$ Landsat-7 tile centred on the community centroid, hand-crafted spectral indices (NDVI, NDWI, MNDWI, NDBI, EVI, BSI; mean and std per tile $\rightarrow$ 12 scalars), and SAE neural features. All site-level features are constant within community; all individuals at the same site inherit the same satellite-derived features. We discuss the implications in Section~\ref{sec:uganda:limitations}.
 
\subsection{Satellite imagery pipeline}
\label{sec:uganda:imagery}
 
\paragraph{Google Earth Engine extraction.} We re-extract satellite imagery for all $331$ RCT sites directly from Google Earth Engine using \textbf{Landsat-7 ETM+} surface-reflectance imagery, on the following recipe:
\begin{itemize}[leftmargin=1.4em, itemsep=1pt, topsep=2pt]
    \item \emph{Time window.} 2005--2007 three-year cloud-free median composite. The YOP program was rolled out from $\sim$2008 onward, so this window is strictly pre-treatment and as close to the trial baseline as the Landsat-7 record over Northern Uganda allows without substantial cloud contamination.
    \item \emph{Spatial resolution.} 30\,m/pixel; tiles cropped to $112{\times}112$ pixels ($\sim$3.36\,km on a side) centred on each site's GPS centroid.
    \item \emph{Bands.} Blue (B1), Green (B2), Red (B3), NIR (B4), SWIR1 (B5), SWIR2 (B6).
    \item \emph{Visualisation for VLM.} False-colour composites NIR/Green/SWIR1, $2$--$98$ percentile stretch per band per tile.
\end{itemize}
 
\paragraph{Prithvi-EO embeddings.} We use \textbf{Prithvi-EO} (IBM/NASA geospatial foundation model), a Vision Transformer pretrained on global Landsat imagery. For each tile we extract the patch-level embedding from \textbf{layer 5} of the encoder ($768$-dimensional). This provides a rich, pretrained representation of land-cover and landscape structure without task-specific supervision.
 
\paragraph{Sparse autoencoder.} We train a \textbf{TopK SAE} \citep{gao2024scaling} on Prithvi-EO embeddings extracted from a \emph{Uganda national satellite grid} (full-country coverage, same Landsat-7 2005--2007 time window). The $331$ RCT sites are held out from SAE training; the national-grid corpus provides geographic diversity for learning a rich feature dictionary without data leakage. Whitening statistics (mean and std) are fit on the national corpus and applied to RCT-site embeddings at inference time.
\begin{itemize}[leftmargin=1.4em, itemsep=1pt, topsep=2pt]
    \item \emph{Architecture.} Encoder = linear $768 \rightarrow 1{,}024$ with bias; TopK activation with $k{=}25$; decoder = unit-norm column matrix $1{,}024 \rightarrow 768$ with bias.
    \item \emph{Training.} 2{,}000 epochs; batch size 256; learning rate $2{\times}10^{-4}$; 5-fold cross-validation on the national corpus.
\end{itemize}
 
\paragraph{Feature filtering.} Only SAE neurons active in at least $5$ of the $331$ RCT sites (activity threshold $Z_j > 0$) are retained. This yields $146$ active neurons out of $1{,}024$, forming the neural candidate set for NEXIS. The threshold prevents highly sparse neurons (active at 1--4 sites) from entering the regression, where they would have insufficient variation to reliably estimate an interaction effect.
 
\subsection{NEXIS instantiation}
\label{sec:uganda:nexis}
 
\paragraph{Representations.} Each unit representation unions $146$ SAE atoms with $24$ hand-crafted covariates: individual-level demographics, community-level language-group dummies, and spectral indices (NDVI, NDWI, MNDWI, NDBI, EVI, BSI; mean and standard deviation per tile, $12$ scalars) computed from the same Landsat tiles that feed the SAE. Total: $170$ candidates. Hand-crafted features compete with neural ones in the selection. Language-group dummies aggregate the 14 districts into 7 ethnolinguistic clusters, increasing statistical power per group (mean ${\approx}47$ communities per cluster vs.\ ${\approx}24$ per individual district). The clustering is inherited from \citet{blattman2014generating} and reflects the dominant language of each district; it is not a modelling choice made here. A sensitivity analysis replacing language-group dummies with individual district dummies recovers the Pallisa finding at the single-district level ($p \approx 7.7{\times}10^{-5}$), confirming it does not depend on aggregation. The Karamojong and Lugbara signals span multiple small districts and lose significance individually, so the language-group representation is load-bearing for those two discoveries: they are detectable only once districts are pooled into a cluster large enough to support reliable interaction estimation.

\paragraph{Search prioritisation.} Rather than search the full $170$-candidate pool in a single forward pass, we run NEXIS in two stages: a first stage restricted to the trial's demographic covariates, followed by a second stage opening the pool to the full set. Backward pruning operates against the pool active at each forward step, so any demographic admitted in stage one can still be removed in stage two if it becomes redundant given a stronger learned modifier. This ordering reflects the fact that demographics are conventionally inspected first in trial heterogeneity analysis, and ensures that the policy-relevant ``does the program treat groups fairly?'' question is answered by the same procedure that surfaces the environmental modifiers.
 
\paragraph{Test.} A continuous linear $T \times Z_j$ interaction $t$-test (Equation~\ref{eq:linear_test}), conditioning on the already-selected set $S$. We use the linear test rather than the doubly-robust GCM default of Appendix~\ref{sec:method-details:test} because the modest sample size ($n{=}2{,}082$) makes the cross-fitted nuisance estimation unstable on a $170$-feature pool, and the discovered modifiers are recovered consistently under both tests in semi-synthetic ablations (Appendix~\ref{sec:semi-synt}).
 
\paragraph{Correction.} We control the family-wise error rate (FWER) by Bonferroni at level $\alpha = 0.05$ at each step: a candidate $j$ enters the selection at the forward step if $p_j(S) \le \alpha / |\bar S|$, and is pruned at the backward step if $p_j(S \setminus \{j\}) > \alpha / |S|$, applying the same Bonferroni form to the size of the relevant set in each direction. A spectral-gap stopping rule ($\rho = 0.5$) prevents selecting features whose conditional $t$-statistic is less than half that of the weakest already-selected feature. Results are stable under FDR control: the selected set on skilled employment is identical, and on log business assets FDR retains the same two modifiers plus four further candidates (more exploratory) we omit here to keep the discussion focused.
 
\paragraph{Standard errors.} Standard homoskedastic OLS, no clustering; see Section~\ref{sec:uganda:limitations} for limitations discussion.
 
\subsection{Discovered modifiers}
\label{sec:uganda:results}
 
Table~\ref{tab:nexis_appendix} reports the modifiers retained by NEXIS, with subgroup GATEs (active vs.\ inactive), contrast $\Delta$, and marginal $p$-values for the unconditional $T \times Z_j$ interaction. The marginal $p$-values characterise the raw signal of each modifier and are comparable across features, but they are distinct from the conditional gate that NEXIS uses for FWER control. The original analysis of \citet{blattman2014generating} reports significant positive average effects on both outcomes. Our sample-weighted average across NEXIS subgroups also yields positive estimates ($\approx +0.31$ for skilled employment; $\approx +0.61$ for log business assets), replicating the headline program-level finding.
 
\begin{table}[h]
  \centering\small
  \caption{Modifiers retained by NEXIS on YOP. GATE estimates for active vs.\ inactive subgroups (s.e.\ in parentheses), contrast $\Delta$, and \emph{marginal} continuous-linear interaction $p$-value. Satellite-atom modifiers are reported under their VLM-assigned semantic label; activations are binarized at $z>0$ for those, and at the sample median for NDVI.}
  \label{tab:nexis_appendix}
  \begin{tabular}{@{}l ccc c@{}}
  \toprule
  & \multicolumn{2}{c}{GATE} & & \\
  \cmidrule(lr){2-3}
  Modifier & Active & Inactive & $\Delta$ & $p-\text{value}$ \\
  \midrule
  \multicolumn{5}{@{}l}{\textit{Panel A: Skilled Employment}} \\[2pt]
  \multicolumn{5}{@{}l}{\quad\textit{Language group}} \\
    Karamojong & $-0.030$ (0.060) & $+0.372$ (0.022) & $-0.403$ (0.063) & $7.7{\times}10^{-10}$ \\
    Lugbara    & $+0.092$ (0.061) & $+0.347$ (0.023) & $-0.255$ (0.065) & $1.6{\times}10^{-5}$ \\
    Pallisa      & $+0.674$ (0.058) & $+0.288$ (0.022) & $+0.386$ (0.062) & $6.3{\times}10^{-8}$ \\

  \multicolumn{5}{@{}l}{\quad\textit{Satellite atoms}} \\
    Perennial river presence       & $+0.089$ (0.098) & $+0.330$ (0.021) & $-0.242$ (0.100) & $2.1{\times}10^{-4}$ \\
    Vegetation spatial heterogeneity & $+0.214$ (0.038) & $+0.373$ (0.025) & $-0.159$ (0.045) & $6.7{\times}10^{-5}$ \\
  \midrule
  \multicolumn{5}{@{}l}{\textit{Panel B: Log Business Assets}} \\[2pt]
    NDVI                            & $+0.668$ (0.061) & $+0.552$ (0.068) & $+0.115$ (0.092) & $5.6{\times}10^{-5}$ \\
    Structured agricultural landscape & $+0.368$ (0.094) & $+0.649$ (0.051) & $-0.282$ (0.107) & $1.7{\times}10^{-2}$ \\
  \bottomrule
  \end{tabular}
\end{table}
 
\paragraph{Results interpretation.}
Three observations are worth surfacing alongside the discovered modifiers. First, none of the individual-level demographics (age, sex, parental education) survive the selection on either outcome, despite running first in the prioritised search; this is a substantive finding in its own right, indicating that the program's impact does not differ along these axes and is in that sense fair across the demographic groups represented in the trial. 
Second, the ethnolinguistic-group discoveries are geographically and historically coherent: these groups aggregate districts by dominant language but broadly index distinct regional geographies, and the pattern is historically coherent. The two dampening groups (Karamojong, Lugbara) sit at distinct national peripheries in the northeast and the northwest respectively, and were both experiencing active conflict at trial time (baseline 2008, endline 2010--2012)---Karamoja under a government disarmament campaign through 2011, West Nile historically isolated from national infrastructure, underserved by post-LRA reconstruction flows, and further constrained by its position at the DRC and South Sudan border limiting cross-border trade integration---while the amplifying group (Pallisa) lies outside the northern conflict core in eastern Uganda, with a mixed Iteso--Bagwere farming economy and shorter supply chains to central markets. The pattern suggests that post-conflict market recovery is a binding constraint on how much of the grant compounds into durable skilled employment, a moderator invisible to survey demographics alone.
Third, on the formal status of all retained modifiers: under the assumptions of Theorem~\ref{thm:nexis_consistency} (Measurement Sufficiency, Principal Alignment, faithfulness, and validity of the CATE equivalence test), they are direct effect modifiers, and the substantive interpretations offered in the main text would carry the prescriptive force we describe. In practice, those assumptions are stated rather than verified: we do not have ground truth on the true direct-modifier set for YOP, and entanglement on real learned dictionaries is non-negligible. The discovered modifiers should therefore be read as the strongest candidate hypotheses our pipeline can produce on this dataset, intended as inspirational starting points for the next iteration of the program rather than as established causal claims, and taken with the appropriate grain of salt.

\subsection{Marginal screening vs.\ NEXIS}
\label{sec:uganda:marginal}
 
To quantify the experimental power paradox of Section~\ref{sec:repr_learning} on this dataset, we compare NEXIS to a pure marginal screen: each of the $170$ candidates tested individually for a $T \times Z_j$ interaction, without conditioning. Table~\ref{tab:uganda_marginal} reports the result.
 
\begin{table}[h]
  \centering\small
  \caption{Marginal screening vs.\ NEXIS on YOP. ``Marginal'' counts features for which the unconditional $T \times Z_j$ interaction is significant.}
  \label{tab:uganda_marginal}
  \begin{tabular}{@{}lcc@{}}
    \toprule
    Outcome & Marginal & NEXIS \\
    \midrule
    skilled employment & $71$ features & $5$ features \\
    log business assets & $45$ features & $2$ features \\
    \bottomrule
  \end{tabular}
\end{table}
 
Marginal testing on $170$ tests yields a small handful of expected false positives under the global null; the observed $45$--$71$ ``discoveries'' are far above that threshold and dominated by confounding between correlated SAE neurons (a dense cluster of neurons active in overlapping sites all surface as marginally significant). NEXIS conditions each forward step on the features already selected, which eliminates the downstream significance of redundant features. The $5{+}2$ NEXIS modifiers are a parsimonious set of conditionally independent direct modifiers; this is the setup for which Theorem~\ref{thm:nexis_consistency} provides finite-sample precision and asymptotic recall.
 
\subsection{VLM interpretation protocol}
\label{sec:uganda:vlm}
 
\paragraph{Goal.} Assign a human-readable semantic label to each NEXIS-discovered SAE neuron by inspecting the satellite tiles that most strongly activate it.
 
\paragraph{Model.} Qwen2.5-VL-72B-Instruct (4-bit quantised, single H100 80\,GB GPU).
 
\paragraph{Protocol (direct contrast).} For each retained neuron $j$:
\begin{enumerate}[label=(\roman*), leftmargin=1.6em, itemsep=1pt, topsep=2pt]
    \item Rank the $331$ RCT sites by activation $Z_j$.
    \item Collect the top-$12$ (highest activation) and bottom-$12$ (zero or near-zero) tiles.
    \item Present both sets side by side to the VLM with the prompt:
    \emph{``These are pairs of satellite images from Uganda (Landsat-7, 2005--2007). The left column shows sites where a learned visual feature is strongly active; the right column shows sites where it is inactive. Describe in one short phrase what landscape or environmental property distinguishes the active from the inactive sites.''}
    \item The VLM response is post-processed into a concise label.
\end{enumerate}
 
\paragraph{Resulting labels.}
\begin{itemize}[leftmargin=1.4em, itemsep=1pt, topsep=2pt]
    \item \textbf{Neuron 339} $\rightarrow$ \emph{perennial river presence} (sites along permanent watercourses).
    \item \textbf{Neuron 533} $\rightarrow$ \emph{vegetation spatial heterogeneity} (mosaic of agricultural patches and bush).
    \item \textbf{Neuron 820} $\rightarrow$ \emph{structured agricultural landscape} (regular field grid, mechanised-scale agriculture).
\end{itemize}
The displayed activation grids in Figure~\ref{fig:uganda_neurons} render the same top-/bottom-3 subset that drove the VLM judgement, plus the map of communities where each neuron is active.
 
\subsection{Limitations}
\label{sec:uganda:limitations}
 
We flag three limitations of the present analysis.
 
\paragraph{No multilevel correction.}
The candidate pool combines variables defined at three different levels of nesting:
outcomes and demographics are individual-level, treatment and group composition are
group-level, and language-group dummies, spectral indices, and satellite atoms are
community-level. We use homoskedastic OLS standard errors rather than
cluster-robust ones: clustering at the group level ignores the community-level
dependence induced by site-constant satellite features, while a community-level
cluster bootstrap is degenerate for those same features (all within-community
observations share the same regressor value). Neither standard clustered approach
is well-suited to a candidate pool that spans all three levels simultaneously. The interaction tests we report treat all candidates symmetrically at the unit (individual) level, which implicitly upweights communities with more sampled individuals and ignores the nesting of groups within communities. The principled fix is to make the CATE equivalence test itself multilevel, with cluster-robust standard errors and effective sample sizes that respect the three-level nesting; this is a clean extension hook for NEXIS, since the procedure is parameterised by a generic conditional independence test (Section~\ref{sec:nexis}) and any valid $p$-value-returning test can be plugged in. Doing it well, however, is non-trivial in the present setting: a single clustering choice does not serve all three levels (a community-level bootstrap is degenerate for the satellite-derived and language-group features that are constant within community, while clustering only at the group level ignores the community-level dependence those same features induce). We leave the design of an appropriate multilevel test to future work.

\paragraph{Community-level environmental exploration.}
Because every individual at a given community shares the same satellite tile, the satellite-derived features cannot capture within-community heterogeneity in environmental exposure (e.g., proximity to a river within a community). The $112{\times}112$ tile is centred on the community centroid, $\sim$3.36\,km on a side; finer-grained, individual-level GPS would allow a more precise read of the modifier. We see this as a measurement gap to close in next-generation deployments rather than a methodological issue with NEXIS.
 
\paragraph{Linear interaction test.}
The linear $T \times Z_j$ test we use here is consistent only against alternatives in which heterogeneity is linear in $Z_j$ given the conditioning set; threshold or U-shape interactions in a continuous candidate could in principle pass undetected. The bite is mild on the YOP candidate pool: the language-group dummies, sex, and the binarized demographic indicators are by construction immune (linearity is fully general for binary covariates), and the SAE atoms enter the regression as TopK-sparse activations that are zero on most sites and clustered near a small set of values when active, making a linear-in-$Z_j$ alternative a reasonable working approximation. The continuous spectral indices (NDVI and the rest) and the share-female group composition do carry the assumption non-trivially. The doubly-robust GCM test of Appendix~\ref{sec:method-details:test} is a drop-in replacement consistent against any conditional-mean alternative; we use the linear test here for stability at the present sample size and report consistency between the two on semi-synthetic ablations.

\subsection{Computational budget}
\label{sec:uganda:compute}
 
\begin{table}[h]
\centering\small
\caption{Compute budget for the YOP analysis.}
\label{tab:uganda_compute}
\begin{tabular}{@{}lll@{}}
\toprule
Component & Hardware & Runtime \\
\midrule
GEE imagery extraction (RCT $+$ national grid) & cloud CPU & $\sim$1--2\,h \\
Prithvi-EO embedding extraction               & RTX 2080 Ti & $\sim$30\,min \\
SAE training (1{,}024 hidden, 2{,}000 epochs) & RTX 2080 Ti & $\sim$1\,h \\
NEXIS analysis (both outcomes, 170 candidates)& CPU         & $<$\,5\,min \\
VLM interpretation (3 neurons, top/bottom 12) & H100 80\,GB & $\sim$30\,min \\
\bottomrule
\end{tabular}
\end{table}

\newpage

\section{Application 2: LEAP 1000 Programme (Ghana)}
\label{sec:ghana}
 
\subsection{Study design and identification strategy}
\label{sec:ghana:design}

\paragraph{Program.}
The Ghana Livelihood Empowerment Against Poverty 1000 (LEAP 1000) is implemented by Ghana's Department of Social Welfare. It provides bimonthly cash transfers to extremely poor households with children aged 0--1, with a short- to medium-term objective to reduce poverty and improve household welfare and nutrition among extremely poor households with newborns. Its evaluation was conducted by UNICEF Innocenti and ISSER (University of Ghana), in collaboration with the Carolina Population Center (University of North Carolina at Chapel Hill) and the Navrongo Health Research Centre.

\paragraph{Study design.}
The evaluation covers $2{,}331$ households observed at two waves: baseline 2015 and endline 2017 (balanced panel). Geographic coverage spans $162$ communities with GPS centroids across five districts (East Mamprusi, Karaga, Yendi, Bongo, Garu-Tempane) in two regions (Northern and Upper East Ghana), illustrated in Figure~\ref{fig:ghana_districts}. Treatment and comparison groups are constructed via a regression discontinuity design: eligibility is determined by a proxy means test (PMT) score cutoff, with households just below and just above the threshold forming the comparison and treatment groups respectively \citep{GhanaLEAP1000EvaluationTeam2018}. Consistently, $154$ of $162$ communities contain both treated and comparison households, with a median within-community treatment share of $\approx 50\%$. 

\begin{figure}[h!]
  \centering
  \includegraphics[width=0.6\textwidth]{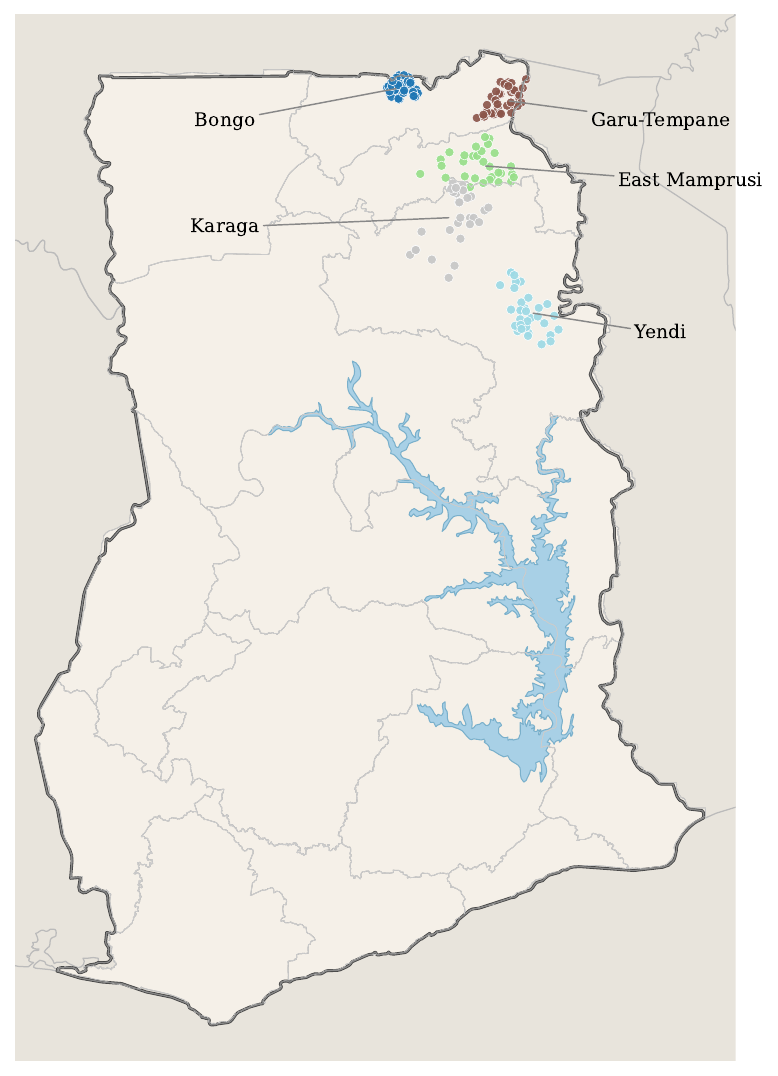}
  \caption{\small Geographic distribution of the $162$ LEAP 1000 communities
  across the five evaluation districts in Northern and Upper East Ghana.}
  \label{fig:ghana_districts}
\end{figure}

\paragraph{Sample.}
Of the $2{,}331$ households, $1{,}185$ are treated ($50.8\%$) and $1{,}146$ are
comparison. Baseline adult-equivalent expenditure: mean $120.9$~GH\textcent/month;
treated arm $117.8$, comparison $124.0$ --- near-perfect balance across all 24
pre-treatment covariates. These include household composition, head of household
characteristics, housing and WASH conditions, and livelihood indicators; derived
aggregate indices (e.g., housing deprivation, livelihood diversity) are also
constructed and included.

\paragraph{Outcome.}
Adult-equivalent household consumption expenditure per month, deflated to constant
Greater Accra August-2017 prices. This is the primary welfare measure in the
evaluation and the quantity most directly targeted by the cash transfer.

\paragraph{Estimand: local average treatment effect on the treated.}
The regression discontinuity design identifies the average treatment effect on the
treated for households in the neighborhood of the PMT eligibility cutoff, i.e.,
the local average treatment effect on the treated (local ATT) for the marginal
eligible population. Near-complete compliance means the intent-to-treat effect
equals the ATT within this local population, and the balanced-panel restriction
removes any role for selective attrition.

\paragraph{Identification: difference-in-differences.}
Let $\text{wave}_t$ be an indicator equal to $1$ at endline and $0$ at baseline.
With two waves we estimate:
\[
Y_{t} = \alpha + \beta_T T + \beta_{\text{wave}} \cdot \text{wave}_t
+ \delta \cdot (T \times \text{wave}_t) + \varepsilon_{t},
\]
where $\delta$ identifies the difference-in-differences (DiD) local ATT under
parallel trends. The assumption is supported by: (i) excellent baseline balance
across all 24 pre-treatment covariates, making a differential trend implausible;
(ii) the near-random within-community treatment assignment induced by PMT-score
proximity, which further reduces systematic differences between arms; (iii) the
absence of anticipation effects, as 2015 measurements were collected before
transfers had begun. The DiD estimate is $\hat\delta = +7.35$~GH\textcent/month. Both arms
show a nominal decline from 2015 to 2017 because 2017 values are expressed in
constant August-2017 prices while 2015 nominal expenditures were higher in current
prices; the DiD difference of $+7.35$~GH\textcent/month is the real causal
estimate.

\subsection{NEXIS under quasi-experimental identification}
\label{sec:ghana:quasi}

LEAP 1000 is a quasi-experiment: treatment assignment is not randomized but governed by a PMT score threshold, and causal identification of the average effect relies on two assumptions, local as-if-randomization near the cutoff and parallel trends across waves. Theorem~\ref{thm:nexis_consistency} is stated for a randomized experiment; we establish here that its two guarantees extend to this setting under natural analogues of those assumptions, each addressing a distinct step in the proof.

\paragraph{Selection consistency under conditional parallel trends.}
NEXIS is applied to the first-differenced outcome $\Delta Y := Y_{\mathrm{post}} - Y_{\mathrm{pre}}$, using treatment $T$ as defined by threshold assignment. The selection consistency guarantee (Equation~\ref{eq:selection_consistency}) rests on Assumption~\ref{ass:test}: the effect heterogeneity-equivalence tests must be valid under the null and consistent under alternatives. Validity under the null requires that, given the pre-treatment representation $\bm{Z}$ and the current selection $\bm{Z}^S$, the first-differenced untreated potential outcome is mean-independent of $T$:
\begin{equation}
    \label{eq:cpt}
    \E[\Delta Y(0) \mid T, \bm{Z}] = \E[\Delta Y(0) \mid \bm{Z}] \quad \text{a.s.}
\end{equation}
This is the covariate-adjusted parallel trends condition \citep{callaway2021difference}: conditional on the pre-treatment representation, the untreated counterfactual trend carries no residual dependence on treatment assignment. Under Equation~\ref{eq:cpt}, the DiD residual is mean-zero given $\bm{Z}$ under the null, Assumption~\ref{ass:test}(a) is satisfied, and the full selection consistency proof of Section~\ref{sec:proofs:recovery} carries through without modification. Equation~\ref{eq:cpt} is supported here by the near-random within-community assignment induced by PMT-score proximity and by the strong baseline balance documented above.

\paragraph{Causal identification under local as-if-randomization.}
The upgrade from the conditional to the interventional characterization, $\tau(\bm{W}^{\mathrm{dir}}) = \tau^{do}(\bm{W}^{\mathrm{dir}})$ (Equation~\ref{eq:causal_id}), requires $T \indep \bm{W}^{\mathrm{dir}}$, which holds by construction in a randomized experiment but not globally under threshold assignment. Within a bandwidth around the PMT cutoff, however, units on either side are comparable in expectation for all variables varying continuously at the threshold, a condition known as local as-if-randomization \citep{lee2010regression}. If $\bm{W}^{\mathrm{dir}}$ varies continuously at the cutoff, plausible for environmental landscape features, which are geographically determined and structurally unrelated to household PMT scores, then $T \indep \bm{W}^{\mathrm{dir}}$ holds for the local subpopulation near the threshold, and the back-door argument of Section~\ref{sec:proofs:causal} carries through. The identified object is accordingly $\tau^{do}(\bm{W}^{\mathrm{dir}})$ for this local population, consistent with the local ATT estimand above.

Both assumptions are of the same epistemic character as Measurement Sufficiency and Principal Alignment: standard, precisely stateable, and empirically supportable but not formally testable from the data alone. Together they establish that the formal guarantees of Theorem~\ref{thm:nexis_consistency} extend to this quasi-experimental setting, with the scope of the heterogeneity characterization restricted to the marginal eligible population near the PMT cutoff.

\subsection{Satellite data and SAE pipeline}
\label{sec:ghana:pipeline}
 
\paragraph{Imagery extraction.}
Landsat-8 OLI imagery was extracted from Google Earth Engine for all 162 LEAP community centroids. We use a cloud-free median composite aligned to the 2015 baseline, at 30\,m/pixel resolution. Tiles are rendered as false-colour composites (NIR/Green/SWIR2, 2--98 percentile stretch per band) for VLM interpretation, and a set of spectral indices is derived from the same imagery. A Ghana national satellite grid extracted in the same time window serves as the SAE training corpus; the 162 LEAP sites are held out.
 
\paragraph{Prithvi-EO embeddings.}
Each tile is passed through Prithvi-EO (IBM/NASA geospatial Vision Transformer pretrained on global Landsat imagery). We extract patch-level embeddings from layer 5 of the encoder (768-dimensional). Whitening statistics are fit on the national corpus and applied to the LEAP embeddings at inference.
 
\paragraph{Sparse Autoencoder.}
A TopK SAE with 4,096 hidden dimensions ($k=25$) is trained for 2,000 epochs (batch size 256, learning rate $2\times10^{-4}$) on the national grid embeddings. The larger dictionary (4,096 vs.\ 1,024 for Uganda) reflects Ghana's more geographically diverse training corpus. Of 4,096 neurons, 131 are active in at least 5 of the 162 LEAP communities and enter the candidate pool; neurons active in fewer than 5 communities offer insufficient variation for interaction testing.
 
\subsection{NEXIS configuration}
\label{sec:ghana:nexis}

\paragraph{Representations.}
Each unit representation combines two types of features that operate at different levels
of the data hierarchy: $131$ learned landscape features from the SAE and $24$
household-level survey covariates, for a total of $155$ candidates. The satellite
features are community-level constants, every household within a community
shares the same tile, while survey covariates vary at the household level.
NEXIS treats both types symmetrically as candidate effect modifiers in the
selection procedure. Standard errors for the interaction tests are clustered by
community ($G = 162$), which is the appropriate grouping unit for satellite
features: treating within-community households as independent observations would
severely understate variability for community-level regressors. We note that
community is a geographic grouping variable rather than the true
threshold-assignment unit; the implications are discussed under limitations.

\paragraph{Search prioritisation.}
Following the same staged approach as the Uganda application, NEXIS first runs a
preliminary phase restricted to the $24$ survey covariates, then opens the full
pool (survey covariates and SAE features jointly) for the main phase. Features
selected in the first stage seed the initial conditioning set for the second, but
compete symmetrically with satellite features thereafter: the backward step can
expel a survey covariate if it becomes redundant given a stronger environmental
modifier.

\paragraph{Interaction test.}
Linear $T \times Z_j$ interaction test conditional on the already-selected set,
with cluster-robust (CR1S) standard errors clustered by community ($G = 162$) and
$t$-statistics referred to a $t_{G-1}$ distribution. 

\paragraph{Correction.} We control the family-wise error rate (FWER) by Bonferroni at level $\alpha = 0.05$ at each step: a candidate $j$ enters the selection at the forward step if $p_j(S) \le \alpha / |\bar S|$, and is pruned at the backward step if $p_j(S \setminus \{j\}) > \alpha / |S|$, applying the same Bonferroni form to the size of the relevant set in each direction. No spectral-gap stopping rule ($\rho = +\infty$). We also considered relaxing the control to FDR for more exploratory analysis, which we report in
Section~\ref{sec:ghana:exploratory}.

\subsection{Discovered effect modifiers}
\label{sec:ghana:results}
 
NEXIS retains two satellite-derived landscape features and no household covariates (Table~\ref{tab:ghana_nexis}). Both carry high-confidence VLM semantic labels. The overall program ATE is $+7.35$~GH\textcent/month; in the small communities where these features are present, the estimated effect is $6$--$8\times$ larger.
 
\begin{table}[h]
\centering\small
\caption{
NEXIS discoveries for LEAP 1000. GATE estimates (in GH\textcent/month) for active vs.\ inactive subgroups (s.e.\ in parentheses), contrast $\Delta$, and \emph{marginal} continuous-linear interaction $p$-value. Satellite-atom modifiers are reported under their VLM-assigned semantic label; activations are binarized at $z>0$.}
\label{tab:ghana_nexis}
\begin{tabular}{@{}lrrrr@{}}
\toprule
Modifier & GATE (active) & GATE (inactive) & $\Delta$ & $p$-value \\
\midrule
Ephemeral waterways  & $+42.9\ (14.9)$ & $+6.0\ (1.0)$ & $+36.9$ & $2.1\times10^{-8}$ \\
Closed-canopy forest & $+56.2\ (20.8)$ & $+6.4\ (1.0)$ & $+49.8$ & $3.7\times10^{-7}$ \\
\bottomrule
\end{tabular}
\end{table}
 
\paragraph{Ephemeral waterways.}
The VLM describes this feature (confidence: high) as capturing narrow seasonal streams and wetland corridors with adjacent riparian vegetation; inactive tiles show uniform land cover with no visible watercourses. It is active in 6 communities (83 households).
 
Mechanism hypothesis: seasonal water access provides a complementary input that cash transfers alone cannot supply. In Northern Ghana's predominantly rainfed smallholder system, proximity to an ephemeral stream enables micro-irrigation of adjacent plots. A LEAP transfer in this context can be invested in seeds, fertiliser, or simple tools that water-adjacent households can productively deploy; the same transfer in a waterless community generates only short-run consumption smoothing with no durable agricultural returns. The ephemeral character of the waterways is relevant: these are seasonal flood corridors active during the wet season, and the bimonthly transfer timing means at least one payment arrives during the agricultural season. By removing acute liquidity constraints, the cash transfer enables households to capitalise on this pre-existing environmental asset for dry-season production.
 
The temporal analysis reinforces this reading. Presenting paired 2015/2017 Landsat-8 composites of the six active communities to the VLM, we find that waterway structure is unchanged between years, confirming the feature's stability as a pre-treatment modifier, while agricultural land use changed detectably in three communities (Table~\ref{tab:ghana_temporal}): the 2017 images show expansion of bare/tan cropland and denser vegetation adjacent to waterways, consistent with LEAP-enabled intensification of smallholder cultivation near seasonal water.
 
\begin{table}[h]
\centering\small
\caption{Per-community VLM temporal analysis for the six waterway-active LEAP communities. Changes are between 2015 (baseline) and 2017 (endline) composites.}
\label{tab:ghana_temporal}
\begin{tabular}{@{}lll@{}}
\toprule
Community & Cropland change & Vegetation change \\
\midrule
951  & $\uparrow$ expansion & $\uparrow$ denser riparian vegetation \\
1265 & $\uparrow$ expansion & $\uparrow$ denser adjacent to waterway \\
624  & $\uparrow$ expansion & $\uparrow$ denser adjacent to waterway \\
675  & no change           & $\uparrow$ increased biomass \\
311  & no change           & $\uparrow$ increased biomass \\
1613 & no change           & no change \\
\bottomrule
\end{tabular}
\end{table}
 
\paragraph{Closed-canopy forest.}
VLM label (confidence: high): dense, continuous forest canopy with minimal breaks; inactive tiles show fragmented vegetation with open spaces. Active in 5 communities (42 households).
 
Mechanism hypothesis: closed-canopy forest patches are rare endowments in this predominantly savannah landscape. Households adjacent to forest access non-timber forest products, fuelwood, and forest-edge cultivation, and benefit from ecological services (soil moisture retention, microclimate buffering) absent in open degraded savannah. A transfer in these communities complements existing forest-based livelihood strategies, enabling production intensification in ways not feasible elsewhere. Because these forest resources provide a natural safety net for basic consumption (e.g., foraging), the cash transfer can be diverted away from emergency food purchases and directed toward higher-value dietary diversity or productive assets.  The anomalously large GATE may additionally reflect a selection effect: forest-proximate communities may be better positioned to translate additional income into sustained consumption gains.
 
\paragraph{No household-covariate discoveries.}
No survey covariate survives FWER or FDR correction, and no demographic feature
shows a GATE contrast exceeding $\pm10$~GH\textcent/month. In the unadjusted run,
\emph{farming household} is admitted in the preliminary survey-covariate phase
($p = 0.017$), but is subsequently eliminated by the backward step once the two
satellite modifiers enter the conditioning set. This is a direct illustration of
the proxy-detection mechanism discussed in Section~\ref{sec:power_paradox}: farming
households are more prevalent in water-adjacent and forest-proximate communities,
so the covariate inherits marginal heterogeneity from the environmental modifiers
rather than carrying an independent direct interaction with the treatment. Once the
true environmental interactors are conditioned on, the farming-household signal
becomes redundant and is correctly expelled. The overall pattern is consistent with
LEAP 1000's targeting design: by restricting to the most deprived households in
northern Ghana, the program selects an unusually homogeneous population in terms of
household demographics, and the residual heterogeneity is environmental rather than
demographic.
 
\subsection{VLM interpretation protocol}
\label{sec:ghana:vlm}
 
Model: Qwen2.5-VL-72B-Instruct (4-bit quantised, single H100 80\,GB GPU). For each retained feature: (i) rank all communities in the Ghana national grid by activation value; (ii) collect the top-12 and bottom-12 satellite tiles; (iii) present both sets side by side with the prompt: \textit{``These are pairs of satellite images from Ghana (Landsat-8, 2015). The left column shows sites where a learned visual feature is strongly active; the right column shows sites where it is inactive. Describe in one short phrase what landscape or environmental property distinguishes the active from the inactive sites.''}; (iv) post-process into a concise label and record confidence.
 
\subsection{Exploratory analysis}
\label{sec:ghana:exploratory}
 
Relaxing FWER and running NEXIS without multiple-testing correction, additional patterns emerge that are not statistically sufficient to certify discoveries but are informative for hypothesis generation. The most substantive is a feature labelled by the VLM as \emph{sparse burn scar presence} — small irregular burn scars scattered across vegetation; confidence: medium — active in 12 communities geographically concentrated in the East Mamprusi and Karaga districts (Northern Region). The GATE contrast is $+26.3$~GH\textcent/month (active) vs.\ $+5.6$ (inactive); the conditional $p = 0.0085$ in the unadjusted run, but the marginal $p = 0.30$ is not individually significant, and the signal does not survive FDR correction. This feature appears in both the unadjusted and FDR runs as the first exploratory discovery, lending it some robustness across relaxed regimes.

\begin{figure}[h]
  \centering
  \includegraphics[width=\textwidth]{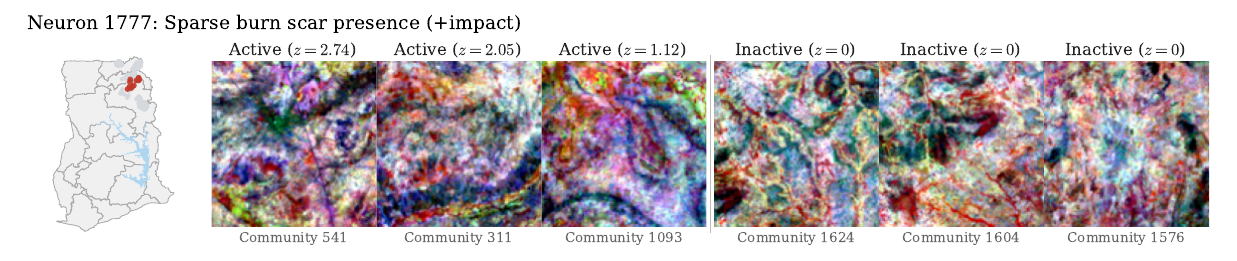}
  \caption{\small Top- and bottom-activating Landsat-8 tiles in false-color composite (NIR, Green, SWIR) for burn-scar, with the top 12 active communities concentrated in the East Mamprusi--Karaga cluster (see map). Active tiles show sparse irregular burn scars scattered across vegetation (VLM label: \emph{sparse burn scar presence}; confidence: medium); inactive tiles show uniform green or savannah cover with no visible burning.}
  \label{fig:ghana_neuron_1777}
\end{figure}
 
The burn-scar signal has a natural external anchor: the Ghana National Development Planning Commission's 2015 Annual Progress Report \citep{NDPC2016GSGDA2015APR} documents, under natural disasters and vulnerability (Section 4.4.10), that a large majority of districts were affected by natural disasters in 2015, specifically flooding and bush fires, with incidents concentrated in the Northern and Upper East Regions. The 2015 Landsat imagery for communities where this feature is active thus visually records the aftermath of documented fire events, in the same year the baseline was collected.
 
The mechanism hypothesis is that receiving a cash transfer in a community exposed to a recent fire shock amplifies program impact: fire events destroy subsistence assets and create acute liquidity needs, and a LEAP transfer in this context provides insurance-like relief that buffers the shock and translates into a larger net consumption gain, consistent with the broader literature on cash transfers as covariate-shock insurance. Crucially, this contextualises the treatment effect: a larger increase in expenditure in these specific communities likely reflects an emergency coping mechanism to replace destroyed physical assets or staple crops, rather than a welfare-improving investment. The geographical specificity of the signal (East Mamprusi--Karaga cluster) makes it too localized to establish general patterns. We flag this as a direction to examine in larger-scale implementations with greater diversity of active communities.
 
\subsection{Limitations}
\label{sec:ghana:limitations}
 

\paragraph{Local identification.}
The regression discontinuity design identifies treatment effects only in the
neighborhood of the PMT eligibility cutoff: the analytic sample consists of
households close to the eligibility threshold, who are the ``best off'' among the
eligible population. The endline evaluation report \citep{GhanaLEAP1000EvaluationTeam2018}
notes explicitly that RDD estimates are likely lower bounds relative to the effect
for the average eligible household, precisely because the RDD sample selects
marginally eligible households rather than the most deprived. The heterogeneity
characterization produced by NEXIS inherits this scope restriction: it describes
which environmental conditions amplify or dampen the program effect for households
near the threshold, not for the full LEAP-eligible population. This is the standard
scope of any RDD-based analysis and does not affect the internal validity of the
findings, but should be borne in mind when extrapolating to program targeting
decisions that affect households far from the cutoff.

\paragraph{Community-level environmental exploration.}
Because every household in a given community shares the same satellite tile centred
on the community centroid, the satellite-derived features cannot capture
within-community heterogeneity in environmental exposure, for example, which
households are closest to a waterway or a forest patch. Household-level GPS
coordinates would allow more precise modifier assignment. 

\paragraph{Linear interaction test.}
The linear $T \times Z_j$ test is consistent only against alternatives in which
heterogeneity is linear in $Z_j$ given the conditioning set; threshold or
non-monotone interactions in a continuous candidate could in principle pass
undetected. The satellite atoms, as TopK-sparse activations, take values in a
small discrete range when active, making the linear approximation reasonable. The
$24$ survey covariates include binary and count variables for which linearity is
exact. The continuous spectral indices (if included) carry the assumption
non-trivially. The doubly-robust GCM test of Appendix~\ref{sec:method-details:test}
is a drop-in replacement consistent against any conditional-mean alternative, and
is the recommended choice in settings with larger sample sizes.

\paragraph{Active-community counts.}
The discovered features are active in only 6 (83 households) and 5 (42 households) communities respectively. GATE estimates in these subgroups are robust to specification checks but rely on a small number of clusters; the magnitudes should be treated with appropriate caution pending larger evaluations.
 
\subsection{Computational budget}
\label{sec:ghana:compute}
 
\begin{table}[h]
\centering\small
\caption{Compute budget for the LEAP 1000 analysis.}
\label{tab:ghana_compute}
\begin{tabular}{@{}lll@{}}
\toprule
Component & Hardware & Runtime \\
\midrule
GEE imagery extraction (162 LEAP communities $+$ national Ghana grid) & cloud CPU   & $\sim$1--2\,h \\
Prithvi-EO embedding extraction (LEAP $+$ national)                   & H100 80\,GB & $\sim$30\,min \\
SAE training (4{,}096 hidden, 2{,}000 epochs)                         & H100 80\,GB & $\sim$2\,h \\
NEXIS analysis (155 candidates, FWER $+$ no-adj)                     & CPU         & $<$5\,min \\
VLM interpretation (2 neurons $\times$ top/bottom 12 images)          & H100 80\,GB & $\sim$30\,min \\
VLM temporal analysis (6 communities $\times$ 2 years)                & H100 80\,GB & $\sim$15\,min \\
\bottomrule
\end{tabular}
\end{table}
 